\theoremstyle{definition}
\newtheorem{theorem}{Theorem}[section]
\newtheorem{lemma}{Lemma}[section]
\newtheorem{definition}{Definition}[section]
 \def\section{\@startsection {section}{1}{\z@}{3.5ex plus -1ex minus -.2ex}{2.3 ex plus .2ex}{\bf}}
 \def\@seccntformat#1{\csname the#1\endcsname.\ }
 \def\subsection{\@startsection {subsection}{1}{\z@}{3.5ex plus -1ex minus -.2ex}{2.3 ex plus .2ex}{\bf}}
 \def\@seccntformat#1{\csname the#1\endcsname.\ }
\numberwithin{equation}{section} 
\newcommand{\f}{^\forall}
\newcommand{\argmin}{\operatornamewithlimits{argmin}}
\newcommand{\argmax}{\operatornamewithlimits{argmax}}
\newcommand{\1}{\mbox{1}\hspace{-0.25em}\mbox{l}}
\begin{document}
\begin{center}
{\Large Active learning for distributionally robust level-set estimation} \vspace{5mm} 

Yu Inatsu$^1$ \ \ \ \ Shogo Iwazaki$^1$ \ \ \ \ Ichiro Takeuchi$^{1,2,\ast}$ \vspace{3mm}   

$^1$ Department of Computer Science, Nagoya Institute of Technology    \\
$^2$ RIKEN Center for Advanced Intelligence Project \\
$^\ast$ E-mail: takeuchi.ichiro@nitech.ac.jp
\end{center}

\vspace{5mm} 

\begin{abstract}
Many cases exist in which a black-box function $f$ with high evaluation cost depends on two types of variables $\bm x$ and $\bm w$, where $\bm x$ is a controllable \emph{design} variable and $\bm w$ are uncontrollable \emph{environmental} variables that have random variation following a certain distribution $P$. 
In such cases, an important task is to find the range of design variables $\bm x$ such that the function $f(\bm x, \bm w)$ has the desired properties by incorporating the random variation of the environmental variables $\bm w$. 
A natural measure of robustness is the probability that $f(\bm x, \bm w)$ exceeds a given threshold $h$, which is known as the \emph{probability threshold robustness} (PTR) measure in the literature on robust optimization. 
However, this robustness measure cannot be correctly evaluated when the distribution $P$ is unknown. 
In this study, we addressed this problem by considering the \textit{distributionally robust PTR} (DRPTR) measure, which considers the worst-case PTR within given candidate distributions. 
Specifically, we studied the problem of efficiently identifying a reliable set $H$, which is defined as a region in which the DRPTR measure exceeds a certain desired probability $\alpha$, which can be interpreted as a level set estimation (LSE) problem for DRPTR. 
We propose a theoretically grounded and computationally efficient active learning method for this problem. 
We show that the proposed method has theoretical guarantees on convergence and accuracy, and confirmed through numerical experiments that the proposed method outperforms existing methods.
\end{abstract}

\section{Introduction}
In the manufacturing industry, product performance often depends on two types of variables: design variables and environmental variables.
The design variables are completely controllable, whereas environmental variables are random variables that change depending on the usage environment of the product.
When considering such a problem, it is important to identify the design variables that allow the product performance to exceed the desired requirement threshold with a sufficiently high degree of confidence, taking into account the randomness of the environmental variables.
In this setting, we must emphasize that there are two distinctly different phases of the product: the development phase and the use phase.
In the development phase, we have full control over the design variables and environmental variables.
In the use phase, on the other hand, the design variables are fixed, and the environmental variables change randomly and cannot be controlled.

Let $f({\bm x},{\bm w} )$ represent the performance of the product, and let $h \in \mathbb{R}$ be a desired performance threshold, 
where ${\bm x}$ is a design variable defined on $\mathcal{X}$, and ${\bm w} $ is  an  environmental variable defined on $\Omega$.  
Then, we consider the following robustness measure:
$$
\text{PTR} ({\bm x} ) = \int _{\Omega}  \1 [ f({\bm x},{\bm w} ) >h ] p^\dagger ({\bm w} )  \text{d} {\bm w} ,
$$
where $\1[ \cdot] $ is the indicator function and $p^\dagger ({\bm w} )$ is the probability density function of ${\bm w} $. 
This measure is called the probability threshold robustness (PTR) measure in the field of robust optimization \cite{beyer2007robust}, and can be interpreted as a measure of how well the design variables behave under randomness in the environmental variables.
In the manufacturing industry, it is desirable to identify the set of controllable variables $\bm x \in \mathcal{X}$ for which ${\rm PTR}(\bm x)$ is greater than a certain threshold. 
In other words, this problem is interpreted as a level-set estimation (LSE) \cite{bryan2006active,Gotovos:2013:ALL:2540128.2540322} of the PTR measure. 
There are two main reasons for considering LSE of the PTR measure.
One is that by enumerating all the design variables that exceed the desired threshold with a high probability, it is possible to respond the usage conditions of various users.
The other is to consider some optimization problem (e.g., to find ${\bm x}$ with the minimum price) for design variables with PTR measures above a certain level.
This is known as the chance-constrained programming problem \cite{charnes1959chance}, and has many applications such as finance, in addition to manufacturing industry.
Unfortunately, however, the PTR measure cannot be correctly evaluated when $p^\dagger ({\bm w})$ is unknown.  If $p^\dagger ({\bm w})$ is unknown and the estimated density is simply plugged in, then $\text{PTR} ({\bm x} ) $ is no longer valid as a robustness measure because of the estimation error.

In this study, we considered a distributionally robust PTR (DRPTR) measure, which includes uncertainty about $p^\dagger ({\bm w})$ under the setting that $p^\dagger ({\bm w})$ is unknown.
Let $\mathcal{A}$ be a user-specified class of candidate distributions of ${\bm w}$.
Then, the DRPTR measure can be defined as
$$
F ({\bm x} ) = \inf _{ p({\bm w} ) \in \mathcal{A}  } \int _{\Omega}  \1 [ f({\bm x},{\bm w} ) >h ] p ({\bm w} )  \text{d} {\bm w} .
$$
The DRPTR measure has the advantage of being robust with respect to using wrong distributions because it can be interpreted as the PTR in the worst case among the candidate distributions.
In this study, we formulated this problem as an active learning problem for the LSE for $F({\bm x})$ instead of ${\rm PTR} ({\bm x} )$, and developed a theoretically grounded and numerically efficient algorithm for its calculation.
The basic ideas of our proposed method are as follows. 
First, we consider the function $f ({\bm x}, {\bm w})$ to be a black-box function with a high evaluation cost, and we employ a Gaussian process (GP) model as a surrogate model.
Next, we predict the target DRPTR measure using the GP model for the black-box function $f ({\bm x}, {\bm w})$.
Finally, we perform LSE using credible intervals of the DRPTR measure calculated on the basis of this prediction.

\subsection{Related work}
Active learning using GP models \cite{williams2006gaussian} 
 for black-box functions have been actively studied in the context of Bayesian optimization (see, e.g., \cite{settles2009active,shahriari2016taking}).
Several studies have been conducted on active learning for LSE \cite{bryan2006active,Gotovos:2013:ALL:2540128.2540322,zanette2018robust,inatsu2020-b-active}.
Furthermore, some researchers applied LSE to efficiently identify safety regions \cite{sui2015safe,turchetta2016safe,sui2018stagewise,wachi2018safe}, and others used LSE to enumerate the local minima of black-box functions \cite{inatsu2020-a-active}.

Many studies have been conducted on active learning under input uncertainty (including random environmental variables). 
In \cite{inatsu2020-c-active}, the authors proposed an efficient method for performing LSE in the setting where the input is a random variable generated from a certain distribution.
In other studies, the researchers formulated the randomness of the input with some robustness measures for performing active learning on it.
For example, the authors of \cite{bogunovic2018adversarially} used the worst-case function value of the input shift as a robustness measure. 
Similarly, other research (\cite{beland2017bayesian,toscano2018bayesian,oliveira2019bayesian,pmlr-v108-frohlich20a,gessner2020active,iwazaki2020mean}) dealt with the stochastic robustness (SR) measure, which is a robustness measure defined by integrating the black-box function against the input distribution.
In another study closely related to the present work, the authors of \cite{iwazaki2020-b-bayesian} proposed an active learning method for LSE in the PTR measure on the basis of random inputs; in \cite{iwazaki2020mean}, the authors considered an active learning method for both LSE and maximization problems in the PTR measure.
However, these two are not distributionally robust settings.
Distributionally robust optimization (DRO), which is not an active learning framework, was first introduced by \cite{scarf1958min}.
DRO is an important topic in the context of robust optimization, and there have been countless related studies (see \cite{rahimian2019distributionally} for comprehensive survey of DRO).
%
%
%
%
Active learning methods for DRO with uncertainty environmental variables have recently been proposed by \cite{pmlr-v108-kirschner20a,pmlr-v108-nguyen20a}. 
The main differences to our problem setup are that they focus on a distributionally robust SR (DRSR) measure for the target function, which is the worst-case SR measure in candidate distributions of the unknown environmental variable, and consider the maximization problem for the DRSR measure.
In particular, for the former, we cannot directly apply their proposed methods and theoretical techniques because the target function is different from ours.
%
%
To the best of our knowledge, none of these studies have addressed the same research problem considered in the present work.

\subsection{Contributions}
The main contributions of this study are summarized as follows:
\begin{itemize}
\item We formulate the LSE problem for the DRPTR measure, i.e., the problem of finding the set of design variables for which the DRPTR measure exceeds a given threshold.
\item We construct non-trivial credible intervals for the DRPTR measure and propose a new acquisition function (AF) based on an expected classification improvement.
Using them, we propose an active learning method for the LSE of the DRPTR measure.
Moreover, because the naive implementation of our proposed AF requires a large computational cost, we propose a computationally efficient technique for its calculation. 
\item We clarify the theoretical property of the proposed method. 
Under mild conditions, we show that the proposed method has desirable accuracy and convergence properties.
\item We describe the empirical performance of the proposed method through the results of numerical experiments with benchmark functions and real data.
\end{itemize}
\section{Preliminary}
Let $f: \mathcal{X} \times \Omega \to \mathbb{R}$ be an expensive-to-evaluate black-box function. 
We assume that $\mathcal{X}$ and $\Omega$ are finite sets. 
For each input $({\bm x}, {\bm w}) \in \mathcal{X} \times \Omega$, the value of $f({\bm x},{\bm w})$ is observed as $f({\bm x},{\bm w}) + \varepsilon$ with an independent noise $\varepsilon$, where 
$\varepsilon$ follows Gaussian distribution $\mathcal{N} (0,\sigma^2)$. 
In our setting, a variable 
 ${\bm w} \in \Omega$ stochastically fluctuates by the (unknown) discrete distribution $P^\dagger$ in the use phase, whereas we can 
specify ${\bm w}$ in the development phase. 
Moreover, let 
  $\mathcal{A} $ be a family of candidate distributions of $P^\dagger$.
In this work, we consider 
 $\mathcal{A}=\{ \text{p.m.f.} \ p({\bm w}) \mid d( p({\bm w} ),p^\ast ({\bm w} ) ) <\epsilon \}$.  
where $p^\ast ({\bm w} )$ is a user-specified reference distribution,     $d(\cdot,\cdot)$ is a given distance metric between two distributions, and 
 $\epsilon >0$. 
Then, under the given threshold $h$, we define the DRPTR $F({\bm x} )$ for each ${\bm x} \in \mathcal{X}$ as
$$
F({\bm x}) = \inf _{p({\bm w} ) \in \mathcal{A} }   \sum_{ {\bm w} \in \Omega }   \1[f({\bm x},{\bm w} ) >h] p({\bm w}) .
$$
The aim of this study was to efficiently identify a subset $H$ of $\mathcal{X}$ that satisfies $F({\bm x})>\alpha$ for a given threshold  
 $\alpha \in (0,1)$:
\begin{align}
H= \{ {\bm{x}} \in \mathcal{X} \mid F({\bm{x}} ) >\alpha \} . \label{eq:HandL}
\end{align}
Moreover, we define the lower set $L$ as $L= \{ {\bm{x}} \in \mathcal{X} \mid F({\bm{x}} ) \leq \alpha \}$.

\paragraph{Gaussian process}
In this study, we used the Gaussian process (GP) to model the unknown black-box function $f$.
First, we assume that the GP, $\mathcal{G}\mathcal{P}(0, k(  ( {\bm x},{\bm w}  ),  ( {\bm x}^\prime,{\bm w}^\prime  )  )       )$ is a prior distribution of $f$, where $k(  ( {\bm x},{\bm w}  ),  ( {\bm x}^\prime,{\bm w}^\prime  )  ) $ is a positive-definite kernel. 
Then, given the dataset $\{ ({\bm x}_i,{\bm w}_i,y_i ) \}_{i=1}^t$, the posterior distribution of $f$ also follows the GP, and its posterior mean 
 $ \mu_t ({\bm x},{\bm w}) $ and posterior variance  $ \sigma^2_t ({\bm x},{\bm w}) $ are given by
\begin{align*}
\mu_t ({\bm x},{\bm w}) &= {\bm k}^\top _t ({\bm x},{\bm w} ) ({\bm K}_t + \sigma^2 {\bm I}_t )^{-1} {\bm y}_t , \\  
\sigma^2_t ({\bm x},{\bm w} )&= k(  ({\bm x},{\bm w}),  ({\bm x},{\bm w})) 
 -{\bm k}^\top_t ({\bm x},{\bm w}) 
({\bm K}_t + \sigma^2 {\bm I}_t )^{-1} {\bm k} _t ({\bm x},{\bm w}),
\end{align*} 
where ${\bm k}_t ({\bm x},{\bm w} ) $ is the $t$-dimensional vector whose $j$th element is $ k(({\bm x},{\bm w} ) ,({\bm x}_j,{\bm w}_j)) $, ${\bm y}_t = (y_1,\ldots , y_t )^\top $, ${\bm I}_t $ is the $t \times t$ identity matrix, and ${\bm K}_t $ is the $t \times t$ 
matrix whose $(j,k)$th element is $ k(({\bm x}_j,{\bm w}_j ) ,({\bm x}_k,{\bm w}_k))$.

\section{Proposed method}\label{sec:proposed}
In this section, we propose an active learning method for efficiently identifying \eqref{eq:HandL}.
The target function $F({\bm x})$ is a random variable because $F({\bm x} )$ is the function of $f({\bm x} ,{\bm w} )$, and 
 $f({\bm x} ,{\bm w} )$ is drawn from GP. 
Thus, a reasonable method to identify \eqref{eq:HandL} is to construct a credible interval of $F({\bm x})$, and estimate $H$ using the lower bound of the constructed credible interval.
Unfortunately, although $f ({\bm x},{\bm w})$ follows GP, $F ({\bm x})$ does not follow GP.
Hence, the credible interval of $F ({\bm x})$ cannot be directly calculated on the basis of normal distributions.
In the next section, we propose a simple and theoretically valid credible interval of $F ({\bm x})$ using the credible interval of $f({\bm x},{\bm w})$.
\subsection{Credible interval and LSE}
For any input 
 $({\bm {x}},{\bm w}) \in \mathcal{X} \times  \Omega$ and step $t$, 
we define a credible interval of  $f({\bm {x}},{\bm w})$ as $Q_t ({\bm {x}},{\bm w}) =[ l_t ({\bm {x}},{\bm w}), u_t ({\bm {x}},{\bm w})]$,  
where 
$l_t ({\bm {x}},{\bm w}) = \mu_t ({\bm {x}},{\bm w}) - \beta^{1/2}_t \sigma_t ({\bm {x}},{\bm w}) $, 
 $u_t ({\bm {x}},{\bm w}) = \mu_t ({\bm {x}},{\bm w}) + \beta^{1/2}_t \sigma_t ({\bm {x}},{\bm w}) $, and $\beta^{1/2}_t \geq 0$. 
Similarly, we define a credible interval of 
$\1[f({\bm x},{\bm w} )>h ]$ on the basis of $Q_t ({\bm {x}},{\bm w})$. 
For the theoretical analysis described in Section \ref{sec:THEOREM}, we introduce a user-specified accuracy parameter $\eta > 0$.
Specifically, we define the credible interval of $\1[f(\bm x, \bm x) > h]$ at step $t$ as
\begin{align*}
&\tilde{Q}_t ({\bm {x}},{\bm w};\eta) \equiv [\tilde{l}_t ({\bm {x}},{\bm w};\eta),\tilde{u}_t ({\bm {x}},{\bm w};\eta)] \\
&=
\begin{cases}
[1,1] & \text{if} \ l_t ({\bm x},{\bm w} ) >h-\eta, \\
[0,1] &   \text{if} \ l_t ({\bm x},{\bm w} ) \leq h-\eta \ {\tt and} \ u_t ({\bm x},{\bm w} ) >h, \\
[0,0] & \text{if} \ l_t ({\bm x},{\bm w} ) \leq h-\eta \ {\tt and} \ u_t ({\bm x},{\bm w} )  \leq h.
\end{cases}
\end{align*}
Note that when the accuracy parameter $\eta = 0$, this credible interval simply indicates that if the lower (resp. upper) bound of $f(\bm x, \bm w)$ is greater (resp. smaller) than $h$, we say that $\1[f(\bm x, \bm w) > h] = 1 \ (\text{resp.} \ 0)$. 
Thus, a credible interval $Q^{(F)}_t ({\bm x};\eta) \equiv [l^{(F)}_t ({\bm x} ;\eta),  u^{(F)}_t ({\bm x} ;\eta)      ]$ of the target function $F({\bm x})$ can be given by
\begin{equation}
\begin{split}
l^{(F)}_t ({\bm x} ;\eta) = \inf _{ p({\bm w} ) \in \mathcal{A} } \sum_{ {\bm w} \in \Omega }  \tilde{l}_t ({\bm {x}},{\bm w};\eta) p({\bm w} ),   \
u^{(F)}_t ({\bm x} ;\eta) = \inf _{ p({\bm w} ) \in \mathcal{A} } \sum_{ {\bm w} \in \Omega }  \tilde{u}_t ({\bm {x}},{\bm w};\eta) p({\bm w} ).  
\end{split}
\label{eq:F_upper_lower}
\end{equation}
Note that if we use the $L1$ (or $L2$)-norm as the distance function $d(\cdot,\cdot)$, 
equation \eqref{eq:F_upper_lower} is equivalent to solving a linear (or second-order cone) 
programming problem. 
In both cases, because solvers exist that can compute the optimal solution quickly, it is easy to compute $Q^{(F)}_t ({\bm x};\eta)$ when using such distance functions. 
Then, we estimate $H$ and $L$ using $Q^{(F)}_t ({\bm x};\eta)$ as follows:
\begin{equation}
\begin{split}
H_t  = \{ {\bm {x}} \in \mathcal{X} \mid  l^{(F)}_t ({\bm {x}};\eta) > \alpha \},  \ 
L_t  = \{ {\bm {x}} \in \mathcal{X} \mid u^{(F)}_t ({\bm {x}};\eta)  \leq \alpha \} .
\end{split}
\nonumber 
\end{equation}
Also, we define the unclassified set as $U_t = \mathcal{X} \setminus (H_t \cup L_t )$.

\subsection{Acquisition function}
In this section, we propose two acquisition functions to select the next evaluation point.
Our proposed acquisition functions are based on the maximum improvement in level-set estimation (MILE) strategy proposed in \cite{zanette2018robust}. 
In MILE, the expected value of the increase in the number of classifications after adding the new point $({\bm x}^\ast,{\bm w}^\ast)$ is calculated, and the point with the largest expected value is selected. 
In this study, owing to the computational cost of calculating the acquisition function, we consider a strategy based on the expected value where points in the unclassified set are classified as $H$.

Let 
$({\bm{x}} ^\ast, {\bm w}^\ast)$ be a new point, and let $y^\ast = f( {\bm{x}} ^\ast ,{\bm w}^\ast)+ \varepsilon$ be a new observation at point $({\bm{x}} ^\ast, {\bm w}^\ast)$. 
Furthermore, let $l^{(F)}_t ({\bm x};0 | {\bm x}^\ast,{\bm w}^\ast,y^\ast )$ be the lower bound of the credible interval of $F({\bm x} )$,  where $\eta =0$ when $ ({\bm{x}}^\ast,{\bm w}^\ast,y^\ast)$ is newly added.
Then, we consider the function $a_t ({\bm x}^\ast,{\bm w}^\ast )$:
\begin{align}
a_t ({\bm  x}^\ast,{\bm w}^\ast ) = \sum_{ {\bm x} \in U_t }  \mathbb{E}_{y^\ast}  [ \1 [l^{(F)}_t ({\bm x} ;0 |{\bm x}^\ast,{\bm w}^\ast, y^\ast) > \alpha ]]  .
 \label{eq:AFteigi}
\end{align}
In this work, we do not directly use \eqref{eq:AFteigi} as the acquisition function because the value of \eqref{eq:AFteigi} is sometimes exactly zero for any point.
A reasonable method to avoid this problem is to consider a different function $b_t ({\bm x}^\ast,{\bm w}^\ast )$ only when the values of \eqref{eq:AFteigi} are all zero. 
 For theoretical treatment, we follow the strategy described in \cite{zanette2018robust}, and 
 consider the acquisition function of the form $\max\{ a_t ({\bm x}^\ast,{\bm w}^\ast ), \gamma b_t ({\bm x}^\ast,{\bm w}^\ast ) \}$  with  a positive constant parameter $\gamma$. 
Note that if we use a sufficiently small $\gamma $, it is almost the same when considering $b_t ({\bm x}^\ast,{\bm w}^\ast ) $ only when  
 the values of \eqref{eq:AFteigi} are all zero; otherwise, $a_t ({\bm x}^\ast,{\bm w}^\ast )$. 
In Section \ref{sec:THEOREM}, we present the theoretical guarantees of our proposed method for this acquisition function.
In this section, we propose two types of $b_t ({\bm x}^\ast,{\bm w}^\ast )$. 
The first is based on the RMILE acquisition function proposed by \cite{zanette2018robust}. 
The basic idea of RMILE is to add an additional variance term $\gamma \sigma_t ({\bm x}^\ast,{\bm w}^\ast ) $ to the original MILE acquisition function. 
By using the same argument, we define the following modified acquisition function:
\begin{definition}[Proposed acquisition function 1]
 Let $a_t ({\bm x}^\ast,{\bm w}^\ast )$ be the function defined by \eqref{eq:AFteigi}, and let $\gamma $ be a positive parameter.  
Then, we propose the following acquisition function $a^{(1)}_t ({\bm x}^\ast,{\bm w}^\ast )$:
\begin{align*}
a^{(1)}_t ({\bm  x}^\ast,{\bm w}^\ast ) 
= \max \{  a_t ({\bm  x}^\ast,{\bm w}^\ast ) , \gamma \sigma_t ({\bm x}^\ast, {\bm w}^\ast )   \}.
\end{align*}
Moreover, we select the next evaluation point $({\bm x}_{t+1},{\bm w}_{t+1} ) $ by maximizing $a^{(1)}_t ({\bm x}^\ast,{\bm w}^\ast )$.
\end{definition}
The other acquisition function we propose uses $\gamma {\rm RMILE}_t ({\bm x}^\ast,{\bm w}^\ast )$ instead of $\gamma \sigma_t ({\bm x}^\ast, {\bm w}^\ast ) $ as the function $b_t ({\bm x}^\ast,{\bm w}^\ast )$, where $ {\rm RMILE}_t ({\bm x}^\ast,{\bm w}^\ast )$ is the  RMILE function proposed in \cite{zanette2018robust}. 
%
\begin{definition}[Proposed acquisition function 2]
 Let $a_t ({\bm x}^\ast,{\bm w}^\ast )$ be the function defined by \eqref{eq:AFteigi}, and let $\gamma $ be a positive parameter.  
Then, we propose the following acquisition function $a^{(2)}_t ({\bm x}^\ast,{\bm w}^\ast )$:
\begin{align*}
a^{(2)}_t (({\bm  x}^\ast,{\bm w}^\ast ) ) 
= \max \{  a_t ({\bm  x}^\ast,{\bm w}^\ast ) , \gamma {\rm RMILE}_t ({\bm x}^\ast,{\bm w}^\ast )   \}.
\end{align*}
Moreover, we select the next evaluation point $({\bm x}_{t+1},{\bm w}_{t+1} ) $ by maximizing $a^{(2)}_t ({\bm x}^\ast,{\bm w}^\ast )$.
\end{definition}
The pseudocode of the proposed method is given in Algorithm \ref{alg:1}.
\begin{algorithm}[t]
    \caption{Active learning for distributionally robust level-set estimation}
    \label{alg:1}
    \begin{algorithmic}
        \REQUIRE GP prior $\mathcal{GP}(0,\ k)$, threshold $h \in \mathbb{R}$, probability $\alpha \in (0,1)$, accuracy parameter 
        ~$\eta > 0$, tradeoff parameter $\{\beta_t\}_{t \leq T}$ 
        \STATE $H_0\leftarrow \emptyset$, $L_0 \leftarrow \emptyset$, $U_0 \leftarrow \mathcal{X}$, $t \leftarrow 1$
        \WHILE{$U_{t-1} \neq \emptyset$}
            \STATE Compute $l^{(F)}_t ({\bm x};\eta ) $ and  $u^{(F)}_t ({\bm x};\eta ) $ for all $\bm{x} \in \mathcal{X}$
            \STATE Choose $(\bm{x}_t, \bm{w}_t)$ by   $(\bm{x}_t,\bm{w}_t)= \argmax _{ ({\bm x}^\ast,{\bm w}^\ast ) \in \mathcal{X} \times \Omega } a^{(1)}_{t-1} ({\bm x}^\ast,{\bm w}^\ast ) $  $ (\text{or }a^{(2)}_{t-1} ({\bm x}^\ast,{\bm w}^\ast ) $ instead of $ a^{(1)}_{t-1} ({\bm x}^\ast,{\bm w}^\ast )  )$
            \STATE Observe $y_t \leftarrow f(\bm{x}_t, \bm{w}_t) + \varepsilon_t$
            \STATE Update GP by adding $((\bm{x}_t, \bm{w}_t), y_t)$ and compute ${H}_t, {L}_t$ and ${U}_t$ 
            \STATE $t \leftarrow t + 1$
        \ENDWHILE
        \STATE $\hat{{H}} \leftarrow {H}_{t-1}, \hat{{L}} \leftarrow {L}_{t-1}$
        \ENSURE Estimated Set $\hat{{H}}, \hat{{L}}$
    \end{algorithmic}
\end{algorithm}

\subsection{Computational techniques}\label{subsec:C_tech}
Our proposed acquisition functions are based on \eqref{eq:AFteigi}, where \eqref{eq:AFteigi} includes the calculation of the expected value.
This expectation cannot be expressed as a simple expression using the cumulative distribution function (CDF) of the standard normal distribution, as in the original MILE \cite{zanette2018robust}. 
One way to solve this problem is to generate many samples from the posterior distribution of $y^\ast$ and numerically calculate the expected value.
However, because one optimization calculation is required to calculate $\1 [l^{(F)}_t ({\bm x} ;0 |{\bm x}^\ast,{\bm w}^\ast, y^\ast) > \alpha ]$,  
if the expected value is calculated using $M$ samples, then $M |U_t |$ 
optimization calculations are required to calculate $a_t ({\bm x}^\ast,{\bm w}^\ast )$ for each $({\bm x}^\ast,{\bm w}^\ast )$. 
Therefore, to calculate $a_t ({\bm x}^\ast,{\bm w}^\ast )$ for all candidate points,  
$M |U_t | | \mathcal{X} \times \Omega|$ optimization calculations are required.
 To reduce this large computational cost, we provide useful lemmas for efficiently computing the acquisition function.  
The expected values in \eqref{eq:AFteigi} can be exactly calculated using the following lemma: 
\begin{lemma}\label{lem:exp_cal}
Let $l_t ({\bm x},{\bm w}_j | {\bm x}^\ast,{\bm w}^\ast,y^\ast )$ be the lower confidence bound of $f ({\bm x},{\bm w}_j )$ after adding 
$({\bm x}^\ast,{\bm w}^\ast,y^\ast ) $ to $\{ ({\bm x}_i , {\bm w}_i ,y_i ) \} _{i=1}^t $. 
 Furthermore, let $r_j $ be a number satisfying $h = l_t ({\bm x},{\bm w}_j | {\bm x}^\ast,{\bm w}^\ast,r_j)$, and let 
$r^{(j)}$ be the $j$th-smallest number in the range $r_1$ to $r_{|\Omega|} $. For each $s \in \{1,\ldots, |\Omega|+1\} \equiv [ |\Omega|+1]$, 
define $R_s = (r^{(s-1)},r^{(s)}]$, where $r^{(0)} = -\infty$ and $r^{(|\Omega|+1)} =\infty$. Moreover, let $c_s $ be a real number satisfying $c_s \in R_s$. 
Then, $ \mathbb{E}_{y^\ast}  [ \1 [l^{(F)}_t ({\bm x} ;0 |{\bm x}^\ast,{\bm w}^\ast, y^\ast) > \alpha ] ] $ can be calculated as follows:
\begin{equation}
\begin{split}
 \mathbb{E}_{y^\ast}  [ \1 [l^{(F)}_t ({\bm x} ;0 |{\bm x}^\ast,{\bm w}^\ast, y^\ast) > \alpha ] ]   
= 
\sum_{s=1}^{|\Omega|+1}   \mathbb{P} ( y^\ast \in R_s )   \1 [l^{(F)}_t ({\bm x} ;0 |{\bm x}^\ast,{\bm w}^\ast, c_s) > \alpha ]. 
\end{split}
\label{eq:exp_ana}
\end{equation}
\end{lemma}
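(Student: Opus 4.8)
The plan is to exploit two structural facts about the Gaussian process update and then reduce the expectation to a finite sum over a partition of the real line induced by $y^\ast$. First I would establish that, after augmenting the dataset with $(\bm{x}^\ast,\bm{w}^\ast,y^\ast)$, the posterior variance $\sigma^2_t(\bm{x},\bm{w}_j \mid \bm{x}^\ast,\bm{w}^\ast)$ is independent of the observed value $y^\ast$, whereas the posterior mean $\mu_t(\bm{x},\bm{w}_j \mid \bm{x}^\ast,\bm{w}^\ast,y^\ast)$ is an affine function of $y^\ast$. Both facts follow directly from the posterior mean and variance formulas of the Preliminary section: when a single point is appended, $y^\ast$ enters only linearly through the response vector, while the kernel matrix $\bm{K}$ that determines the variance depends on inputs alone. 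Consequently the lower confidence bound $l_t(\bm{x},\bm{w}_j \mid \bm{x}^\ast,\bm{w}^\ast,y^\ast)=\mu_t(\cdots \mid y^\ast)-\beta^{1/2}_t\sigma_t(\cdots)$ is affine, hence monotone, in $y^\ast$.

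Next I would use this monotonicity to describe how the binary lower bounds $\tilde{l}_t(\bm{x},\bm{w}_j;0 \mid y^\ast)=\mathbb{1}[\,l_t(\bm{x},\bm{w}_j \mid \cdots,y^\ast)>h\,]$ vary with $y^\ast$. Being an indicator of an affine function exceeding $h$, each $\tilde{l}_t(\bm{x},\bm{w}_j;0 \mid y^\ast)$ is a $\{0,1\}$-valued step function that jumps exactly once, at the threshold $r_j$ defined by $l_t(\bm{x},\bm{w}_j \mid \bm{x}^\ast,\bm{w}^\ast,r_j)=h$ (with $r_j=\pm\infty$ in the degenerate case where the posterior covariance between $(\bm{x},\bm{w}_j)$ and $(\bm{x}^\ast,\bm{w}^\ast)$ vanishes and the bound is constant in $y^\ast$). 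The key observation is that, by \eqref{eq:F_upper_lower}, the quantity $l^{(F)}_t(\bm{x};0 \mid y^\ast)$ depends on $y^\ast$ \emph{only} through the discrete vector $(\tilde{l}_t(\bm{x},\bm{w}_j;0 \mid y^\ast))_j$, since the feasible set $\mathcal{A}$ does not involve $y^\ast$. Sorting the thresholds as $r^{(1)}\le\cdots\le r^{(|\Omega|)}$ partitions $\mathbb{R}$ into the intervals $R_s$; on the interior of each $R_s$ no threshold is crossed, so this binary vector, and therefore both $l^{(F)}_t(\bm{x};0 \mid y^\ast)$ and the indicator $\mathbb{1}[\,l^{(F)}_t(\bm{x};0 \mid y^\ast)>\alpha\,]$, is constant and equals its value at the representative point $c_s\in R_s$.

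Finally I would integrate. Because $y^\ast \sim \mathcal{N}(\mu_t(\bm{x}^\ast,\bm{w}^\ast),\,\sigma^2_t(\bm{x}^\ast,\bm{w}^\ast)+\sigma^2)$ is atomless, the finitely many breakpoints $\{r^{(s)}\}$ form a null set and contribute nothing to the expectation, so decomposing the integral over the partition $\{R_s\}$ and using the per-interval constancy established above yields exactly \eqref{eq:exp_ana}.

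The main obstacle is the first step together with the bookkeeping of the second: one must verify carefully that the lower confidence bound is genuinely affine in $y^\ast$ (variance independence is essential here) and that $l^{(F)}_t$ factors through the finite binary vector rather than depending on $y^\ast$ continuously. The remaining points are harmless: ties among the $r_j$ merely make some $R_s$ empty and drop out of the sum, vanishing slopes correspond to infinite thresholds, and the precise value of the indicator exactly at a breakpoint is irrelevant because the boundary is a null set for the continuous law of $y^\ast$; one only needs $c_s$ to represent the almost-everywhere constant value on $R_s$, which an interior point always does.
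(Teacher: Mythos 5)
Your proposal is correct and takes essentially the same route as the paper's proof: both use the single-point GP update formulas to show that the lower confidence bound is affine in $y^\ast$ (with $y^\ast$-independent variance), observe that $l^{(F)}_t({\bm x};0\mid {\bm x}^\ast,{\bm w}^\ast,y^\ast)$ depends on $y^\ast$ only through the indicator vector $\bigl(\1[l_t({\bm x},{\bm w}_j\mid\cdot,y^\ast)>h]\bigr)_j$, which is constant on each $R_s$, and then decompose the expectation over the partition $\{R_s\}$. Your extra care with degenerate (zero-slope) cases, ties among the $r_j$, and the null-set status of the breakpoints addresses edge cases that the paper's proof silently glosses over, but does not change the substance of the argument.
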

Lemma \ref{lem:exp_cal} implies that $|\Omega|+1$ optimization calculations are required to calculate $ \mathbb{E}_{y^\ast}  [ \1 [l^{(F)}_t ({\bm x} ;0 |{\bm x}^\ast,{\bm w}^\ast, y^\ast) > \alpha ] ] $, but the following lemma shows that the number of optimization calculations can be reduced by checking a simple inequality:
\begin{lemma}\label{lem:teigi_bound}
Let $c_1,\ldots, c_{|\Omega|+1}$ be numbers defined as in Lemma \ref{lem:exp_cal}. Suppose that $c_s$ satisfies 
\begin{align*}
\sum_{ {\bm w} \in \Omega } \1[l_t ({\bm x},{\bm w} | {\bm x}^\ast,{\bm w}^\ast,c_s ) > h ]  p^\ast ({\bm w} ) \leq \alpha .
\end{align*}
Then, $ \1 [l^{(F)}_t ({\bm x} ;0 |{\bm x}^\ast,{\bm w}^\ast, c_s) > \alpha ]=0$.
\end{lemma}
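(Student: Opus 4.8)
The plan is to exploit the single fact that the reference distribution $p^\ast$ is itself a feasible point of the candidate class $\mathcal{A}$, so that the infimum defining the conditional lower credible bound of $F$ is automatically dominated by the quantity appearing in the hypothesis. The argument is then just a feasibility inequality.

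First I would unpack $l^{(F)}_t({\bm x};0 \mid {\bm x}^\ast,{\bm w}^\ast,c_s)$ by specializing $\eta = 0$ in the definition of $\tilde{Q}_t$. With $\eta = 0$ the first branch of the piecewise definition triggers exactly when $l_t({\bm x},{\bm w}) > h$ and contributes the lower value $1$, while both remaining branches contribute $0$; hence $\tilde{l}_t({\bm x},{\bm w};0) = \1[l_t({\bm x},{\bm w}) > h]$. After conditioning on the hypothetical observation $({\bm x}^\ast,{\bm w}^\ast,c_s)$, this becomes $\1[l_t({\bm x},{\bm w} \mid {\bm x}^\ast,{\bm w}^\ast,c_s) > h]$, so that
$$
l^{(F)}_t({\bm x};0 \mid {\bm x}^\ast,{\bm w}^\ast,c_s) = \inf_{p({\bm w}) \in \mathcal{A}} \sum_{{\bm w} \in \Omega} \1[l_t({\bm x},{\bm w} \mid {\bm x}^\ast,{\bm w}^\ast,c_s) > h]\, p({\bm w}).
$$

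Next I would note that $p^\ast \in \mathcal{A}$: since $\mathcal{A} = \{p \mid d(p,p^\ast) < \epsilon\}$ with $\epsilon > 0$ and $d(\cdot,\cdot)$ a metric, we have $d(p^\ast,p^\ast) = 0 < \epsilon$, so the reference distribution satisfies the membership constraint. Because an infimum over a set never exceeds the value attained at any one member, evaluating the objective at $p = p^\ast$ gives
$$
l^{(F)}_t({\bm x};0 \mid {\bm x}^\ast,{\bm w}^\ast,c_s) \leq \sum_{{\bm w} \in \Omega} \1[l_t({\bm x},{\bm w} \mid {\bm x}^\ast,{\bm w}^\ast,c_s) > h]\, p^\ast({\bm w}) \leq \alpha,
$$
where the last inequality is precisely the hypothesis of the lemma. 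Consequently $l^{(F)}_t({\bm x};0 \mid {\bm x}^\ast,{\bm w}^\ast,c_s) \leq \alpha$, and therefore $\1[l^{(F)}_t({\bm x};0 \mid {\bm x}^\ast,{\bm w}^\ast,c_s) > \alpha] = 0$, as claimed.

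I do not anticipate a real obstacle: the proof is essentially a one-line upper bound obtained by plugging the feasible point $p^\ast$ into the infimum. The only steps requiring care are bookkeeping: confirming that the $\eta = 0$ specialization of $\tilde{l}_t$ is exactly the plain indicator $\1[l_t > h]$ (so that the hypothesis is literally the $p^\ast$-evaluation of the infimand), and confirming that $\mathcal{A}$ is defined by a strict inequality that $p^\ast$ trivially satisfies. Once these two facts are fixed, the inequality chain closes immediately, and the lemma's practical payoff is that whenever this easily checked $p^\ast$-condition holds, the corresponding optimization for $\1[l^{(F)}_t > \alpha]$ can be skipped.
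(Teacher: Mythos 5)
Your proof is correct and follows essentially the same route as the paper: express $l^{(F)}_t({\bm x};0\mid{\bm x}^\ast,{\bm w}^\ast,c_s)$ as the infimum over $\mathcal{A}$ of the indicator-weighted sum, use the feasibility $p^\ast \in \mathcal{A}$ to bound the infimum by its value at $p^\ast$, and invoke the hypothesis. Your additional remarks (checking $d(p^\ast,p^\ast)=0<\epsilon$ and the $\eta=0$ specialization of $\tilde{l}_t$) are just explicit bookkeeping that the paper leaves implicit.
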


Finally, noting that  $0 \leq  \mathbb{P} ( y^\ast \in R_s )  \leq 1$ and $ 0 \leq \1 [l^{(F)}_t ({\bm x} ;0 |{\bm x}^\ast,{\bm w}^\ast, c_s) > \alpha ] \leq 1$, we can approximate \eqref{eq:exp_ana} with any approximation accuracy $\zeta >0$:
\begin{lemma}\label{lem:approx_any_accuracy}
Let $\zeta >0$, and define 
\begin{align*}
\hat{a}_t ({\bm x}^\ast,{\bm w}^\ast ) &= \sum _{ s \in S_t }   \mathbb{P} ( y^\ast \in R_s )   \1 [l^{(F)}_t ({\bm x} ;0 |{\bm x}^\ast,{\bm w}^\ast, c_s) > \alpha ], \\
S_t &= \{    s \in  [ |\Omega|+1] \mid  \mathbb{P} ( y^\ast \in R_s ) \geq  \zeta /( |\Omega|+1) \}.
\end{align*}
Then, $\hat{a}_t ({\bm x}^\ast,{\bm w}^\ast )$ satisfies the following inequality:
$$
|  \mathbb{E}_{y^\ast}  [ \1 [l^{(F)}_t ({\bm x} ;0 |{\bm x}^\ast,{\bm w}^\ast, y^\ast) > \alpha ] ] - \hat{a}_t ({\bm x}^\ast,{\bm w}^\ast ) | \leq \zeta.
$$
\end{lemma}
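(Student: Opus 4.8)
The plan is to exploit the exact identity furnished by Lemma~\ref{lem:exp_cal} and to observe that $\hat{a}_t(\bm x^\ast,\bm w^\ast)$ differs from the true expectation only by the terms that were deliberately discarded when forming the index set $S_t$. Writing $E \equiv \mathbb{E}_{y^\ast}[\1[l^{(F)}_t(\bm x;0\mid \bm x^\ast,\bm w^\ast,y^\ast) > \alpha]]$ for brevity, Lemma~\ref{lem:exp_cal} expresses $E$ as the full sum over $s \in [\,|\Omega|+1\,]$. Since $\hat{a}_t(\bm x^\ast,\bm w^\ast)$ is precisely that same sum restricted to $s \in S_t$, subtracting leaves exactly the complementary tail
\[
E - \hat{a}_t(\bm x^\ast,\bm w^\ast) = \sum_{s \notin S_t} \mathbb{P}(y^\ast \in R_s)\,\1[l^{(F)}_t(\bm x;0\mid \bm x^\ast,\bm w^\ast,c_s) > \alpha].
\]

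Next I would bound this tail using nonnegativity and the definition of $S_t$. Each summand is a product of a probability and an indicator, hence nonnegative, so $E - \hat{a}_t(\bm x^\ast,\bm w^\ast) \geq 0$ and the absolute value may be removed. For each excluded index $s \notin S_t$, the defining condition gives $\mathbb{P}(y^\ast \in R_s) < \zeta/(|\Omega|+1)$, while the indicator is bounded above by $1$. Because the total number of indices is $|\Omega|+1$, the complementary set contains at most $|\Omega|+1$ elements, and therefore
\[
0 \leq E - \hat{a}_t(\bm x^\ast,\bm w^\ast) \leq \sum_{s \notin S_t} \frac{\zeta}{|\Omega|+1} \leq (|\Omega|+1)\cdot \frac{\zeta}{|\Omega|+1} = \zeta.
\]
Combining this with the nonnegativity yields $|E - \hat{a}_t(\bm x^\ast,\bm w^\ast)| \leq \zeta$, which is the claimed inequality.

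I do not anticipate any genuine obstacle here; the result is essentially bookkeeping once Lemma~\ref{lem:exp_cal} supplies the closed form for $E$. The only point deserving care is recognizing that the sum over $s \notin S_t$ is \emph{exactly} the error incurred by the approximation, so that no additional slack terms appear. The entire argument rests on three elementary facts: the indicator takes values in $[0,1]$, the threshold $\zeta/(|\Omega|+1)$ controls each discarded probability, and the index set $[\,|\Omega|+1\,]$ is finite so that the number of discarded terms is at most $|\Omega|+1$. These combine multiplicatively to produce precisely the target accuracy $\zeta$, which also explains the particular normalization $\zeta/(|\Omega|+1)$ chosen in the definition of $S_t$.
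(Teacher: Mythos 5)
Your proof is correct and is exactly the argument the paper intends: the paper in fact states Lemma \ref{lem:approx_any_accuracy} without a detailed proof, offering only the remark that $0 \leq \mathbb{P}(y^\ast \in R_s) \leq 1$ and $0 \leq \1[\cdot] \leq 1$, and your write-up simply fills in that bookkeeping — the error is the nonnegative tail over $s \notin S_t$, each discarded probability is below $\zeta/(|\Omega|+1)$, and there are at most $|\Omega|+1$ such terms. No gaps; this matches the paper's approach.
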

Lemma \ref{lem:approx_any_accuracy} implies that the number of optimization calculations for \eqref{eq:exp_ana} can be further reduced if the error $\zeta$ is allowed.
In addition, we must emphasize that $ \mathbb{P} ( y^\ast \in R_s )  $ is often very small for most $s$ when we actually calculate \eqref{eq:exp_ana}.
Therefore, from these properties, if we apply Lemma \ref{lem:approx_any_accuracy} using a sufficiently small $\zeta $, we can reduce the computational cost of \eqref{eq:exp_ana} significantly with almost no error.
Detailed numerical comparisons are provided in Section \ref{sec:sec5}.

\section{Theoretical analysis}\label{sec:THEOREM}
In this section, we provide three theorems regarding the accuracy and convergence properties of our methods.
First, we define the misclassification loss $e_\alpha ({\bm x})$ for each 
 ${\bm  x} \in \mathcal{X}$ as follows:
\begin{align*}
e_\alpha ({\bm x}) = 
\left \{
\begin{array}{ll}
\max \{ 0,  F({\bm x}) -\alpha \}  & \text{if} \  {\bm x} \in \hat{L}  \\
\max \{ 0, \alpha- F({\bm x}) \} & \text{if} \ {\bm x} \in \hat{H} 
\end{array}
\right . .
\end{align*}
Furthermore, for theoretical reasons, we assume that the black-box function $f$ follows GP $\mathcal{G}\mathcal{P} (0,k(({\bm x},{\bm w}),({\bm x}^\prime,{\bm w}^\prime)))$.
In addition, for technical reasons, we assume that the prior variance 
 $k(({\bm x},{\bm w}),({\bm x},{\bm w})) \equiv \sigma^2_0 ({\bm x},{\bm w} ) $ satisfies
\begin{align*}
0< \sigma^2_{0,min} \equiv \min_{( {\bm x},{\bm w} ) \in \mathcal{X}\times \Omega } \sigma^2_0 ({\bm x},{\bm w} ) 
\leq \max_{( {\bm x},{\bm w} ) \in \mathcal{X}\times \Omega } \sigma^2_0 ({\bm x},{\bm w} ) \leq 1.
\end{align*}
Moreover, let $\kappa_T$ be the  maximum information gain at step $T$. 
Note that $\kappa_T$ is a measure often used to show theoretical guarantee for GP-based active learning methods (see, e.g.,  \cite{SrinivasGPUCB}), and can be expressed using mutual information $I({\bm y};f)$ between 
the observed vector ${\bm y}$ and $f$ 
as 
$
\kappa_T = \max_{ A \subset \mathcal{X}\times \Omega } I({\bm y}_A;f ). 
$ 
Then, the following theorem regarding accuracy holds:
\begin{theorem}\label{thm:seido}
Let $h \in \mathbb{R}$, $\alpha \in (0,1)$, $t \geq1$, and $\delta \in (0,1)$, and define 
$\beta_t = 2 \log (|\mathcal{X}\times \Omega | \pi^2 t^2 /(3\delta ) )$. 
Moreover, for a user-specified accuracy parameter $\xi >0 $, we define $\eta >0$ as 
$$
\eta = \min \left \{
\frac{\xi \sigma_{0,min} }{2}, \frac{\xi ^2 \delta \sigma_{0,min}}{8 |\mathcal{X}\times \Omega | }
\right \}.
$$
Then, when Algorithm \ref{alg:1} terminates, with a probability of at least $1-\delta$, the misclassification loss is bounded by $\xi$, that is, the following inequality holds:
$$
\mathbb{P} \left ( \max _{{\bm x} \in \mathcal{X}} e_\alpha ({\bm x}) \leq \xi \right ) \geq 1-\delta.
$$
\end{theorem}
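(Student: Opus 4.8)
The plan is to condition on a high-probability confidence event and split the failure budget $\delta$ into two halves. First I would establish, exactly as in the GP-UCB analysis of \cite{SrinivasGPUCB}, that with the stated $\beta_t$ the event $G=\{\,l_t(\bm{x},\bm{w})\le f(\bm{x},\bm{w})\le u_t(\bm{x},\bm{w})\ \text{for all }(\bm{x},\bm{w})\in\mathcal{X}\times\Omega\text{ and all }t\,\}$ holds with probability at least $1-\delta/2$: the $|\mathcal{X}\times\Omega|$ factor in $\beta_t$ absorbs the union bound over points, while the $\pi^2t^2$ factor together with $\sum_t t^{-2}=\pi^2/6$ absorbs the union over steps, leaving a spare $\delta/2$ for the second part of the argument. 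On $G$, the $\eta=0$ interval $[\tilde{l}_t(\cdot;0),\tilde{u}_t(\cdot;0)]$ is a genuine credible interval for $\1[f>h]$ (since $l_t>h$ forces $f>h$ and $u_t\le h$ forces $f\le h$), and because the two expressions in \eqref{eq:F_upper_lower} are monotone in their integrand this yields $l^{(F)}_t(\bm{x};0)\le F(\bm{x})\le u^{(F)}_t(\bm{x};0)$ for every $\bm{x}$.

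Next I would show that the lower set is never charged: since $\tilde{u}_t(\cdot;\eta)\ge\tilde{u}_t(\cdot;0)$ pointwise (the condition $l_t>h-\eta$ only adds mass to the upper bound), on $G$ we get $u^{(F)}_t(\bm{x};\eta)\ge u^{(F)}_t(\bm{x};0)\ge F(\bm{x})$, so every $\bm{x}\in\hat{L}$ has $F(\bm{x})\le\alpha$ and $e_\alpha(\bm{x})=0$. Hence the entire misclassification loss comes from false positives in $\hat{H}$. For $\bm{x}\in\hat{H}$, let $p^\circ$ attain $F(\bm{x})=\sum_{\bm{w}}\1[f(\bm{x},\bm{w})>h]\,p^\circ(\bm{w})$. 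Evaluating the infimum defining $l^{(F)}_t(\bm{x};\eta)$ at $p^\circ$ and using $\tilde{l}_t(\bm{x},\bm{w};\eta)=\1[l_t(\bm{x},\bm{w})>h-\eta]$ gives $\alpha<l^{(F)}_t(\bm{x};\eta)\le F(\bm{x})+\sum_{\bm{w}\in E_{\bm{x}}}p^\circ(\bm{w})$, where $E_{\bm{x}}=\{\bm{w}:l_t(\bm{x},\bm{w})>h-\eta,\ f(\bm{x},\bm{w})\le h\}$. On $G$ the chain $f\ge l_t>h-\eta$ shows $E_{\bm{x}}\subseteq\{\bm{w}:h-\eta<f(\bm{x},\bm{w})\le h\}$, so $e_\alpha(\bm{x})=\max\{0,\alpha-F(\bm{x})\}<\sum_{\bm{w}\in E_{\bm{x}}}p^\circ(\bm{w})$ is the $p^\circ$-mass of a band of width $\eta$ about the threshold, and it remains only to bound this band mass by $\xi$.

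The final and hardest step is to control this worst-case band mass uniformly in $\bm{x}$ using the remaining probability $\delta/2$. Here I would invoke Gaussian anti-concentration: under the prior each marginal $f(\bm{x},\bm{w})$ has variance at least $\sigma^2_{0,min}$, so the chance that $f(\bm{x},\bm{w})$ lands in an interval of length $\eta$ about $h$ is at most $\eta/(\sqrt{2\pi}\,\sigma_{0,min})$ — this is precisely where the hypothesis $\sigma^2_{0,min}>0$ and the factor $\sigma_{0,min}$ in the definition of $\eta$ enter. Feeding this per-point estimate into a union bound over the $|\mathcal{X}\times\Omega|$ pairs, together with a Markov-type passage from a bound on the expected band mass to a tail bound on the event that the band mass exceeds $\xi$, is what forces the two competing requirements producing the terms $\xi\sigma_{0,min}/2$ and $\xi^2\delta\sigma_{0,min}/(8|\mathcal{X}\times\Omega|)$, and drives the failure probability of the band event below $\delta/2$; a final union bound with $G$ then gives $\mathbb{P}(\max_{\bm{x}\in\mathcal{X}}e_\alpha(\bm{x})\le\xi)\ge1-\delta$. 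I expect this to be the main obstacle, because $E_{\bm{x}}$ is defined through the adaptively chosen lower bound $l_t$ and the worst-case distribution $p^\circ$ is itself a function of $f$; decoupling this adaptivity — by passing to the $f$-band on $G$ and bounding the worst-case mass in terms of the reference distribution $p^\ast$ and the ambiguity radius $\epsilon$ — is the delicate part and is what dictates the precise form of $\eta$.
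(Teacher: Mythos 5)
Your first two paragraphs reproduce the paper's argument almost exactly: the paper also splits $\delta$ into two halves, uses the GP-UCB union bound (its Lemma~A.1) to get the event you call $G$ with probability at least $1-\delta/2$, shows that $\1[f>h]\le\tilde{u}_t(\cdot;\eta)$ on $G$ so that no $\bm{x}\in\hat{L}$ is ever charged, and reduces the error on $\hat{H}$ to the mass that a candidate distribution places on the band $\{\bm{w}\,:\,h-\eta<f(\bm{x},\bm{w})\le h\}$. (The paper phrases this reduction via the identity $\sum_{\bm{w}}\1[f>h]p+\tilde{F}_{\eta,p}=\sum_{\bm{w}}\1[f>h-\eta]p$ followed by infima over $p$, which also avoids your implicit assumption that the infimum defining $F(\bm{x})$ is attained; $\mathcal{A}$ is defined by a strict inequality, hence open, so $p^\circ$ need not exist, though near-minimizers would repair this.)

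The gap is in your third step, precisely at the point you flag as delicate. The decoupling you propose for the adaptive worst-case distribution --- ``bounding the worst-case mass in terms of the reference distribution $p^\ast$ and the ambiguity radius $\epsilon$'' --- cannot work: for the $L1$ ball one only gets $\sum_{\bm{w}\in E_{\bm{x}}}p^\circ(\bm{w})\le\sum_{\bm{w}\in E_{\bm{x}}}p^\ast(\bm{w})+\epsilon$, and $\epsilon$ is a fixed problem parameter (e.g.\ $0.65$ in the paper's experiments) that no choice of $\eta$ can shrink, so the bound is vacuous whenever $\epsilon\ge\xi$. The paper's Lemma~A.2 resolves the adaptivity differently and more simply: it applies Chebyshev's inequality to each individual Bernoulli variable $g_\eta(\bm{x},\bm{w})=\1[h\ge f(\bm{x},\bm{w})>h-\eta]$, whose prior mean is at most $\eta/\sigma_{0,min}$ by exactly the anti-concentration estimate you describe, with per-point failure budget $\delta/(2|\mathcal{X}\times\Omega|)$; after the union bound over points, with probability at least $1-\delta/2$ \emph{every} $g_\eta(\bm{x},\bm{w})$ is less than $\xi$ (the mean term and the Chebyshev deviation term are each at most $\xi/2$, which is exactly where the two expressions in the definition of $\eta$ come from, as you anticipated). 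A bound that holds pointwise in $(\bm{x},\bm{w})$ is automatically uniform over all candidate distributions: $\sum_{\bm{w}}g_\eta(\bm{x},\bm{w})p(\bm{w})<\xi\sum_{\bm{w}}p(\bm{w})=\xi$ for every $p\in\mathcal{A}$ simultaneously, including any (near-)minimizer $p^\circ$, so no appeal to $p^\ast$ or $\epsilon$ is needed. With that replacement, your outline becomes the paper's proof.
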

Theorem \ref{thm:seido} does not state whether Algorithm \ref{alg:1} terminates. 
The following theorem guarantees the convergence property in Algorithm \ref{alg:1}:
\begin{theorem}\label{thm:convergence1}
Under the same setting as described in 
Theorem \ref{thm:seido}, let $\gamma >0$ and  $C_1 = 2/ \log (1+\sigma^{-2} ) $. 
In addition, let 
 $T$ be the smallest positive integer satisfying the following four inequalities:  
\begin{align*}
&(1)\quad \frac{\sigma^{-2} \beta^{1/2}_T C_1 \kappa_T }{T}< \frac{ \eta }{2}, \quad 
 (2) \quad 
\frac{ \sigma^{-2} C_1 \kappa_T }{T} < \frac{\eta^2}{4} , \quad
(3) \quad \frac{C_1 \beta_T \kappa_T }{T} < \frac{\eta^2}{4} ,  \\
& (4) \quad 
\frac{1}{2} \log \beta_T -\frac{  T \eta^2 \sigma^2}{8C_1 \kappa_T}  < \log ( |\mathcal{X}| ^{-1} 2^{-|\Omega| }   \eta \gamma (2 \pi)^{1/2}/2 ).
\end{align*}
Then, Algorithm \ref{alg:1} terminates (i.e., $U_T = \emptyset $) after at most $T$ trials when we use the 
 acquisition function 
 $a^{(1)}_t ({\bm  x}^\ast,{\bm w}^\ast  ) $.
\end{theorem}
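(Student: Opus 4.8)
The plan is to argue by contradiction: assume the algorithm has not terminated after $T$ steps, so $U_{t-1}\neq\emptyset$ for every $t\le T$, and derive a violation of one of the four stated inequalities. The two deterministic pillars are (a) the standard maximum-information-gain bound $\sum_{t=1}^{T}\sigma^2_{t-1}(\bm x_t,\bm w_t)\le C_1\kappa_T$, which holds for any sequence of query points, and (b) a termination criterion: if $\max_{(\bm x,\bm w)}\sigma_{t-1}(\bm x,\bm w)\le \eta/(2\beta_{t-1}^{1/2})$, then every indicator credible interval $\tilde Q_{t-1}(\bm x,\bm w;\eta)$ collapses to a point (its width can be positive only when $u_{t-1}-l_{t-1}>\eta$, i.e. $2\beta_{t-1}^{1/2}\sigma_{t-1}>\eta$), forcing $l^{(F)}_{t-1}(\bm x;\eta)=u^{(F)}_{t-1}(\bm x;\eta)$ and hence $U_{t-1}=\emptyset$. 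Contrapositively, non-termination at step $t$ gives $\sigma^{\max}_{t-1}\equiv\max_{(\bm x,\bm w)}\sigma_{t-1}(\bm x,\bm w)>\eta/(2\beta_{t-1}^{1/2})$, and since $a^{(1)}_{t-1}\ge\gamma\sigma_{t-1}$ pointwise, the selected point satisfies $a^{(1)}_{t-1}(\bm x_t,\bm w_t)=\max a^{(1)}_{t-1}\ge\gamma\sigma^{\max}_{t-1}>\gamma\eta/(2\beta_{t-1}^{1/2})$.

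Next I would localize the analysis to the single step that is cheapest to control. Under the non-termination assumption, bound (a) gives $\min_{t\le T}\sigma^2_{t-1}(\bm x_t,\bm w_t)\le C_1\kappa_T/T$; let $t^\ast$ attain this minimum, so $\sigma^2_{t^\ast-1}(\bm x_{t^\ast},\bm w_{t^\ast})\le C_1\kappa_T/T$. At $t^\ast$ the acquisition value is achieved either by the exploration term $\gamma\sigma_{t^\ast-1}$ or by $a_{t^\ast-1}$, giving two cases. If $a^{(1)}_{t^\ast-1}(\bm x_{t^\ast},\bm w_{t^\ast})=\gamma\sigma_{t^\ast-1}(\bm x_{t^\ast},\bm w_{t^\ast})$, then $\sigma_{t^\ast-1}(\bm x_{t^\ast},\bm w_{t^\ast})=\sigma^{\max}_{t^\ast-1}>\eta/(2\beta_T^{1/2})$, so $\eta^2/(4\beta_T)<C_1\kappa_T/T$, directly contradicting inequality (3). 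This case is easy; all the work is in the other case.

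The hard part is the case $a^{(1)}_{t^\ast-1}(\bm x_{t^\ast},\bm w_{t^\ast})=a_{t^\ast-1}(\bm x_{t^\ast},\bm w_{t^\ast})$, where we only know $a_{t^\ast-1}(\bm x_{t^\ast},\bm w_{t^\ast})>\gamma\eta/(2\beta_T^{1/2})$ yet must still conclude that $\sigma_{t^\ast-1}(\bm x_{t^\ast},\bm w_{t^\ast})$ is large. Here I would use that adding $(\bm x_{t^\ast},\bm w_{t^\ast},y^\ast)$ makes $\mu_{t^\ast}(\bm x,\bm w)$ an affine function of $y^\ast$, so over the posterior-predictive draw of $y^\ast$ the shift $\mu_{t^\ast}(\bm x,\bm w)-\mu_{t^\ast-1}(\bm x,\bm w)$ is centered Gaussian with standard deviation at most $\sigma_{t^\ast-1}(\bm x_{t^\ast},\bm w_{t^\ast})/\sigma$ (by Cauchy--Schwarz on the posterior covariance together with $\sigma_0^2\le1$); inequality (2) forces this standard deviation below $\eta/2$, and inequality (1) controls it after the $\beta^{1/2}$ scaling that enters $l_{t^\ast}$. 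Because a point $\bm x\in U_{t^\ast-1}$ can enter $H_{t^\ast}$ in the $\eta=0$ sense used by $a_{t^\ast-1}$ only if a subset of the events $\{l_{t^\ast}(\bm x,\bm w)>h\}_{\bm w\in\Omega}$ heavy enough under the worst-case $p\in\mathcal A$ simultaneously flips upward, I would bound that probability by a union over the $\le 2^{|\Omega|}$ possible flip-patterns times a one-sided Gaussian tail, and then bound $a_{t^\ast-1}\le|\mathcal X|\,2^{|\Omega|}$ times that tail. Inserting $\sigma^2_{t^\ast-1}(\bm x_{t^\ast},\bm w_{t^\ast})\le C_1\kappa_T/T$ into the tail and comparing with the lower bound $\gamma\eta/(2\beta_T^{1/2})$ yields, after taking logarithms, precisely inequality (4); since (4) is assumed, the two estimates are incompatible, completing the contradiction.

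The step I expect to be the main obstacle is this final Gaussian-flip estimate: converting the lower bound on the \emph{expected} classification count $a_{t^\ast-1}$ into a deterministic lower bound on $\sigma_{t^\ast-1}(\bm x_{t^\ast},\bm w_{t^\ast})$. The delicacy is that $a_{t^\ast-1}$ is computed with the sharp threshold ($\eta=0$) whereas membership in $U_{t^\ast-1}$ is defined with the buffer ($\eta>0$), so one must exploit the $\eta$-buffer to guarantee that each contributing $\bm w$ genuinely requires an upward mean shift of order $\eta$ rather than a near-zero shift, so that the small shift-variance $\le\sigma_{t^\ast-1}(\bm x_{t^\ast},\bm w_{t^\ast})/\sigma$ really does make every flip improbable; carrying the combinatorial $2^{|\Omega|}$ and the $|\mathcal X|$ factors through explicitly is exactly what produces the constants appearing in inequality (4).
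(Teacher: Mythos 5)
Your proposal is correct and follows essentially the same route as the paper's proof: your termination criterion is the paper's Lemma \ref{lem:eta_end}, your localization via $\sum_{t\le T}\sigma^2_{t-1}(\bm x_t,\bm w_t)\le C_1\kappa_T$ is Lemma \ref{lem:postvar_bound}, and your ``hard part'' --- bounding $a_{t^\ast-1}$ by $|\mathcal{X}|\,2^{|\Omega|}$ flip-patterns, each forced by the $\eta$-buffer (membership in $U$) to require an upward shift of order $\eta$ against a Gaussian shift of standard deviation at most $\sigma_{t^\ast-1}(\bm x_{t^\ast},\bm w_{t^\ast})/\sigma$, with inequalities (1) and (2) playing exactly the roles you assign them --- is precisely Lemma \ref{lem:PTRMILE_bound}, while your case split on which term attains the max in $a^{(1)}$ mirrors Lemma \ref{lem:bound_xast}. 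The only difference is that you phrase the argument as a contradiction with non-termination, whereas the paper argues directly that the four inequalities force $2\beta_T^{1/2}\sigma_{T-1}<\eta$ everywhere; this is a cosmetic, not substantive, distinction.
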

Furthermore, 
the similar theorem holds if the acquisition function 
 $a^{(2)}_t (({\bm  x}^\ast,{\bm w}^\ast ) ) $ is used. 
In this study, owing to the practical performance, we modified the original RMILE to 
\begin{align*}
{\rm RMILE}_t ({\bm x}^\ast,{\bm w}^\ast ) &= \max \{   {\rm MILE}_t  ({\bm x}^\ast,{\bm w}^\ast ), \tilde{\gamma} \sigma_t  ({\bm x}^\ast,{\bm w}^\ast ) \} , \\
  {\rm MILE}_t  ({\bm x}^\ast,{\bm w}^\ast ) &=  \sum_{ ({\bm x},{\bm w} ) \in U_t \times \Omega }  \mathbb{E} _{y^\ast }  [ \1 [  l_t ({\bm x},{\bm w} |{\bm x}^\ast,{\bm w}^\ast,y^\ast   )>h ]  ]\\
&- | \{   ({\bm x},{\bm w} ) \in U_t \times \Omega  \mid l_t ({\bm x},{\bm w} ) >h-\eta \} |.
\end{align*}
Then, the following theorem holds:
\begin{theorem}\label{thm:convergence2}
Under the same setting described in 
Theorem \ref{thm:seido}, let  $\gamma >0$, $\tilde{\gamma } >0$, and $C_1 = 2/ \log (1+\sigma^{-2} ) $. 
In addition, let $T$ be the smallest positive integer satisfying the following five inequalities:
\begin{align*}
&(1) \quad \frac{\sigma^{-2} \beta^{1/2}_T C_1 \kappa_T }{T}< \frac{ \eta }{2}, \quad 
(2) \quad \frac{\sigma^{-2} C_1 \kappa_T }{T} < \frac{\eta^2 }{4} , \quad 
(3) \quad \frac{C_1 \beta_T \kappa_T }{T} < \frac{\eta^2}{4} , \\
&(4) \quad \frac{1}{2} \log \beta_T -\frac{  T \eta^2 \sigma^2}{8C_1 \kappa_T} < \log ( |\mathcal{X}| ^{-1} 2^{-|\Omega| }   \eta \gamma \tilde{\gamma} (2 \pi)^{1/2}/2 ), \\
&(5) \quad \frac{1}{2} \log \beta_T -\frac{  T \eta^2 \sigma^2}{8C_1 \kappa_T} < \log (|\mathcal{X} \times \Omega| ^{-1}    \eta  \tilde{\gamma} (2 \pi)^{1/2}/2 ).
\end{align*}
Then, Algorithm \ref{alg:1} terminates (i.e., $U_T = \emptyset $) after at most $T$ trials when we use the 
 acquisition function 
 $a^{(2)}_t ({\bm  x}^\ast,{\bm w}^\ast  ) $.
\end{theorem}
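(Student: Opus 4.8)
The plan is to run the same deterministic contradiction scheme as in the proof of Theorem \ref{thm:convergence1}, inserting one extra case to handle the ${\rm MILE}_t$ component that now sits inside ${\rm RMILE}_t$. First I would observe that termination does not depend on the realized observations: the posterior variances of the sampled points always satisfy $\sum_{t=1}^{T}\sigma^2_{t-1}(\bm x_t,\bm w_t)\le C_1\kappa_T$. Suppose the loop runs for $T$ iterations; then the \textbf{while}-guard forces $U_{t-1}\neq\emptyset$ for every $t\le T$, and pigeonholing the variance sum produces a step $t^\ast\le T$ with $\sigma^2_{t^\ast-1}(\bm x_{t^\ast},\bm w_{t^\ast})\le C_1\kappa_T/T$, i.e. the point chosen at that step has tiny posterior variance.

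Next I would exploit the surviving unclassified set. Pick $\bm x'\in U_{t^\ast-1}$; since $l^{(F)}_{t^\ast-1}(\bm x';\eta)\le\alpha<u^{(F)}_{t^\ast-1}(\bm x';\eta)$, there must be a $\bm w'$ with $\tilde Q_{t^\ast-1}(\bm x',\bm w';\eta)=[0,1]$, and combining $l_{t^\ast-1}(\bm x',\bm w')\le h-\eta$ with $u_{t^\ast-1}(\bm x',\bm w')>h$ yields $\sigma_{t^\ast-1}(\bm x',\bm w')>\eta/(2\beta_T^{1/2})$. Maximality of the acquisition function together with $a^{(2)}_t\ge\gamma\,{\rm RMILE}_t\ge\gamma\tilde\gamma\,\sigma_t$ then gives $a^{(2)}_{t^\ast-1}(\bm x_{t^\ast},\bm w_{t^\ast})>\gamma\tilde\gamma\,\eta/(2\beta_T^{1/2})$. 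I would now split according to which of the three quantities $a_{t^\ast-1}$, $\gamma\,{\rm MILE}_{t^\ast-1}$, $\gamma\tilde\gamma\,\sigma_{t^\ast-1}$ realizes this value at the selected point. In the pure-variance case the selected $\sigma^2$ would have to exceed $\eta^2/(4\beta_T)$ while being at most $C_1\kappa_T/T$, which is exactly ruled out by inequality (3). In the other two cases I would upper-bound $a_{t^\ast-1}$ and ${\rm MILE}_{t^\ast-1}$ at the low-variance point and show, via inequalities (4) and (5), that these upper bounds lie strictly below $\gamma\tilde\gamma\,\eta/(2\beta_T^{1/2})$ and $\tilde\gamma\,\eta/(2\beta_T^{1/2})$, respectively, again a contradiction.

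The hard part will be those two upper bounds at a point of vanishing variance. Here I would use inequalities (1) and (2) to guarantee that adding a single low-variance observation reduces each posterior standard deviation (weighted by $\beta_T^{1/2}$) by at most $\eta/2$ and keeps the mean-sensitivity to $y^\ast$ small; consequently any pair $(\bm x,\bm w)$ with $l_{t^\ast-1}(\bm x,\bm w)\le h-\eta$ can reach $l_{t^\ast-1}(\bm x,\bm w\mid\bm x^\ast,\bm w^\ast,y^\ast)>h$ only when $y^\ast$ falls in a Gaussian tail $\{Z>\tau\}$ with $\tau\ge\eta\sigma/(2\sigma_{t^\ast-1}(\bm x^\ast,\bm w^\ast))$, whose probability is at most $\exp(-T\eta^2\sigma^2/(8C_1\kappa_T))$ up to the $\tfrac12\log\beta_T$ and $\sqrt{2\pi}$ factors visible on the right-hand sides of (4) and (5). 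Summing this tail estimate over $(\bm x,\bm w)\in U_{t^\ast-1}\times\Omega$ bounds ${\rm MILE}_{t^\ast-1}$ with the prefactor $|\mathcal{X}\times\Omega|$ matching inequality (5). The subtler bound is on $a_{t^\ast-1}$: because $l^{(F)}_{t^\ast-1}(\bm x;0)=\inf_{p\in\mathcal{A}}\sum_{\bm w}\1[l_{t^\ast-1}(\bm x,\bm w)>h]\,p(\bm w)$ is a worst-case expectation rather than a single indicator, I would argue that it can exceed $\alpha$ after the update only if at least one $\bm w$ with $l_{t^\ast-1}\le h-\eta$ flips above $h$, and then control the worst-case measure over the subsets of $\Omega$ that can become ``active'' to produce the combinatorial prefactor $|\mathcal{X}|\,2^{|\Omega|}$ that inequality (4) is tailored to absorb. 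Collecting the three contradictions shows $U_T=\emptyset$, so Algorithm \ref{alg:1} halts within $T$ trials under $a^{(2)}_t$.
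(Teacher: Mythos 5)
Your proposal is correct and follows essentially the same route as the paper's proof: pigeonhole on the summed posterior variances via $\kappa_T$ (the paper's Lemma~\ref{lem:postvar_bound}), the chain $\gamma\tilde\gamma\,\sigma_{t-1}\le\gamma\,{\rm RMILE}_{t-1}\le a^{(2)}_{t-1}\le a^{(2)}_{t-1}({\bm x}_t,{\bm w}_t)$, a three-way split on which of $a_{t-1}$, $\gamma\,{\rm MILE}_{t-1}$, $\gamma\tilde\gamma\,\sigma_{t-1}$ attains the maximum, and the same Gaussian-tail flip-probability bounds with prefactors $|\mathcal{X}|2^{|\Omega|}$ and $|\mathcal{X}\times\Omega|$ (Lemmas~\ref{lem:PTRMILE_bound} and~\ref{lem:RMILE_bound}), closed by inequalities (3)--(5) exactly as in Lemma~\ref{lem:bound_xast2} and Lemma~\ref{lem:eta_end}. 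Your contradiction framing (assume $U_{t-1}\neq\emptyset$ up to $T$ and derive a contradiction) is just the contrapositive of the paper's direct argument, and your only imprecision---describing condition (2) as controlling ``mean-sensitivity'' when its actual role is to make the tail threshold exceed $1$ so that $\Phi(-\tau)\le\phi(\tau)$ applies---does not affect the validity of the bounds you invoke.
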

The order of the maximum information gain $\kappa_T$ is known to be sublinear under mild conditions
 \cite{SrinivasGPUCB}. 
Hence, because the order of $\beta_T $ is $O ( \log T )$, there exist positive integers satisfying the inequalities in Theorems \ref{thm:convergence1} and \ref{thm:convergence2}.
\section{Numerical experiments}\label{sec:sec5}

We confirmed the performance of the proposed method using both synthetic and real data.
Because of space limitation, we provide a part of experimental results in the main text.
All experimental results and detail parameter settings are given in the Appendix.
The input space $\mathcal{X} \times \Omega$ was defined as a set of grid points that uniformly cut the region $[L_1,U_1] \times [L_2,U_2]$ into $50 \times 50$.
In all experiments, we used the following Gaussian kernel as the kernel function:
$$
k( (x,w),(x^\prime,w^\prime ) ) = \sigma^2_f \exp (-\{(x-x^\prime)^2+(w-w^\prime)^2 \}/L).
$$
Moreover, we used $L1$-norm 
as the distance functions between distributions. 
Furthermore, we considered the following two distributions as the reference distribution $p^\ast (w)$:
\begin{description}
\item [Uniform:] $p^\ast (w) = 1/50. $
\item [Normal:]  $$ \hspace{-7mm} p^\ast (w) =\frac{a(w)}{  \sum_{w \in \Omega} a(w)  },\quad 
   a(w)=\frac{1}{\sqrt{20 \pi}} \exp (-w^2 /20 ) .$$
\end{description}
 Then, we compared the following acquisition functions:
\begin{description}
			\item [Random:] Select $(x_{t+1},w_{t+1} )$ by using random sampling. 
			\item [US:] Perform uncertainty sampling, i.e., $(x_{t+1},w_{t+1} ) = \argmax _{ (x,w) \in \mathcal{X} \times \Omega} \sigma^2_t (x,w)$.
			\item [Straddle\_f:] Perform straddle strategy \cite{bryan2006active}, i.e., 
$(x_{t+1},w_{t+1} ) = \argmax _{ (x,w) \in \mathcal{X} \times \Omega} v_t (x,w)$, where $v_t (x,w) = \min \{ u_t (x,w)-h,h-l_t (x,w) \}$. 
			\item  [Straddle\_US:] Select $x_{t+1}$ and $w_{t+1}$ by using the straddle of $F(x)$ and  $\sigma_t ({ x}_{t+1},{ w} ) $, respectively, i.e., $x_{t+1} = \argmax_{x \in \mathcal{X} } {v}^F_t (x) $ and $w_{t+1} =\argmax_{w \in \Omega} \sigma^2_t ({ x}_{t+1},{ w} ) $, where $v^F_t (x) = \min \{ u^F_t (x;\eta ) -\alpha, \alpha -l^F_t (x;\eta )    \}.$ 
			\item [Straddle\_random:] 
Replace the selection method of $w_{t+1}$ in straddle\_US with random sampling.
			\item [MILE:] Perform the original MILE strategy, i.e., $(x_{t+1},w_{t+1} ) $ was selected by using (6) in \cite{zanette2018robust}. 
			\item [Proposed1\_$0.1$:]  Perform $a^{(1)}_t ({\bm x}^\ast,{\bm w}^\ast )$ with $\gamma =0.1$.
			\item [Proposed1\_$0.01$:]  Perform $a^{(1)}_t ({\bm x}^\ast,{\bm w}^\ast )$ with $\gamma =0.01$.
			\item [Proposed2\_$0.1$:]  Perform $a^{(2)}_t ({\bm x}^\ast,{\bm w}^\ast )$ with $\gamma =0.1$.
			\item [Proposed2\_$0.01$:]   Perform $a^{(2)}_t ({\bm x}^\ast,{\bm w}^\ast )$ with $\gamma =0.01$.
		\end{description}
Here, for simplicity, we set the accuracy parameter $\eta$ to zero. 
Similarly, because of the computational cost of calculating acquisition functions, 
we replaced $\mathbb{P} (y^\ast \in R_s ) \1 [ l^{(F)}_t (x;0|x^\ast,w^\ast,c_s )>\alpha ] $ in  \eqref{eq:exp_ana}
with zero when $\mathbb{P} (y^\ast \in R_s ) $ satisfies $\mathbb{P} (y^\ast \in R_s ) < 0.005 $.
%
%
In other words, we used Lemma \ref{lem:approx_any_accuracy} with $\zeta /(|\Omega| +1)= 0.005$ to approximate \eqref{eq:exp_ana}.

\subsection{Synthetic data experiments}\label{syn_experiment}
We confirmed the performance of the proposed method using synthetic functions. 
We considered the following four functions, which are commonly used benchmark functions (the last one adds $-4000$ to the original definition):
\begin{description}
\item [Booth:] $f(x,w) = (x+2w-7)^2 + (2x+w -5 )^2$. 
\item [Matyas:] $f(x,w) = 0.26 (x^2+w^2) - 0.48 xw$. 
\item [McCormick:] $f(x,w) = \sin (x+w) + (x-w)^2 - 1.5 x + 2.5w +1$. 
\item [Styblinski-Tang:] $f(x,w)=(x^4-16x^2+5x)/2 + (w^4-16w^2+5w)/2-4000$.
\end{description}
Under this setup, we took one initial point at random and ran the algorithms until the number of iterations reached 300 (or 200), 
where the parameters used for each experiment are listed in Table \ref{tab:setting1} in the Appendix.
We performed 50 Monte Carlo simulations and obtained the average F-score as follows:
$$
\text{F-score}= \frac{2 \text{pre} \times \text{rec}}{ \text{pre}+\text{rec}}, \ \text{pre}= \frac{ | H \cap H_t    |}{|H_t|}, \ 
 \text{rec}= \frac{ | H \cap H_t    |}{|H|}.
$$
From Figures \ref{fig:exp1} and \ref{fig:exp2}, it can be confirmed that our proposed methods outperform other existing methods. 
On the other hand, in the existing methods, Straddle\_f and MILE exhibit high performance,
because the MILE acquisition function increases the expected number of $(x,w)$ satisfying $l_t (x,w) >h$. As a result,  because $\tilde{l}_t (x,w;\eta )$ and $l^{(F)}_t (x;\eta )$ become large early, the number of elements in $H_t$ also increases early. 
Similarly, because the Straddle\_f acquisition function can efficiently search for $(x,w)$ satisfying $l_t (x,w)>h$ or $u_t (x,w)<h$, the number of elements in $H_t$ also increases efficiently from the same argument as before. 
Furthermore, when comparing Proposed1 and Proposed2, one of the reasons why the latter exhibits better performance is the fact that RMILE exhibits better performance than uncertainty sampling. 
%
Other experiments, a comparison of the difference in $\gamma$ is 
described in the Appendix.

\begin{figure}[!t]
\begin{center}
 \begin{tabular}{cccc}
 \includegraphics[width=0.225\textwidth]{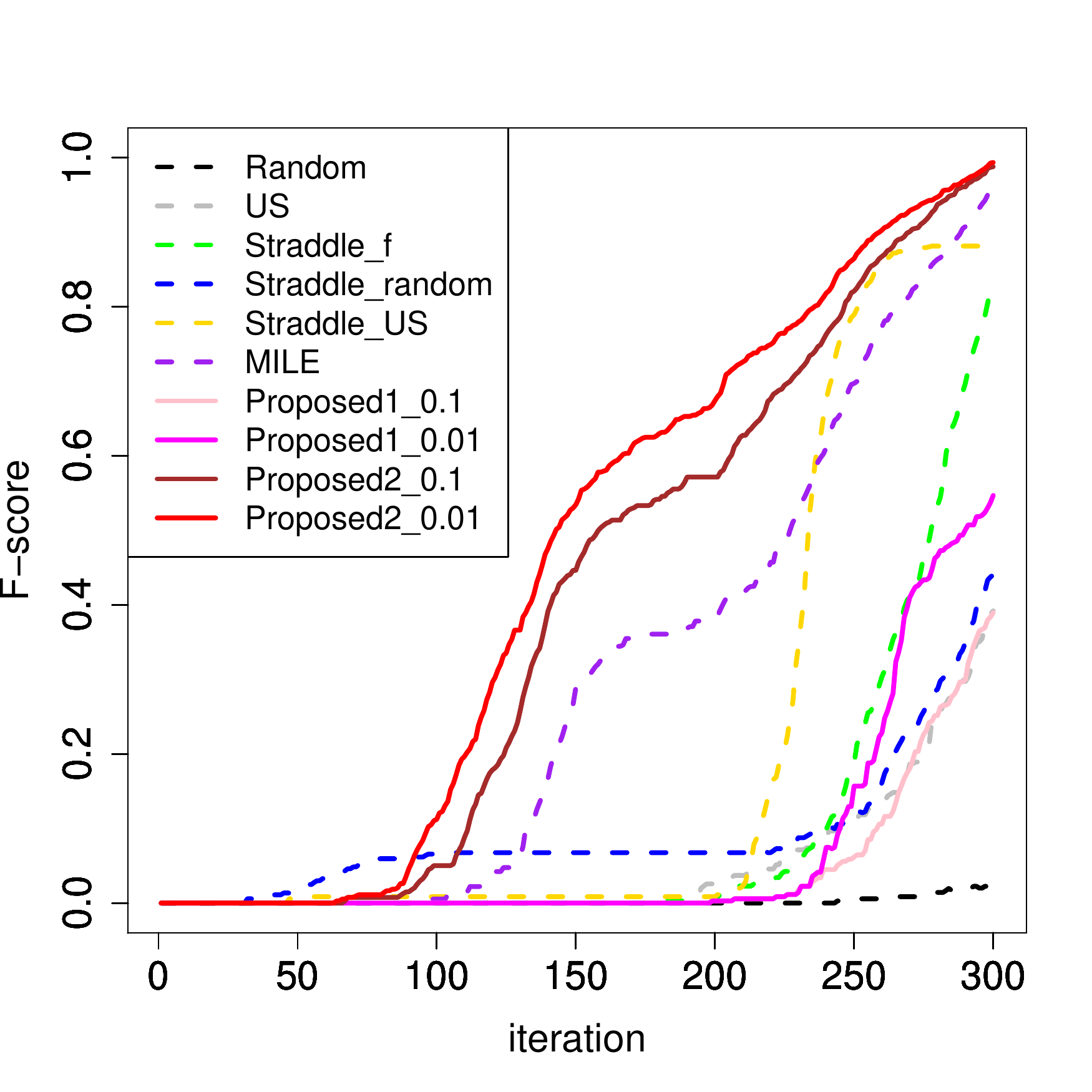} 
&
 \includegraphics[width=0.225\textwidth]{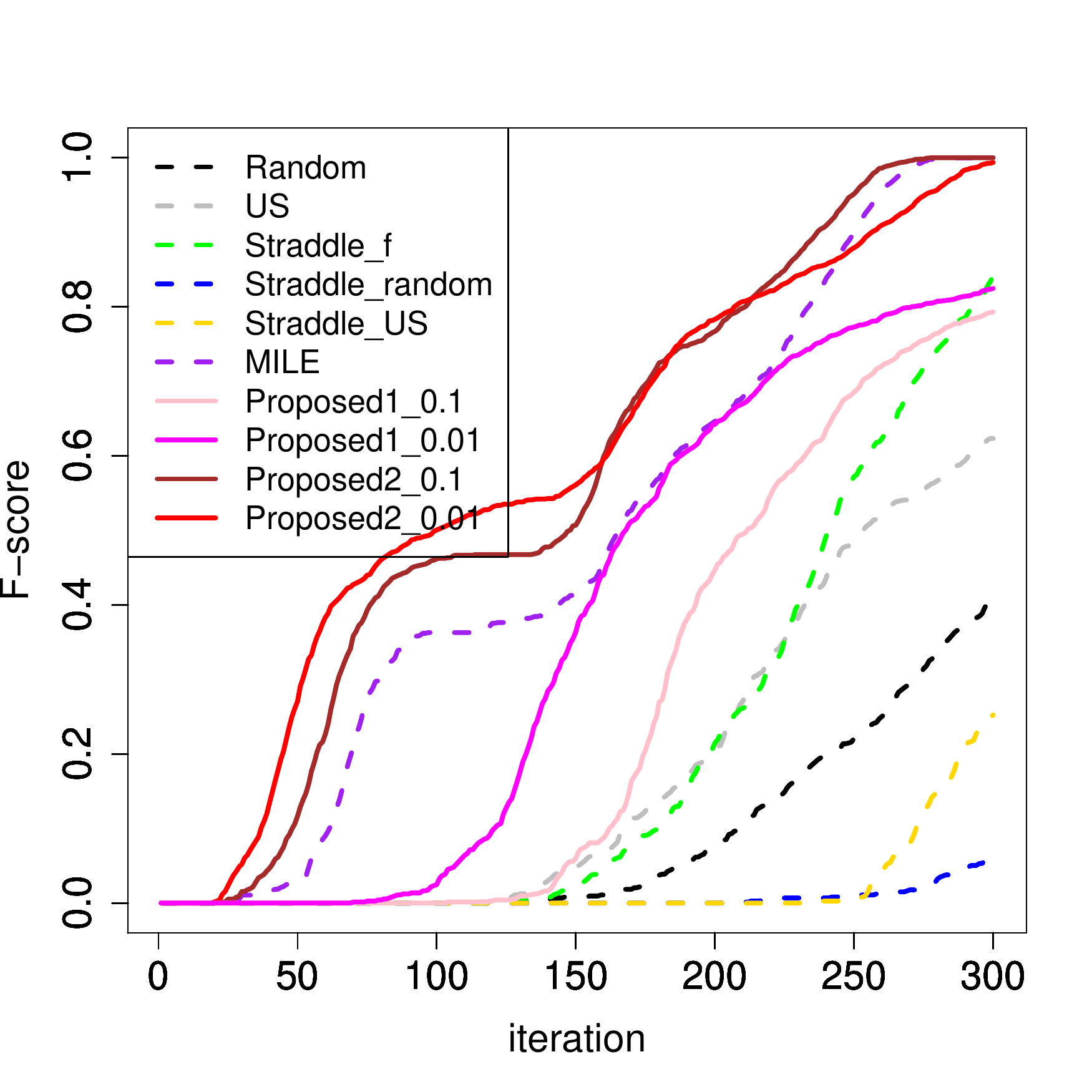} &
\includegraphics[width=0.225\textwidth]{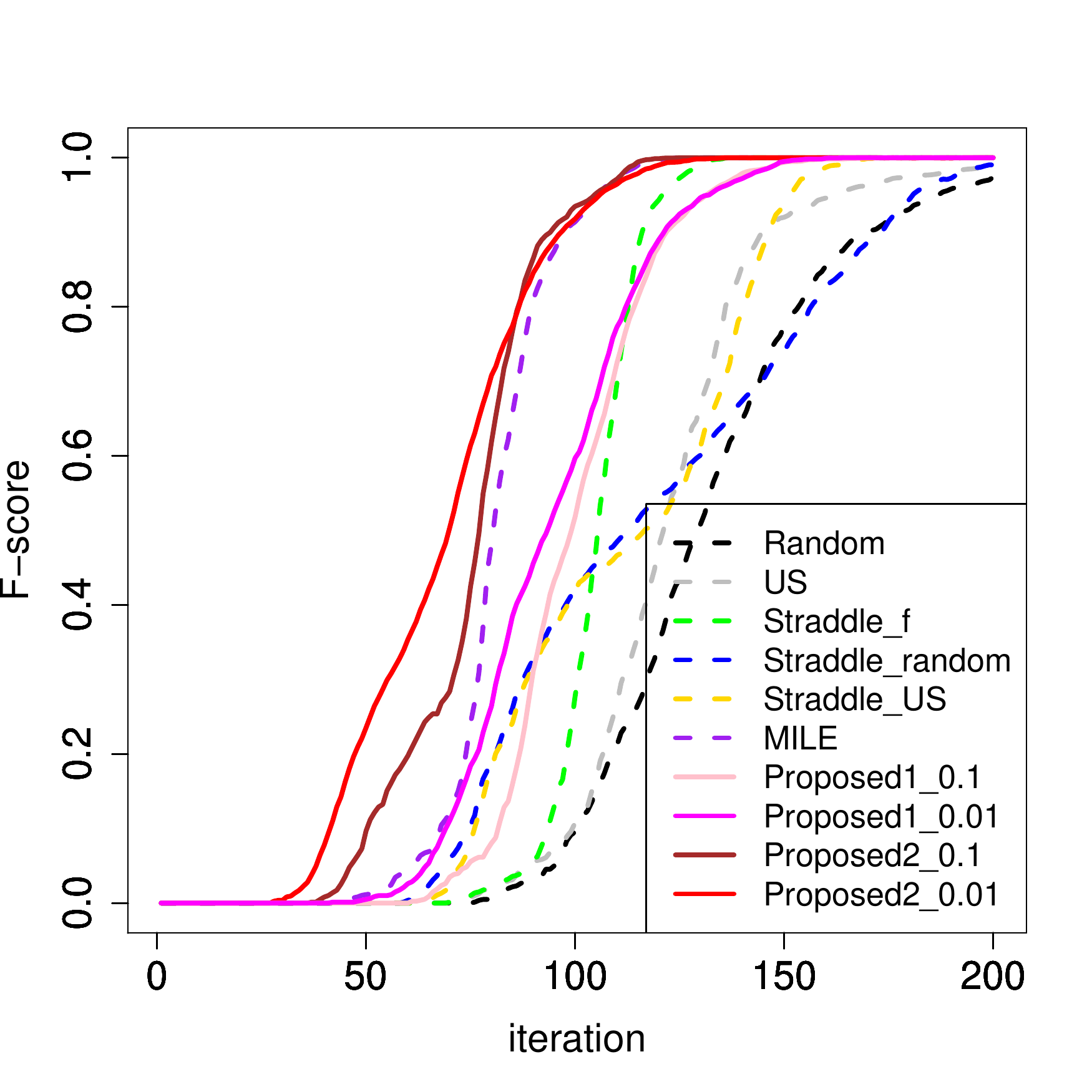} 
&
 \includegraphics[width=0.225\textwidth]{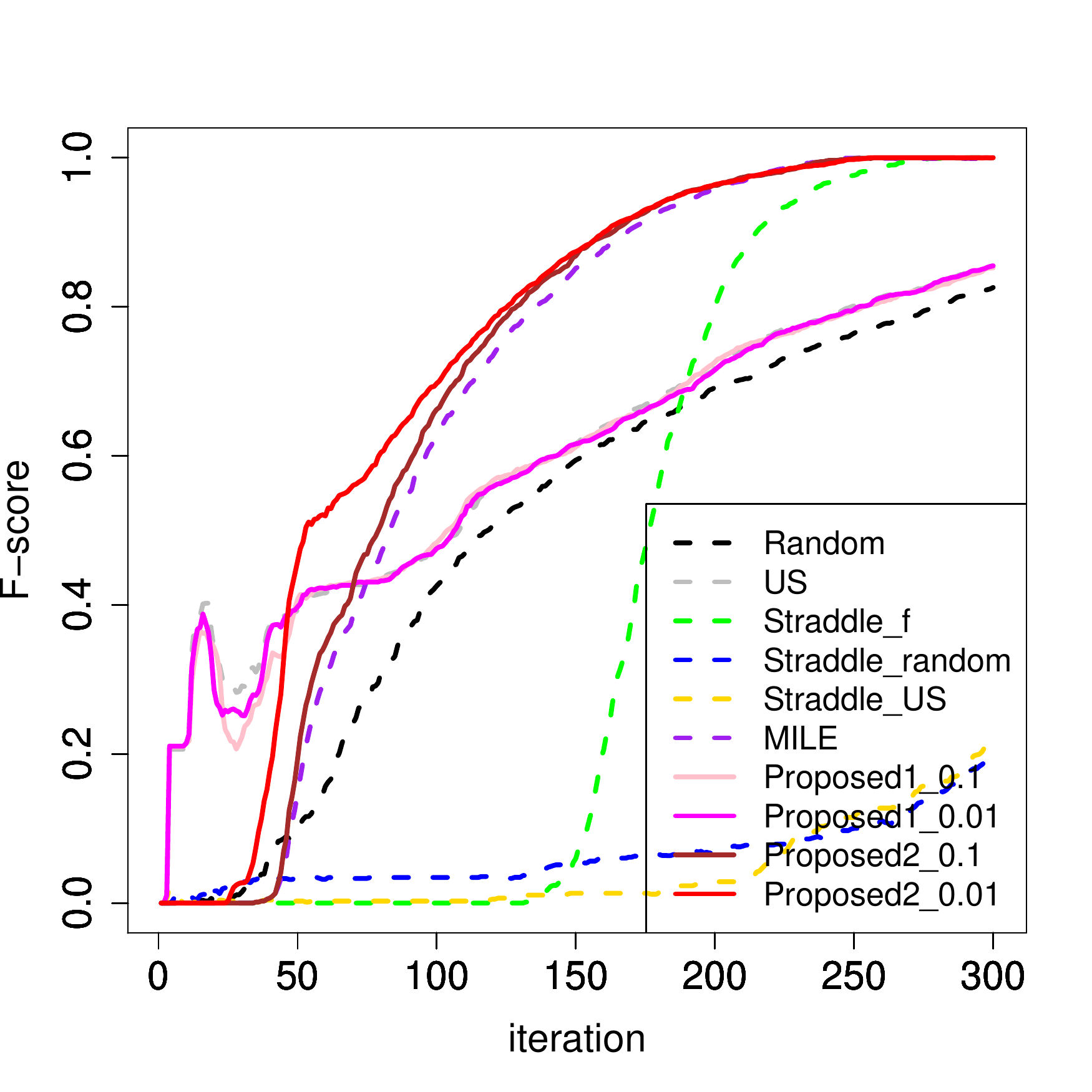} \\
Booth & Matyas &  McCormick & Styblinski-Tang 
 \end{tabular}
\end{center}
 \caption{Average F-score over 50 simulations with four benchmark functions when the distance function and reference distribution are $L1$-norm and Uniform, respectively.}
\label{fig:exp1}
\end{figure}

 	\begin{figure}[!t]
\begin{center}
 \begin{tabular}{cccc}
 \includegraphics[width=0.225\textwidth]{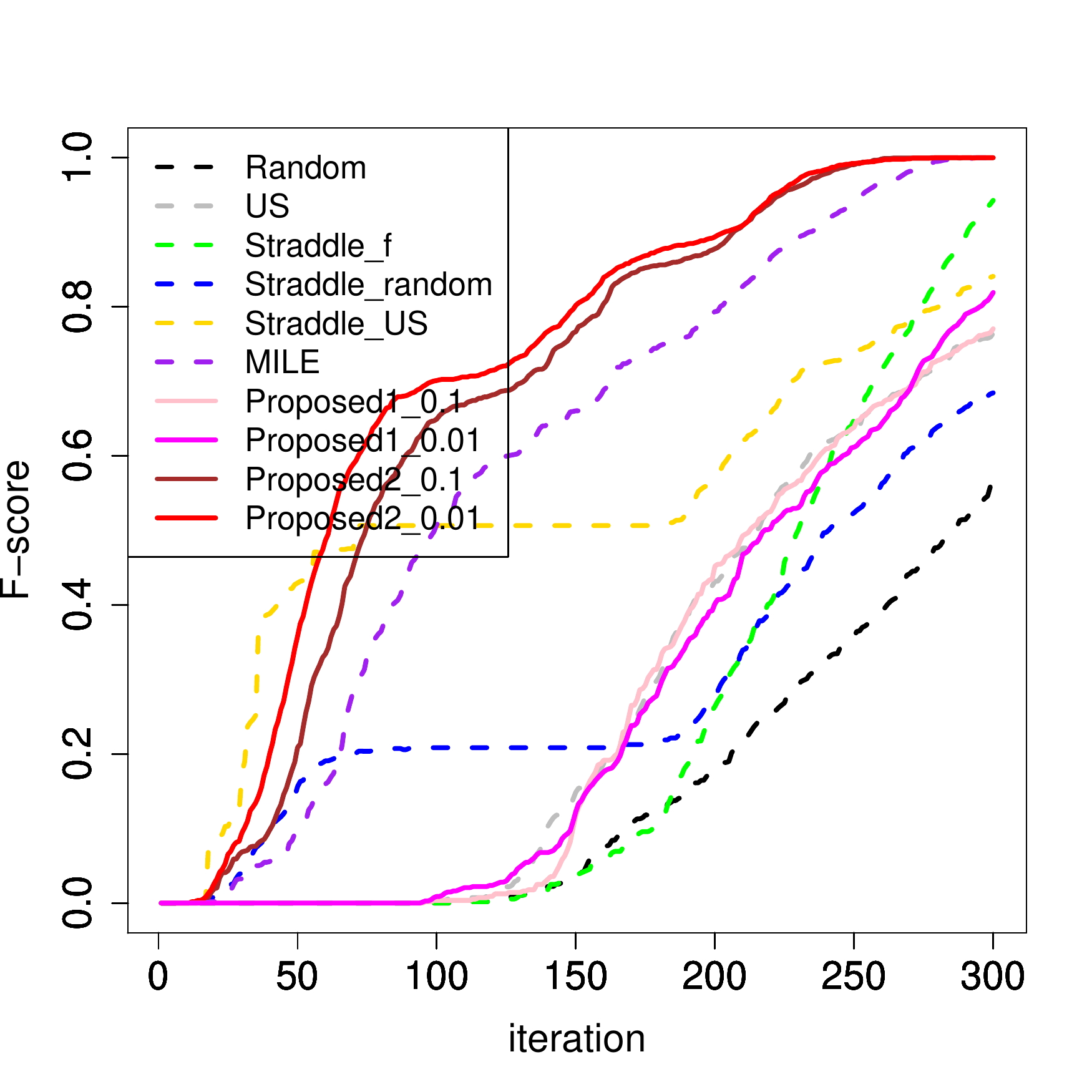} 
&
 \includegraphics[width=0.225\textwidth]{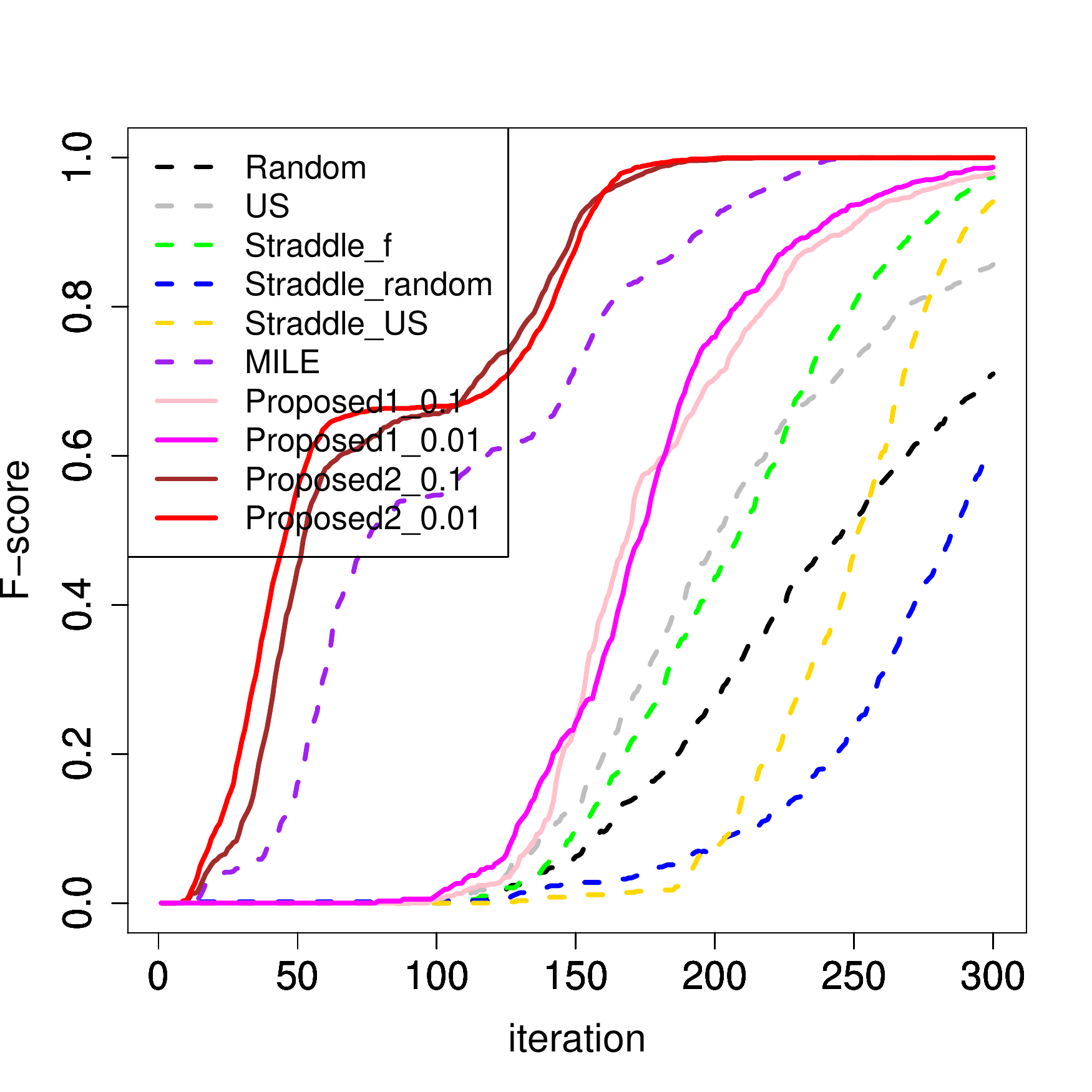} 
& 
\includegraphics[width=0.225\textwidth]{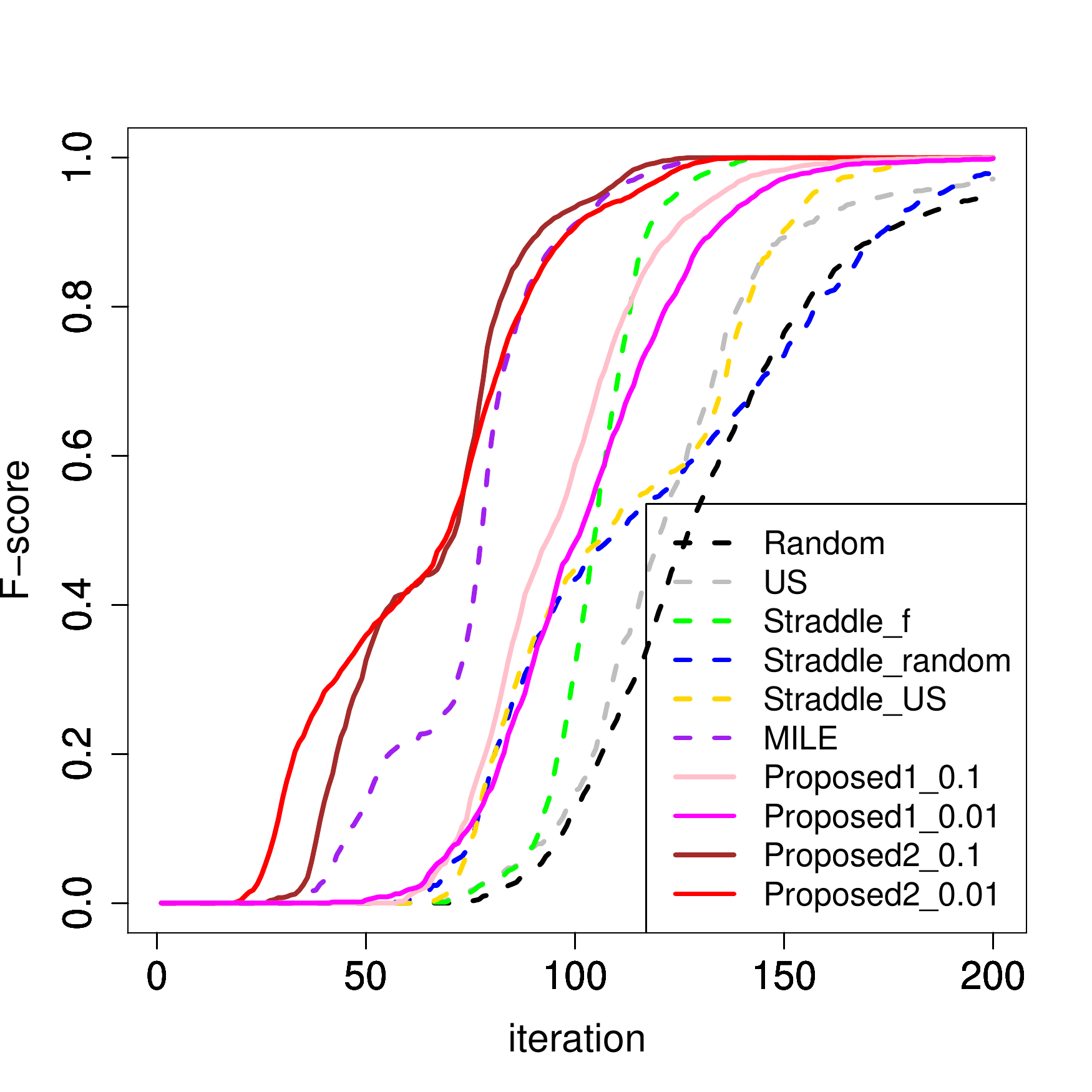} 
&
 \includegraphics[width=0.225\textwidth]{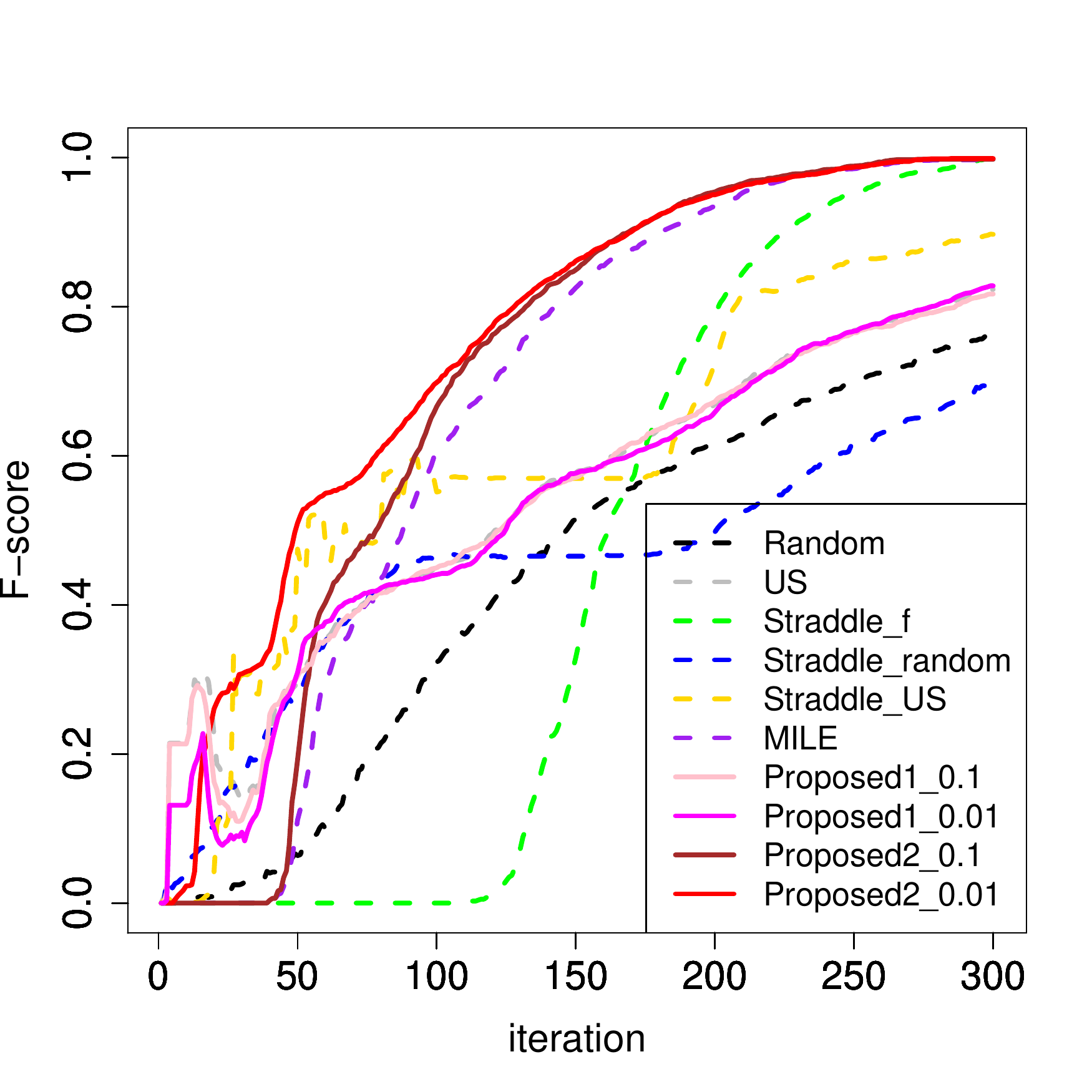} 
\\
Booth & Matyas &
 McCormick & Styblinski-Tang
 \end{tabular}
\end{center}
 \caption{Average F-score over 50 simulations with four benchmark functions when the distance function and reference distribution are $L1$-norm and Normal, respectively.}
\label{fig:exp2}
\end{figure}

\subsection{Computation time experiments}\label{subsec:comp_time}
In this section, we confirmed how much the computation time of \eqref{eq:AFteigi} can be improved by using Lemma \ref{lem:exp_cal}, \ref{lem:teigi_bound} and \ref{lem:approx_any_accuracy}.
We evaluated the computation time of \eqref{eq:AFteigi} when we performed the same experiment as in Subsection \ref{syn_experiment} using Proposed1\_$0.01$ and Proposed2\_$0.01$ for the Booth function. 
The experiments for Matyas, McCormick and Styblinski-Tang functions are described in the Appendix.
Here, as for the parameter settings, we considered only the case of $L1$-Normal in Table \ref{tab:setting1}. 
We compared the computation time of the following six methods for calculating \eqref{eq:AFteigi}:
\begin{description}
\item [Naive:] For each $({\bm x}^\ast,{\bm w}^\ast )$, we generate $M$ samples $y^\ast_1,\ldots,y^\ast_M$ from the posterior distribution of $f({\bm x}^\ast,{\bm w}^\ast )$, and approximate \eqref{eq:AFteigi} by
$$
\sum_{{\bm x} \in U_t } \frac{1}{M} \sum_{m=1}^M \1 [l^{(F)}_t ({\bm x};0|{\bm x}^\ast,{\bm w}^\ast,y^\ast_m)>\alpha],
$$
where we set $M=1000$.
\item [L1:] Compute \eqref{eq:AFteigi} using Lemma \ref{lem:exp_cal}.
\item [L2:] Compute \eqref{eq:AFteigi} using Lemma \ref{lem:exp_cal} and \ref{lem:teigi_bound}.
\item [L3 $(10^{-4})$:] Compute \eqref{eq:AFteigi} using Lemma \ref{lem:exp_cal}, \ref{lem:teigi_bound} and \ref{lem:approx_any_accuracy} with $\zeta =  (|\Omega|+1) 10^{-4}$. 
\item [L3 $(10^{-8})$:] Compute \eqref{eq:AFteigi} using Lemma \ref{lem:exp_cal}, \ref{lem:teigi_bound} and \ref{lem:approx_any_accuracy} with $\zeta =  (|\Omega|+1) 10^{-8}$. 
\item [L3 $(10^{-12})$:] Compute \eqref{eq:AFteigi} using Lemma \ref{lem:exp_cal}, \ref{lem:teigi_bound} and \ref{lem:approx_any_accuracy} with $\zeta =  (|\Omega|+1) 10^{-12}$. 
\end{description}
Under this setup, we took one initial point at random and ran the algorithms until the number of iterations reached to 300. 
Furthermore, for each trial $t$, we evaluated the computation time to calculate \eqref{eq:AFteigi} for all candidate points $({\bm x}^\ast,{\bm w}^\ast ) \in \mathcal{X} \times \Omega $, and calculated the average computation time over 300 trials.
From Table \ref{tab:time_Booth}, 
it can be confirmed that the computation time is improved as the proposed computational techniques are used. 
Moreover, comparing L3 $(10^{-4})$, L3 $(10^{-8})$ and L3 $(10^{-12})$, it can be confirmed that the computation time becomes shorter when a large $\zeta$ is used.
However, it can be seen that the computation time of L3 $(10^{-12})$ is still very small compared to the computation time of
 Naive, L1 and L2.
Therefore, from $|\Omega| =50$ and Lemma \ref{lem:approx_any_accuracy}, it implies that by using proposed computational techniques, we can improve the computation time significantly even if the error from the true $a_t ({\bm x}^\ast, {\bm w}^\ast )$ is kept to a very small value such as $51 \times 10^{-12}=5.1 \times 10^{-11}$.

\begin{table*}[!t]
  \begin{center}
    \caption{Computation time (second) for the Booth function setting}
\scalebox{0.85}{
    \begin{tabular}{c||c|c|c|c|c|c} \hline \hline
       & Naive & L1 & L2 & L3 $(10^{-4})$ & L3 $(10^{-8})$ & L3 $(10^{-12})$ \\ \hline 
   Proposed1\_$0.01$   & $138505.60 \pm 13334.87$  & $7621.59 \pm 1166.23$  &   $2370.02 \pm 586.94$    & $71.16 \pm 25.33$& $80.55 \pm 31.37$ & $86.73 \pm 35.34$   \\ \hline 
   Proposed2\_$0.01$   &  $106306.10 \pm 12331.01$ & $5835.06 \pm 1028.99$  &   $2608.30 \pm 976.06$    & $63.14 \pm 10.29$& $72.53 \pm 13.99$ & $78.74 \pm 16.29$   \\ \hline \hline
    \end{tabular}
}
    \label{tab:time_Booth}
  \end{center}
\end{table*}

\subsection{Real data experiments}\label{subsec:real}
We compared our proposed method with other existing methods by using the 
infection control problem \cite{kermack1927contribution}. 
We considered a simulation-based decision-making problem for an epidemic, which aims to determine an acceptable infection rate $x$ under an uncertain recovery rate $w$ with as few simulations as possible. 
The motivation for this simulation was to evaluate the tradeoff between economic risk and a controllable infection rate.
For example, if the infection rate $x$ is minimized by shutting down all economic activities, the economic risk will become extremely high. 
In contrast, if nothing is done, the infection rate will remain high, resulting in the spread of the disease, and economic risk will still be high. 
Therefore, we considered finding a target infection rate that can achieve an acceptable economic risk threshold $h$ with a probability of at least $\alpha$.
In this experiment, to simulate epidemic behavior, we used the SIR model \cite{kermack1927contribution}.
The model computes the evolution of the number of infected people by using an infection rate $x$ and recovery rate $w$.
In our experiment, we considered the infection rate as the design variable $x$ and the recovery rate as the environmental variable $w$ following an unknown distribution.
In addition, we regarded economic risk as a black-box function $f(x,w)$.
Note that similar numerical experiments were performed in \cite{iwazaki2020bayesian} under the setting where the distribution of $w$, $p^\dagger (w)$, is known. 
Furthermore, we rescaled the ranges of $x$ and $w$ in the interval $[-1,1]$. 
The input space $\mathcal{X} \times \Omega$ is defined as a set of grid points that uniformly cut the region $[-1,1] \times [-1,1]$ into $50 \times 50$. 
We used the following economic risk function $f(x,w)$:
$
f(x,w) = n_{\text{infected}} (x,w) - 150x, 
$
where $ n_{\text{infected}} (x,w)$ is the maximum number of infected people in a given period of time, calculated using the SIR model.
Note that this risk function was also used by \cite{iwazaki2020bayesian}, and in this experiment, we used the same function they used in their experiment.
Under this setup, we took one initial point at random and ran the algorithms until the number of iterations reached 100.
 From 50 Monte Carlo simulations, we calculated average F-scores, 
where we used the following parameters for all problem settings:
\begin{align*}
h=135,\ \alpha =0.9,\  \sigma^2=0.025,\ \sigma^2_f = 250^2,\  L=0.5, \ 
  \beta^{1/2}_t =4,\   \epsilon = 0.05.
\end{align*}
In this experiment, we used the following modified reference function as Normal:
$$p^\ast (w) =\frac{a(w)}{  \sum_{w \in \Omega} a(w)  },\quad a(w)=\frac{1}{\sqrt{0.1 \pi}} \exp (-w^2 /0.1 ) .$$

From Figure \ref{fig:exp5}, it can be confirmed that Proposed2 and MILE performed better than the others.

\begin{figure}[!t]
\begin{center}
 \begin{tabular}{cc}
\includegraphics[width=0.45\textwidth]{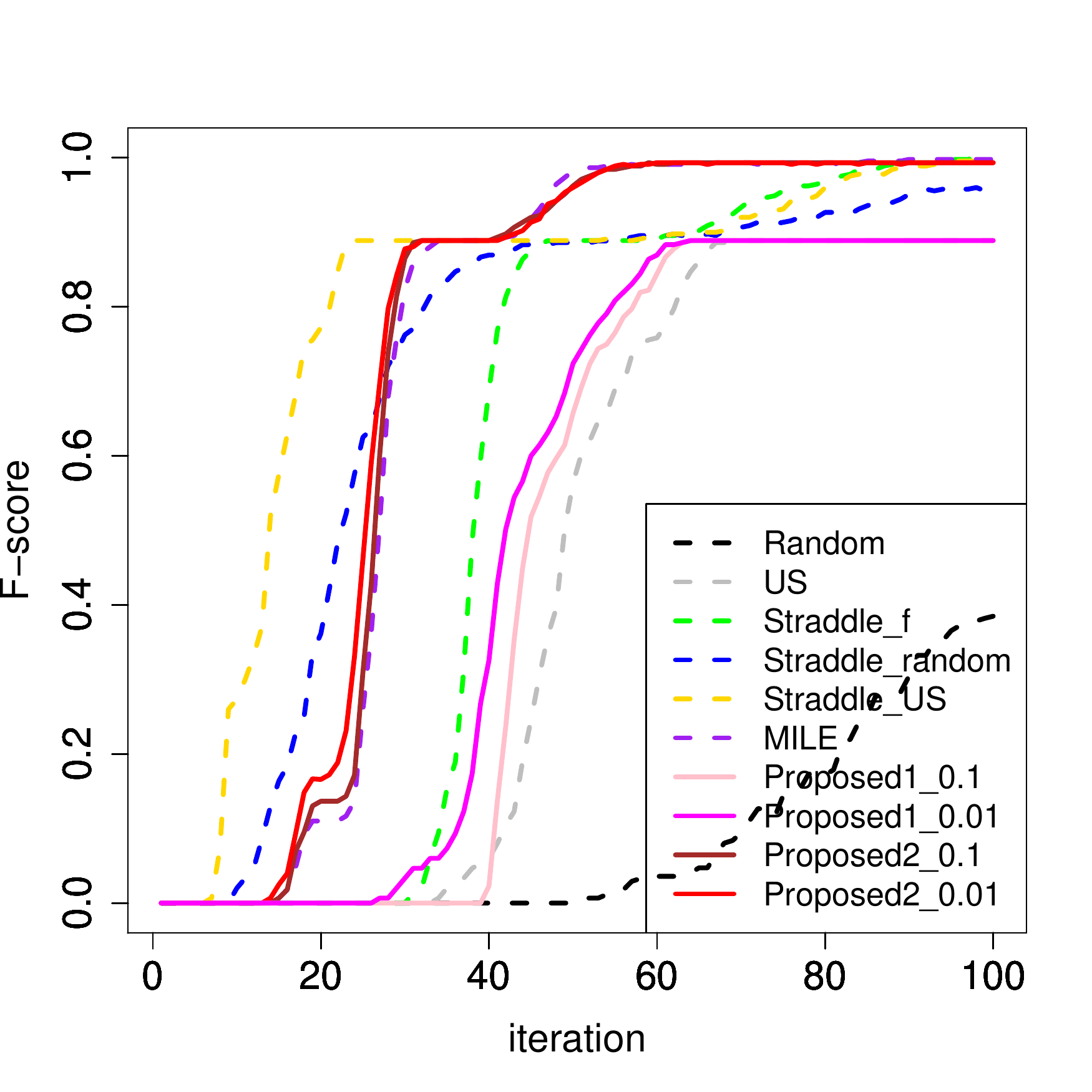} 
&
\includegraphics[width=0.45\textwidth]{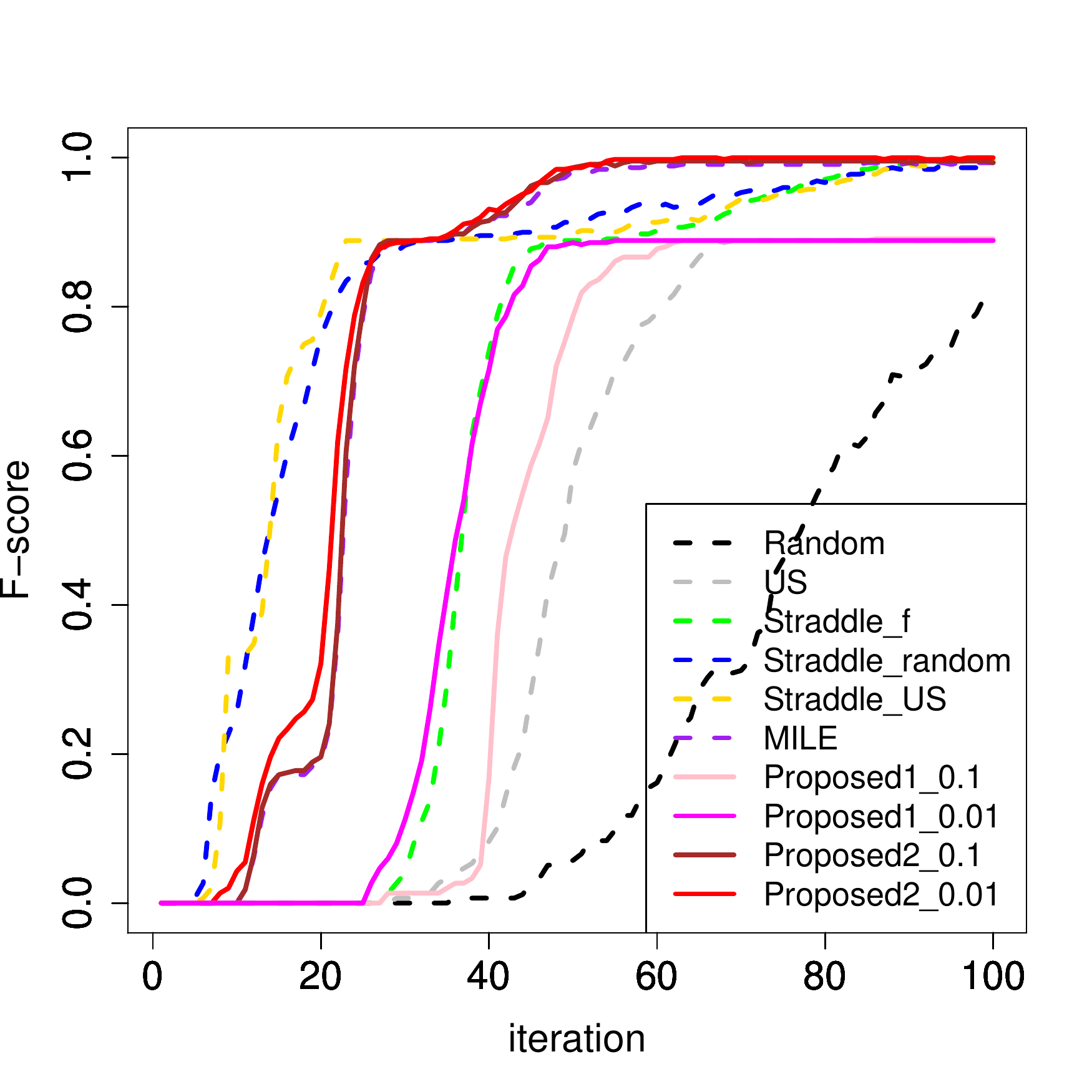} 
\\
$L1$-Uniform & $L1$-Normal 
 \end{tabular}
\end{center}
 \caption{Average F-score over 50 simulations in the infection control problem with two different settings.}
\label{fig:exp5}
\end{figure}

\section{Conclusion}
We proposed active learning methods for identifying the reliable set of distributionally robust probability threshold robustness (DRPTR) measure under uncertain environmental variables. 
We showed that our proposed methods satisfy theoretical guarantees about convergence and accuracy, and outperform existing methods in numerical experiments.

\section*{Acknowledgement}
This work was partially supported by MEXT KAKENHI (20H00601, 16H06538), JST CREST (JPMJCR1502), and 
RIKEN Center for Advanced Intelligence Project.

\bibliography{myref}
\bibliographystyle{plain}

\newpage
\section*{Appendix}

\setcounter{section}{0}
\renewcommand{\thesection}{\Alph{section}}
\renewcommand{\thesubsection}{\thesection.\arabic{subsection}}

\section{Proofs}
\subsection{Proof of Theorem \ref{thm:seido}}
In this section, we prove Theorem \ref{thm:seido}. 
First, we show two lemmas. 
\begin{lemma}\label{lem:unif_bound}
Let $\delta \in (0,1)$, and define $\beta_t =2 \log ( | \mathcal{X}\times \Omega | \pi^2 t^2 /(6 \delta ) )$. 
Then, with a probability of at least $1- \delta$, the following inequality holds:
\begin{align}
| f({\bm x},{\bm w} ) - \mu_{t-1} ({\bm x} ,{\bm w} ) |  \leq  \beta^{1/2}_t \sigma_{t-1} ({\bm x},{\bm w} ) ,
\quad \f ({\bm x},{\bm w} ) \in \mathcal{X} \times \Omega, \f t \geq 1. \nonumber
\end{align}
\end{lemma}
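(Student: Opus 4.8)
The plan is to establish a pointwise Gaussian concentration bound for each fixed pair $(\bm{x},\bm{w})$ and each step $t$, and then to combine these bounds via a union bound whose per-term weights are chosen so that the total failure probability telescopes to exactly $\delta$. This is the standard GP credible-interval argument, essentially Lemma~5.1 of the GP-UCB analysis in \cite{SrinivasGPUCB}, specialized to the finite domain $\mathcal{X}\times\Omega$.

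First I would exploit the modelling assumption that $f$ is a sample from the prior $\mathcal{GP}(0,k)$ with independent Gaussian observation noise $\mathcal{N}(0,\sigma^2)$. Conditioned on the first $t-1$ observations $\bm{y}_{t-1}$ together with the selected inputs $\{(\bm{x}_i,\bm{w}_i)\}_{i=1}^{t-1}$, the posterior law of the scalar $f(\bm{x},\bm{w})$ is exactly $\mathcal{N}(\mu_{t-1}(\bm{x},\bm{w}),\sigma^2_{t-1}(\bm{x},\bm{w}))$, so the normalized deviation is standard normal. Using the elementary tail inequality $\mathbb{P}(|Z|>c)\leq e^{-c^2/2}$ for $Z\sim\mathcal{N}(0,1)$ and $c\geq 0$, I obtain, for each fixed $(\bm{x},\bm{w})$ and $t$,
$$
\mathbb{P}\left(|f(\bm{x},\bm{w})-\mu_{t-1}(\bm{x},\bm{w})|>\beta^{1/2}_t\sigma_{t-1}(\bm{x},\bm{w})\right)\leq e^{-\beta_t/2}.
$$
Since this conditional bound is the same for every conditioning event, it also holds unconditionally after taking expectation over the history.

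Next I would substitute the prescribed $\beta_t=2\log(|\mathcal{X}\times\Omega|\pi^2 t^2/(6\delta))$, which yields $e^{-\beta_t/2}=6\delta/(|\mathcal{X}\times\Omega|\pi^2 t^2)$. Applying the union bound first over the finite set $\mathcal{X}\times\Omega$, which absorbs the $|\mathcal{X}\times\Omega|$ factor, and then over all $t\geq 1$, the total failure probability is bounded by
$$
\sum_{t\geq 1}\frac{6\delta}{\pi^2 t^2}=\frac{6\delta}{\pi^2}\sum_{t\geq 1}\frac{1}{t^2}=\frac{6\delta}{\pi^2}\cdot\frac{\pi^2}{6}=\delta,
$$
using the Basel identity $\sum_{t\geq 1}t^{-2}=\pi^2/6$. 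Taking complements gives the claimed uniform bound with probability at least $1-\delta$.

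There is no deep obstacle here; the one point requiring care is the adaptivity of the sampling, since each input $(\bm{x}_i,\bm{w}_i)$ is chosen on the basis of past data. I would handle this exactly as in the Bayesian setting of \cite{SrinivasGPUCB}: by conditioning on the full history \emph{before} invoking the Gaussian tail bound, the design points become deterministic relative to the only randomness being controlled, namely that of $f(\bm{x},\bm{w})$, so the per-step posterior Gaussianity and hence the pointwise bound remain valid regardless of how the points were selected.
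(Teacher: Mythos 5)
Your proof is correct and is essentially the same argument as the paper's: the paper simply invokes Lemma~5.1 of \cite{SrinivasGPUCB} with $D$ replaced by $\mathcal{X}\times\Omega$ and $\pi_t = \pi^2 t^2/6$, which is exactly the pointwise posterior-Gaussian tail bound plus weighted union bound that you spell out in full (including the correct handling of adaptively chosen inputs by conditioning on the history). The only difference is that you unpack the cited proof rather than quoting it, so there is nothing to add.
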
 
\begin{proof}
By replacing $D$ and $\pi_t $ in Lemma 5.1 of \cite{SrinivasGPUCB} with $\mathcal{X} \times \Omega$ and $\pi^2 t^2 /6$, respectively, 
we have Lemma \ref{lem:unif_bound}.
\end{proof}

\begin{lemma}\label{lem:nokori}
Let $\delta \in (0,1)$, $\xi >0$ and 
$
\eta = \min \left \{
\frac{\xi \sigma_{0,min} }{2}, \frac{\xi ^2 \delta \sigma_{0,min}}{8 |\mathcal{X}\times \Omega | }
\right \}.
$ 
Then, with a probability of at least $1-\delta/2$, the following holds for any ${\bm x} \in \mathcal{X} $ and $p({\bm w} ) \in \mathcal{A}$:
\begin{align}
\tilde{F}_{\eta,p} ({\bm x} ) \equiv \sum_{ {\bm w} \in \Omega }   \1 [h \geq f({\bm x},{\bm w} ) > h-\eta ] p({\bm w}) < \xi .\nonumber
\end{align}
\end{lemma}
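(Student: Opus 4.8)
The plan is to reduce the statement, which must hold uniformly over every design point $\bm{x}$ and over the entire continuum of candidate distributions $p\in\mathcal{A}$, to a single Markov inequality on the total number of inputs at which $f$ falls into the thin band $(h-\eta,h]$. This eliminates the dependence on $p$ entirely, which is what makes a union bound feasible.

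First I would record the relevant distributional fact: under the GP prior, $f(\bm{x},\bm{w})\sim\mathcal{N}(0,\sigma^2_0(\bm{x},\bm{w}))$ for each fixed $(\bm{x},\bm{w})$, and the density of a centered Gaussian is bounded above by $(\sqrt{2\pi}\,\sigma_0(\bm{x},\bm{w}))^{-1}\le(\sqrt{2\pi}\,\sigma_{0,min})^{-1}$, using the assumed lower bound $\sigma_{0,min}>0$. Integrating this density bound over an interval of width $\eta$ gives, for every $(\bm{x},\bm{w})$,
$$
\mathbb{P}(h-\eta< f(\bm{x},\bm{w})\le h)\le \frac{\eta}{\sqrt{2\pi}\,\sigma_{0,min}}.
$$

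Second, and this is the key step, I would remove the dependence on $p$. Writing $Z_{\bm{x},\bm{w}}=\1[h\ge f(\bm{x},\bm{w})>h-\eta]$ and $N=\sum_{(\bm{x},\bm{w})\in\mathcal{X}\times\Omega}Z_{\bm{x},\bm{w}}$ for the total band count, observe that for any $\bm{x}$ and any p.m.f.\ $p\in\mathcal{A}$ we have $\tilde{F}_{\eta,p}(\bm{x})=\sum_{\bm{w}}Z_{\bm{x},\bm{w}}\,p(\bm{w})\le\sum_{\bm{w}}Z_{\bm{x},\bm{w}}\le N$, since $0\le p(\bm{w})\le1$. Hence $\tilde{F}_{\eta,p}(\bm{x})\ge\xi>0$ can occur for some pair $(\bm{x},p)$ only if at least one band-crossing occurs; that is, the bad event $\{\e \bm{x},\,\e p\in\mathcal{A}:\tilde{F}_{\eta,p}(\bm{x})\ge\xi\}$ is contained in $\{N\ge1\}$. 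I would then bound this by Markov's inequality together with linearity of expectation and the per-point estimate above:
$$
\mathbb{P}(N\ge1)\le\mathbb{E}[N]=\sum_{(\bm{x},\bm{w})}\mathbb{P}(h-\eta<f\le h)\le\frac{|\mathcal{X}\times\Omega|\,\eta}{\sqrt{2\pi}\,\sigma_{0,min}}.
$$
Substituting the second term of the definition of $\eta$, namely $\eta\le\frac{\xi^2\delta\sigma_{0,min}}{8|\mathcal{X}\times\Omega|}$, collapses the right-hand side to $\frac{\xi^2\delta}{8\sqrt{2\pi}}$, which is at most $\delta/2$ throughout the relevant range of $\xi$; taking complements yields the claim.

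The main obstacle is precisely the second step: because $\mathcal{A}$ is an uncountable family, a naive union bound over $p$ is impossible, and the argument hinges on recognizing that the worst case over $p$ is already dominated by the purely combinatorial count $N$, whose expectation the Gaussian density bound controls. The remaining work is bookkeeping of constants, confirming $\frac{\xi^2\delta}{8\sqrt{2\pi}}\le\delta/2$ (and noting the conclusion is trivial when $\xi\ge1$, since $\tilde{F}_{\eta,p}\le1$ always as a sub-probability). Note that the first term $\frac{\xi\sigma_{0,min}}{2}$ in the definition of $\eta$ plays no role in this lemma; it is reserved for controlling the credible-interval width in the proof of Theorem \ref{thm:seido}.
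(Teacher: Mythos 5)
Your proof is correct, and it takes a genuinely different route from the paper's. The paper applies Chebyshev's inequality to each Bernoulli indicator $g_\eta(\bm x,\bm w)=\1[h\ge f(\bm x,\bm w)>h-\eta]$, with deviation level $\nu=(\delta/(2|\mathcal X\times\Omega|))^{-1/2}(\mathbb V[g_\eta(\bm x,\bm w)])^{1/2}$ chosen so that a union bound over the grid goes through; it then bounds the variance by the mean (Bernoulli), bounds the mean by $\eta/\sigma_{0,min}$ via the same Gaussian thin-band estimate you use, and disposes of the uncountable family $\mathcal A$ exactly as you do, by summing a pointwise bound against $p(\bm w)$. The resulting pointwise inequality $g_\eta<\eta/\sigma_{0,min}+\sqrt{2\eta|\mathcal X\times\Omega|/(\delta\sigma_{0,min})}\le\xi/2+\xi/2=\xi$ consumes \emph{both} entries of the min defining $\eta$ (the first controls the mean term, the second the Chebyshev term) and holds for every $\xi>0$ with no case analysis. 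Your argument replaces this second-moment step by a first-moment one: Markov's inequality on the band count $N$ gives $\mathbb P(N\ge 1)\le |\mathcal X\times\Omega|\,\eta/(\sqrt{2\pi}\,\sigma_{0,min})\le \xi^2\delta/(8\sqrt{2\pi})$, which is more elementary, uses only the second entry of the min, and for $\xi\le 1$ yields the strictly stronger conclusion that with probability at least $1-\delta/2$ no input falls in the band at all, so $\tilde F_{\eta,p}\equiv 0$. The price is the case split on $\xi$: note that at the boundary $\xi=1$ the conclusion is not literally trivial (strict inequality is required there), though this causes no gap since your Markov bound covers all $\xi\le 2(2\pi)^{1/4}\approx 3.17$. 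One small correction to your closing remark: in the paper the first entry $\xi\sigma_{0,min}/2$ is not ``reserved'' for Theorem \ref{thm:seido} --- it is consumed inside this very lemma's proof, to control the mean term in the Chebyshev bound; what your proof actually shows is that this entry is dispensable for the lemma altogether.
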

\begin{proof}
From Chebyshev's inequality, for any $\nu >0$ and $({\bm x},{\bm w} ) \in \mathcal{X} \times \Omega$, the following inequality holds: 
$$
\mathbb{P} (   |  g_\eta ({\bm x},{\bm w} ) - \mu^{(g_\eta)} ({\bm x},{\bm w} ) | \geq \nu ) \leq \frac{ \mathbb{V}[ g_\eta ({\bm x},{\bm w} ) ]    }{\nu ^2},
$$
where   $g_\eta ({\bm x},{\bm w} ) =    \1 [h \geq f({\bm x},{\bm w} ) > h-\eta ]$     and $\mu^{(g_\eta)} ({\bm x},{\bm w} )  = \mathbb{E} [ g_\eta ({\bm x},{\bm w} )]$. 
Hence, by replacing $\nu$ with $   ( \delta /( 2 | \mathcal{X} \times \Omega | ) )^{-1/2} ( \mathbb{V}[ g_\eta ({\bm x},{\bm w} ) ]  )^{1/2}$, with a probability of at least $1-\delta /2$, the following holds for any $({\bm x},{\bm w} ) \in \mathcal{X} \times \Omega$:
$$
 |  g_\eta ({\bm x},{\bm w} ) - \mu^{(g_\eta)} ({\bm x},{\bm w} ) | < \frac{ \sqrt{\mathbb{V}[ g_\eta ({\bm x},{\bm w} ) ] }  }{\sqrt{ \delta /( 2 | \mathcal{X} \times \Omega | ) }  }.
$$
This implies that 
\begin{align}
   g_\eta ({\bm x},{\bm w} )   <  \mu^{(g_\eta)} ({\bm x},{\bm w} )+\frac{ \sqrt{\mathbb{V}[ g_\eta ({\bm x},{\bm w} ) ] }  }{\sqrt{ \delta /( 2 | \mathcal{X} \times \Omega | ) }  }. \label{eq:ap1}
\end{align}
Moreover, noting that $ g_\eta ({\bm x},{\bm w} )$ follows Bernoulli distribution, we get
\begin{align}
\mathbb{V}[ g_\eta ({\bm x},{\bm w} ) ] = \mathbb{E}[ g_\eta ({\bm x},{\bm w} ) ] (1-\mathbb{E}[ g_\eta ({\bm x},{\bm w} ) ]) 
\leq \mathbb{E}[ g_\eta ({\bm x},{\bm w} ) ]  = \mu^{(g_\eta)} ({\bm x},{\bm w} ) . \label{eq:ap2}
\end{align}
In addition, $\mu^{(g_\eta)} ({\bm x},{\bm w} )$ can be expressed as 
$$
\mu^{(g_\eta)} ({\bm x},{\bm w} ) = \Phi  \left (  \frac{ h }{ \sigma_0 ({\bm x},{\bm w}) }   \right ) -\Phi  \left (  \frac{ h -\eta}{ \sigma_0 ({\bm x},{\bm w}) }   \right ).
$$
Furthermore, by using Taylor's expansion, for any $a <b$ it holds that 
$$
\Phi (b) = \Phi (a) +  \phi (c) (b-a)   \leq \Phi (a) + \phi(0) (b-a) \leq \Phi (a) + (b-a),
$$
where $c \in (a,b)$. Thus, we obtain 
\begin{align}
\mu^{(g_\eta)} ({\bm x},{\bm w} ) \leq \frac{ \eta}{\sigma_0 ({\bm x},{\bm w})} \leq \frac{ \eta}{\sigma_{0,min}}. \label{eq:ap3}
\end{align}
Thus, by substituting \eqref{eq:ap2} and \eqref{eq:ap3} into \eqref{eq:ap1}, we have 
$$
 g_\eta ({\bm x},{\bm w} )   <  \frac{ \eta}{\sigma_{0,min}} +    \sqrt{   \frac{ 2 \eta |\mathcal{X} \times \Omega|  }{ \delta \sigma_{0,min}    }  }.
$$
Hence, from the definition of $\eta $, we get 
$$
 g_\eta ({\bm x},{\bm w} )   < \frac{\xi}{2} + \sqrt{  \frac{\xi^2}{4} } = \xi .
$$
Therefore, for any $p({\bm w} ) \in \mathcal{A}$, the following holds: 
$$
\tilde{F}_{\eta,p} ({\bm x} ) =  \sum_{ {\bm w} \in \Omega }   g_\eta ({\bm x},{\bm w} ) p({\bm w} ) < \sum_{ {\bm w} \in \Omega } \xi  p({\bm w} ) = \xi.
$$
\end{proof}
By using Lemma \ref{lem:unif_bound} and \ref{lem:nokori}, we prove Theorem  \ref{thm:seido}.
\begin{proof}
Let $\delta \in (0,1)$ and $\beta_t =2 \log ( | \mathcal{X}\times \Omega | \pi^2 t^2 /(3 \delta ) )$. 
Then, from Lemma \ref{lem:unif_bound},  with a probability of at least $1-\delta/2 $ the following holds: 
\begin{align}
l_ t ({\bm x},{\bm w} ) \leq f({\bm x} ,{\bm w} )   \leq u_t ({\bm x},{\bm w} ), \quad \f  ({\bm x},{\bm w} ) \in \mathcal{X} \times \Omega, \f t \geq 1.  \label{eq:apt1}
\end{align}
Thus, from the definition of $\tilde{Q}_t ({\bm x},{\bm w};\eta )$, it holds that 
$$
\1 [ f({\bm x},{\bm w} ) >h ]  \leq \tilde{u}_t ({\bm x},{\bm w} ;\eta ).
$$
This implies that 
$$
F({\bm x} ) =  \inf _{p({\bm w} ) \in \mathcal{A} }   \sum_{ {\bm w} \in \Omega }   \1[f({\bm x},{\bm w} ) >h] p({\bm w}) 
\leq  \inf _{p({\bm w} ) \in \mathcal{A} }   \sum_{ {\bm w} \in \Omega } \tilde{u}_t ({\bm x},{\bm w} ;\eta ) p({\bm w}) =u^{(F)}_t ({\bm x};\eta ).
$$
Therefore, noting that the definition of $L_t $, we have 
\begin{align}
{\bm x} \in L_t  \Rightarrow    F({\bm x} )  \leq u^{(F)}_t ({\bm x};\eta ) \leq \alpha   .   \label{eq:a6}
\end{align}

On the other hand, for any ${\bm x} \in \mathcal{X}$ and $p({\bm w} ) \in \mathcal{A}$, it holds that 
$$
\sum_{ {\bm w} \in \Omega }   \1[f({\bm x},{\bm w} ) >h] p({\bm w}) + \tilde{F}_{\eta,p}  ({\bm x} ) = 
\sum_{ {\bm w} \in \Omega }   \1[f({\bm x},{\bm w} ) >h-\eta ] p({\bm w}).  
$$
Moreover, from Lemma \ref{lem:nokori}, with a probability of at least $1-\delta /2$, the following holds: 
\begin{align}
\sum_{ {\bm w} \in \Omega }   \1[f({\bm x},{\bm w} ) >h] p({\bm w}) + \xi > 
\sum_{ {\bm w} \in \Omega }   \1[f({\bm x},{\bm w} ) >h-\eta ] p({\bm w}).  \label{eq:apt2}
\end{align}
Thus, we get the following inequality:
\begin{align}
 \inf _{p({\bm w} ) \in \mathcal{A} } \left ( \sum_{ {\bm w} \in \Omega }   \1[f({\bm x},{\bm w} ) >h] p({\bm w})  +\xi \right ) 
=  F({\bm x} ) + \xi      > 
 \inf _{p({\bm w} ) \in \mathcal{A} }  \sum_{ {\bm w} \in \Omega }   \1[f({\bm x},{\bm w} ) >h-\eta] p({\bm w})  . \label{eq:ap7}
\end{align}
Furthermore, from the definition of  $\tilde{Q}_t ({\bm x},{\bm w};\eta )$, the following inequality holds: 
$$
\1 [ f({\bm x},{\bm w} ) >h -\eta]  \geq \tilde{l}_t ({\bm x},{\bm w} ;\eta ).
$$
Therefore, we have 
\begin{align}
 \inf _{p({\bm w} ) \in \mathcal{A} }  \sum_{ {\bm w} \in \Omega }   \1[f({\bm x},{\bm w} ) >h-\eta] p({\bm w}) \geq 
\inf _{p({\bm w} ) \in \mathcal{A} }   \sum_{ {\bm w} \in \Omega } \tilde{l}_t ({\bm x},{\bm w} ;\eta ) p({\bm w}) =l^{(F)}_t ({\bm x};\eta ).
 \label{eq:ap8}
\end{align}
Hence, by combining \eqref{eq:ap7} and \eqref{eq:ap8}, we obtain 
$$
l^{(F)}_t ({\bm x};\eta ) < F({\bm x} ) + \xi.
$$
Thus, from the definition of $H_t$, it holds that 
\begin{align}
{\bm x} \in H_t \Rightarrow \alpha <   F ({\bm x} ) + \xi   \Rightarrow \alpha - \xi < F({\bm x} ). \label{eq:ap9}
\end{align}
Hence, from \eqref{eq:a6}, \eqref{eq:ap9} and   the definition of $e_\alpha ({\bm x} ) $, the following inequality holds: 
\begin{align}
\max_{ {\bm x} \in \mathcal{X} }  e_\alpha ({\bm x} )    \leq \xi. \nonumber 
\end{align}
Finally, since both \eqref{eq:apt1} and \eqref{eq:apt2} hold with a probability of at least $1-\delta$, the following holds for any $ t \geq 1$:
$$
\mathbb{P}   \left ( \max_{ {\bm x} \in \mathcal{X} }  e_\alpha ({\bm x} )    \leq \xi \right ) \geq 1- \delta.
$$
\end{proof}


\subsection{Proof of Theorem \ref{thm:convergence1} and \ref{thm:convergence2}}
In this section, we prove Theorem \ref{thm:convergence1} and \ref{thm:convergence2}. 
First, we show related lemmas. 
\begin{lemma}\label{lem:eta_end}
Let $\eta >0$ and $\beta_t >0$. Suppose that the following holds for some $T \geq 1$: 
\begin{align}
2 \beta^{1/2}_T \sigma_{T-1} ({\bm x},{\bm w} )   < \eta, \quad \f ({\bm x},{\bm w} ) \in \mathcal{X} \times \Omega. \label{eq:app_b1}
\end{align}
Then, Algorithm \ref{alg:1} terminates after at most $T$ iterations. 
\end{lemma}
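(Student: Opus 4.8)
The plan is to show that the hypothesis \eqref{eq:app_b1} forces the indicator credible interval $\tilde{Q}_T(\bm{x},\bm{w};\eta)$ to collapse to a single point at every $(\bm{x},\bm{w})$, which makes the DRPTR bounds $l^{(F)}_T(\bm{x};\eta)$ and $u^{(F)}_T(\bm{x};\eta)$ coincide and therefore classifies every $\bm{x}$, emptying $U_T$. Unlike Theorem \ref{thm:seido}, this argument is purely deterministic: it invokes no confidence event, only the geometry of the intervals.

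First I would rewrite \eqref{eq:app_b1} as $u_T(\bm{x},\bm{w}) - l_T(\bm{x},\bm{w}) < \eta$ for all $(\bm{x},\bm{w}) \in \mathcal{X}\times\Omega$, since the width of the credible interval $Q_T(\bm{x},\bm{w})$ used at trial $T$ equals $2\beta^{1/2}_T \sigma_{T-1}(\bm{x},\bm{w})$ in the indexing convention of Lemma \ref{lem:unif_bound}. The crucial step is to rule out the middle (uncertain) branch in the definition of $\tilde{Q}_T(\bm{x},\bm{w};\eta)$: that branch requires simultaneously $l_T(\bm{x},\bm{w}) \le h-\eta$ and $u_T(\bm{x},\bm{w}) > h$, whence $u_T - l_T > h - (h-\eta) = \eta$, contradicting the width bound. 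Consequently every $(\bm{x},\bm{w})$ falls into the first or the third branch, so $\tilde{l}_T(\bm{x},\bm{w};\eta) = \tilde{u}_T(\bm{x},\bm{w};\eta) \in \{0,1\}$.

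Next, for each fixed $\bm{x}$ the two summands agree for every $\bm{w}$, hence $\sum_{\bm{w}\in\Omega} \tilde{l}_T(\bm{x},\bm{w};\eta)\, p(\bm{w}) = \sum_{\bm{w}\in\Omega} \tilde{u}_T(\bm{x},\bm{w};\eta)\, p(\bm{w})$ for every $p(\bm{w}) \in \mathcal{A}$; taking the infimum over $\mathcal{A}$ on both sides gives $l^{(F)}_T(\bm{x};\eta) = u^{(F)}_T(\bm{x};\eta)$ by \eqref{eq:F_upper_lower}. By the classification rule, each $\bm{x}$ then satisfies either $l^{(F)}_T(\bm{x};\eta) > \alpha$, placing it in $H_T$, or $u^{(F)}_T(\bm{x};\eta) \le \alpha$, placing it in $L_T$; since the two bounds are equal these cases are exhaustive, so $U_T = \mathcal{X}\setminus(H_T\cup L_T) = \emptyset$ and the while loop halts after at most $T$ trials.

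The only real obstacle is bookkeeping around the indexing: one must ensure the $\sigma_{T-1}$ appearing in \eqref{eq:app_b1} is matched to the credible interval actually used to form $H_T$ and $L_T$ at trial $T$, so that the width bound applies to the right quantities. Everything else reduces to the elementary observation that an interval of length below $\eta$ cannot lie below $h-\eta$ at its left endpoint while exceeding $h$ at its right endpoint.
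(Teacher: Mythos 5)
Your proof is correct and follows essentially the same route as the paper's: both use the width identity $u_T(\bm{x},\bm{w}) - l_T(\bm{x},\bm{w}) = 2\beta^{1/2}_T\sigma_{T-1}(\bm{x},\bm{w}) < \eta$ to exclude the middle branch $[0,1]$ of $\tilde{Q}_T(\bm{x},\bm{w};\eta)$, conclude $\tilde{l}_T = \tilde{u}_T$ everywhere, pass this equality through the infimum in \eqref{eq:F_upper_lower} to get $l^{(F)}_T(\bm{x};\eta) = u^{(F)}_T(\bm{x};\eta)$, and hence place every $\bm{x}$ in $H_T$ or $L_T$. The only cosmetic difference is that you argue by contradiction on the middle branch while the paper runs the two-case analysis on $l_T \lessgtr h-\eta$ directly; these are logically identical.
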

\begin{proof}
From the definition of $\tilde{Q}_t ({\bm x},{\bm w};\eta )$, if $l_T ({\bm x},{\bm w} ) >h-\eta$, 
then $\tilde{l}_T ({\bm x},{\bm w};\eta ) = \tilde{u}_T ({\bm x},{\bm w};\eta ) =1$. 
On the other hand, noting that $u_T ({\bm x},{\bm w} ) -l_T ({\bm x},{\bm w} ) =2 \beta^{1/2}_T \sigma_{T-1} ({\bm x},{\bm w} )  $ 
and \eqref{eq:app_b1}, if 
$l_T ({\bm x},{\bm w} ) \leq h-\eta$, then $u_T ({\bm x},{\bm w} ) \leq h$. 
This implies that $\tilde{l}_T ({\bm x},{\bm w};\eta ) = \tilde{u}_T ({\bm x},{\bm w};\eta ) =0$. 
Thus, under  \eqref{eq:app_b1}, the following holds for any $({\bm x},{\bm w} ) \in \mathcal{X} \times \Omega$:
$$
\tilde{l}_T ({\bm x},{\bm w};\eta ) = \tilde{u}_T ({\bm x},{\bm w};\eta ). 
$$
Hence, from the definitions of $l^{(F)}_t ({\bm x},{\bm w};\eta)$ and  $u^{(F)}_t ({\bm x},{\bm w};\eta)$, we have $l^{(F)}_t ({\bm x},{\bm w};\eta) = u^{(F)}_t ({\bm x},{\bm w};\eta)$. 
Therefore, for any ${\bm x} \in \mathcal{X}$, ${\bm x}$ satisfies ${\bm x} \in H_T $ or ${\bm x} \in L_T$, i.e., $U_T = \emptyset$.
\end{proof}
\begin{lemma}\label{lem:PTRMILE_bound}
Let $\eta >0$ and $\beta _t >0$. Suppose that the following inequalities hold for some $({\bm x}^\ast,{\bm w}^\ast ) \in \mathcal{X} \times \Omega$:
\begin{align}
\sigma^{-2} \sigma^2_{t-1} ({\bm x}^\ast,{\bm w}^\ast ) \beta^{1/2}_t &< \frac{\eta }{2}, \label{eq:app_b2} \\
\sigma^{-2} \sigma^2_{t-1} ({\bm x}^\ast,{\bm w}^\ast )  &< \eta^2/4. \label{eq:app_b3}
\end{align}
Then, \eqref{eq:AFteigi} can be bounded as
\begin{align}
a_{t-1} ({\bm x}^\ast,{\bm w}^\ast ) \leq | \mathcal{X} |  2^{|\Omega|} \frac{1}{\sqrt{2 \pi} } \exp \left (
-\frac{\sigma^2 \eta^2}{8\sigma^2_{t-1} ({\bm x}^\ast,{\bm w}^\ast)}
\right ). \nonumber 
\end{align}
\end{lemma}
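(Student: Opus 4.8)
The plan is to bound each summand of \eqref{eq:AFteigi}, namely $\mathbb{E}_{y^\ast}[\1[l^{(F)}_{t-1}({\bm x};0\mid{\bm x}^\ast,{\bm w}^\ast,y^\ast)>\alpha]]=\mathbb{P}_{y^\ast}(l^{(F)}_{t-1}({\bm x};0\mid\cdots)>\alpha)$, by a single Gaussian tail probability that is \emph{uniform} in ${\bm x}\in U_{t-1}$, and then sum over the unclassified set. Here $y^\ast\sim\mathcal{N}(\mu_{t-1}({\bm x}^\ast,{\bm w}^\ast),\,\sigma^2_{t-1}({\bm x}^\ast,{\bm w}^\ast)+\sigma^2)$. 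The first key step is a ``flip'' argument that uses ${\bm x}\in U_{t-1}$: since ${\bm x}\notin H_{t-1}$ we have $l^{(F)}_{t-1}({\bm x};\eta)\le\alpha$. Whenever the added observation perturbs the lower bounds by at most $\eta$, i.e. $l_{t-1}({\bm x},{\bm w}\mid\cdots)\le l_{t-1}({\bm x},{\bm w})+\eta$ for all ${\bm w}$, the pointwise inequality $\tilde{l}_{t-1}({\bm x},{\bm w};0\mid\cdots)=\1[l_{t-1}({\bm x},{\bm w}\mid\cdots)>h]\le\1[l_{t-1}({\bm x},{\bm w})>h-\eta]=\tilde{l}_{t-1}({\bm x},{\bm w};\eta)$ holds, and taking $\inf_{p\in\mathcal{A}}$ of the corresponding weighted sums (which preserves a termwise inequality) gives $l^{(F)}_{t-1}({\bm x};0\mid\cdots)\le l^{(F)}_{t-1}({\bm x};\eta)\le\alpha$. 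Hence the event $\{l^{(F)}_{t-1}({\bm x};0\mid\cdots)>\alpha\}$ forces the existence of some ${\bm w}$ with $l_{t-1}({\bm x},{\bm w}\mid\cdots)-l_{t-1}({\bm x},{\bm w})>\eta$.

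Next I would control this perturbation through the rank-one GP update formulas. Writing the difference $l_{t-1}({\bm x},{\bm w}\mid\cdots)-l_{t-1}({\bm x},{\bm w})$ as a mean shift $\frac{k_{t-1}(({\bm x},{\bm w}),({\bm x}^\ast,{\bm w}^\ast))}{\sigma^2_{t-1}({\bm x}^\ast,{\bm w}^\ast)+\sigma^2}\bigl(y^\ast-\mu_{t-1}({\bm x}^\ast,{\bm w}^\ast)\bigr)$ plus a nonnegative variance-reduction term $\beta^{1/2}_t\bigl(\sigma_{t-1}({\bm x},{\bm w})-\sigma_{t-1}({\bm x},{\bm w}\mid\cdots)\bigr)$, I would bound the latter using Cauchy--Schwarz $|k_{t-1}(({\bm x},{\bm w}),({\bm x}^\ast,{\bm w}^\ast))|\le\sigma_{t-1}({\bm x},{\bm w})\sigma_{t-1}({\bm x}^\ast,{\bm w}^\ast)$, the elementary inequality $\sqrt{a}-\sqrt{b}\le\sqrt{a-b}$ applied to the two posterior variances, and the prior bound $\sigma_{t-1}({\bm x},{\bm w})\le\sigma_0({\bm x},{\bm w})\le 1$; this yields the uniform estimate $\sigma^{-2}\sigma^2_{t-1}({\bm x}^\ast,{\bm w}^\ast)\beta^{1/2}_t$, which assumption \eqref{eq:app_b2} caps below $\eta/2$. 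Consequently the event forces the mean shift alone to exceed $\eta/2$; bounding its coefficient by $\sigma_{t-1}({\bm x}^\ast,{\bm w}^\ast)/(\sigma^2_{t-1}({\bm x}^\ast,{\bm w}^\ast)+\sigma^2)$ gives the ${\bm x}$-independent threshold $|y^\ast-\mu_{t-1}({\bm x}^\ast,{\bm w}^\ast)|>\tau$ with $\tau=\eta(\sigma^2_{t-1}({\bm x}^\ast,{\bm w}^\ast)+\sigma^2)/(2\sigma_{t-1}({\bm x}^\ast,{\bm w}^\ast))$.

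Finally, standardizing $y^\ast$ gives $z:=\tau/\sqrt{\sigma^2_{t-1}({\bm x}^\ast,{\bm w}^\ast)+\sigma^2}\ge\eta\sigma/(2\sigma_{t-1}({\bm x}^\ast,{\bm w}^\ast))$, and assumption \eqref{eq:app_b3} is precisely what guarantees $z>1$; then the Mills-ratio bound $\mathbb{P}(Z>z)\le\frac{1}{\sqrt{2\pi}}e^{-z^2/2}$ (valid for $z\ge 1$) delivers exactly the factor $\exp\bigl(-\sigma^2\eta^2/(8\sigma^2_{t-1}({\bm x}^\ast,{\bm w}^\ast))\bigr)$. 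Summing over ${\bm x}\in U_{t-1}$ contributes at most $|\mathcal{X}|$, and a crude union bound over the at most $2^{|\Omega|}$ subsets of $\Omega$ that could realize a qualifying ``flip pattern'' produces the prefactor $2^{|\Omega|}$, giving the stated bound; in fact a two-sided tail already suffices, with the harmless factor $2\le 2^{|\Omega|}$. I expect the main obstacle to be Step two, the perturbation decomposition: one must write out the single-observation GP update carefully, bound the variance-reduction term uniformly so that \eqref{eq:app_b2} absorbs it into $\eta/2$, and simultaneously track that \eqref{eq:app_b3} forces the standardized threshold above $1$, since both assumptions must be used in exactly these two distinct roles for the clean exponent to emerge.
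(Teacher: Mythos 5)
Your overall architecture is sound and in fact mirrors the paper's own proof: the contradiction with ${\bm x}\in U_{t-1}$ (the paper phrases it per flip pattern, showing every ${\bm b}\in\mathcal{B}$ must contain a coordinate with $l_t({\bm x},{\bm w}_{s_{\bm b}})\le h-\eta$), the rank-one GP update split into a mean shift plus a nonnegative variance-reduction term, assumption \eqref{eq:app_b2} absorbing the variance-reduction term into $\eta/2$, assumption \eqref{eq:app_b3} guaranteeing the standardized threshold exceeds $1$ so that $\Phi(-z)\le\int_z^\infty u\,\phi(u)\,{\rm d}u=\phi(z)$, and the final counting $|U_{t-1}|\le|\mathcal{X}|$ and $2\le 2^{|\Omega|}$. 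Your single two-sided tail, uniform over ${\bm w}$, is even a small improvement: the paper instead sums one-sided tails over the up-to-$2^{|\Omega|}$ patterns in $\mathcal{B}$, so your route gives the constant $2$ where the paper needs $2^{|\Omega|}$.

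However, there is a genuine gap in exactly the step you flagged as the main obstacle: the bound on the variance-reduction term. Cauchy--Schwarz combined with $\sqrt{a}-\sqrt{b}\le\sqrt{a-b}$ gives $\sigma_{t-1}({\bm x},{\bm w})-\sigma_{t-1}({\bm x},{\bm w}\mid{\bm x}^\ast,{\bm w}^\ast)\le |k_{t-1}|/\sqrt{\sigma^2_{t-1}({\bm x}^\ast,{\bm w}^\ast)+\sigma^2}\le\sigma^{-1}\sigma_{t-1}({\bm x}^\ast,{\bm w}^\ast)$, i.e.\ the \emph{first} power of $\sigma_{t-1}({\bm x}^\ast,{\bm w}^\ast)$, not the second, so the quantity you must absorb is $\beta^{1/2}_t\sigma^{-1}\sigma_{t-1}({\bm x}^\ast,{\bm w}^\ast)$ --- and \eqref{eq:app_b2} does not control it. Concretely, take $\sigma=1$, $\eta=0.1$, $\sigma_{t-1}({\bm x}^\ast,{\bm w}^\ast)=10^{-2}$, $\beta^{1/2}_t=10$: then \eqref{eq:app_b2} holds ($10^{-3}<0.05$) and \eqref{eq:app_b3} holds ($10^{-4}<0.0025$), yet $\beta^{1/2}_t\sigma^{-1}\sigma_{t-1}({\bm x}^\ast,{\bm w}^\ast)=0.1>\eta/2$, so your argument cannot conclude that the mean shift exceeds $\eta/2$. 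The fix is to replace $\sqrt{a}-\sqrt{b}\le\sqrt{a-b}$ by the identity $\sqrt{a}-\sqrt{b}=(a-b)/(\sqrt{a}+\sqrt{b})\le(a-b)/\sqrt{a}$: since $a-b=k^2_{t-1}/(\sigma^2_{t-1}({\bm x}^\ast,{\bm w}^\ast)+\sigma^2)\le\sigma^2_{t-1}({\bm x},{\bm w})\,\sigma^2_{t-1}({\bm x}^\ast,{\bm w}^\ast)/(\sigma^2_{t-1}({\bm x}^\ast,{\bm w}^\ast)+\sigma^2)$, dividing by $\sqrt{a}=\sigma_{t-1}({\bm x},{\bm w})$ cancels one factor and, using $\sigma_{t-1}({\bm x},{\bm w})\le\sigma_0({\bm x},{\bm w})\le1$, yields the bound $\sigma^{-2}\sigma^2_{t-1}({\bm x}^\ast,{\bm w}^\ast)$ --- exactly the paper's inequality \eqref{eq:app_b11} (equation (39) of the RMILE paper) --- which \eqref{eq:app_b2} does cap below $\eta/2$. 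With that single substitution, the rest of your proof goes through and delivers the stated bound.
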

\begin{proof}
First, we define the set $\mathcal{B}$ as 
$$
\mathcal{B} = \left \{   {\bm b} = (b_1,\ldots, b_{|\Omega|} ) \in \{0,1\}^{|\Omega|} \middle  |  \inf _{p ({\bm w} ) \in \mathcal{A} } \sum_{j=1}^{|\Omega|} p({\bm w}_j )  b_j  > \alpha \right \}.
$$
Moreover, for each ${\bm b} \in \mathcal{B}$, let ${ N}^{({\bm b} ) } $ be a subset of $\{1,\ldots, |\Omega| \}$ satisfying 
$$
\f s \in { N}^{({\bm b} ) } , \ b_s =1.
$$
 Then, the following holds for any ${\bm x} \in U_t$:
\begin{align}
& \mathbb{E}_{y^\ast}  [ \1 [l^{(F)}_t ({\bm x} ;0 |{\bm x}^\ast,{\bm w}^\ast, y^\ast) > \alpha ] ]  \nonumber \\
&=\mathbb{P}_{y^\ast}  [ (\1 [l_t ({\bm x} ,{\bm w}_1 |{\bm x}^\ast,{\bm w}^\ast, y^\ast) > h ] , \ldots , \1 [l_t ({\bm x} ,{\bm w}_{|\Omega|} |{\bm x}^\ast,{\bm w}^\ast, y^\ast) > h ])^\top \in \mathcal{B} ] \nonumber \\
&=\sum_{ {\bm b} \in \mathcal{B}} \mathbb{P}_{y^\ast}  [ \1 [l_t ({\bm x} ,{\bm w}_1 |{\bm x}^\ast,{\bm w}^\ast, y^\ast) > h ]=b_1 , \ldots , \1 [l_t ({\bm x} ,{\bm w}_{|\Omega|} |{\bm x}^\ast,{\bm w}^\ast, y^\ast) > h ] = b_{|\Omega|} ] \nonumber \\
&\leq 
\sum_{ {\bm b} \in \mathcal{B}} \mathbb{P}_{y^\ast}  [ \f s \in N^{({\bm b}) }, \1 [l_t ({\bm x} ,{\bm w}_s |{\bm x}^\ast,{\bm w}^\ast, y^\ast) > h ]=b_s  ], \label{eq:app_b5}
\end{align}
where $l_t ({\bm x},{\bm w}_j | {\bm x}^\ast,{\bm w}^\ast,y^\ast )$ is the lower confidence bound of $f({\bm x},{\bm w}_j )$ after adding $({\bm x}^\ast,{\bm w}^\ast,y^\ast )$ to $\{ ({\bm x}_i ,{\bm w}_i,y_i )\} _{i=1}^t $. 
Next, for any $N^{({\bm b} ) } $, there exists $s_{\bm b}  \in N^{({\bm b} ) } $ such that 
\begin{align}
l_t ({\bm x} ,{\bm w} _{s_{\bm b}} )  \leq  h-\eta .\label{eq:app_b6}
\end{align}
In fact, if $l_t ({\bm x} ,{\bm w} _{s_{\bm b}} )  >  h-\eta $ for any $s \in N^{({\bm b} ) } $, then we get 
$$
 (\1 [l_t ({\bm x} ,{\bm w}_1 ) > h-\eta ] , \ldots , \1 [l_t ({\bm x} ,{\bm w}_{|\Omega|}) > h -\eta])^\top \in \mathcal{B} ,
$$
which contradicts  ${\bm x} \in U_t$. 
Furthermore, from Lemma 2 of \cite{zanette2018robust}, 
$\mathbb{P}_{y^\ast}  [   l_t ({\bm x} ,{\bm w}_{s_{\bm b} } |{\bm x}^\ast,{\bm w}^\ast, y^\ast) > h   ]$ can be calculated as
\begin{align}
\mathbb{P}_{y^\ast}  [l_t ({\bm x} ,{\bm w}_{s_{\bm b} } |{\bm x}^\ast,{\bm w}^\ast, y^\ast) > h   ] =
\Phi \left ( 
\frac{\sqrt{\sigma^2_{t-1} ({\bm x}^\ast,{\bm w}^\ast )   +\sigma^2 }}{|k_{t-1} (({\bm x},{\bm w}_{s_{{\bm b}}}),  ({\bm x}^\ast,{\bm w}^\ast)   )|} (\mu_{t-1}  ({\bm x},{\bm w}_{s_{\bm b} } )  -\beta^{1/2}_t \sigma_{t-1}   ({\bm x},{\bm w}_{s_{\bm b} } | {\bm x}^\ast,{\bm w}^\ast )    -h  ) 
\right ),\label{eq:app_b7}
\end{align}
where $\sigma_{t-1}   ({\bm x},{\bm w}_{s_{\bm b} } | {\bm x}^\ast,{\bm w}^\ast )   $ is the posterior variance of $f ({\bm x},{\bm w}_{s_{\bm b} }) $ after adding $({\bm x}^\ast,{\bm w}^\ast,y^\ast )$ to $\{ ({\bm x}_i ,{\bm w}_i,y_i ) \} _{i=1}^t $. 
Moreover, by using \eqref{eq:app_b6} we obtain 
\begin{align}
&\mu_{t-1}  ({\bm x},{\bm w}_{s_{\bm b} } )  -\beta^{1/2}_t \sigma_{t-1}   ({\bm x},{\bm w}_{s_{\bm b} } | {\bm x}^\ast,{\bm w}^\ast )    -h \nonumber \\
&=\mu_{t-1}  ({\bm x},{\bm w}_{s_{\bm b} } ) -\beta^{1/2}_t  \sigma_{t-1}   ({\bm x},{\bm w}_{s_{\bm b} })+ \beta^{1/2}_t  \sigma_{t-1}   ({\bm x},{\bm w}_{s_{\bm b} })   -\beta^{1/2}_t \sigma_{t-1}   ({\bm x},{\bm w}_{s_{\bm b} } | {\bm x}^\ast,{\bm w}^\ast )    -h \nonumber \\ 
&= l_t ({\bm x} ,{\bm w} _{s_{\bm b}} )   + \beta^{1/2}_t  \sigma_{t-1}   ({\bm x},{\bm w}_{s_{\bm b} })   -\beta^{1/2}_t \sigma_{t-1}   ({\bm x},{\bm w}_{s_{\bm b} } | {\bm x}^\ast,{\bm w}^\ast )    -h  \nonumber  \\
&\leq  -\eta + \beta^{1/2}_t  (    \sigma_{t-1}   ({\bm x},{\bm w}_{s_{\bm b} })   - \sigma_{t-1}   ({\bm x},{\bm w}_{s_{\bm b} } | {\bm x}^\ast,{\bm w}^\ast )      ). \label{eq:app_b8}
\end{align}
In addition, the following three inequalities hold:
\begin{align}
\sigma & \leq \sqrt{\sigma^2_{t-1} ({\bm x}^\ast,{\bm w}^\ast )   +\sigma^2 } , \label{eq:app_b9} \\
|k_{t-1} (({\bm x},{\bm w}_{s_{{\bm b}}}),  ({\bm x}^\ast,{\bm w}^\ast)   )| &\leq \sigma_{t-1} ({\bm x},{\bm w}_{s_{{\bm b}}}) \sigma_{t-1}  ({\bm x}^\ast,{\bm w}^\ast)   \leq 
 \sigma_{0} ({\bm x},{\bm w}_{s_{{\bm b}}}) \sigma_{t-1}  ({\bm x}^\ast,{\bm w}^\ast) \leq \sigma_{t-1}  ({\bm x}^\ast,{\bm w}^\ast), \label{eq:app_b10} \\
 \sigma_{t-1}   ({\bm x},{\bm w}_{s_{\bm b} })   - \sigma_{t-1}   ({\bm x},{\bm w}_{s_{\bm b} } | {\bm x}^\ast,{\bm w}^\ast ) &  \leq 
\frac{  \sigma_{t-1} ({\bm x},{\bm w}_{s_{{\bm b}}}) \sigma^2_{t-1}  ({\bm x}^\ast,{\bm w}^\ast)      }{  \sigma^2_{t-1} ({\bm x}^\ast,{\bm w}^\ast )   +\sigma^2    } \leq 
\frac{  \sigma_{0} ({\bm x},{\bm w}_{s_{{\bm b}}}) \sigma^2_{t-1}  ({\bm x}^\ast,{\bm w}^\ast)      }{ \sigma^2    } \leq 
\frac{ \sigma^2_{t-1}  ({\bm x}^\ast,{\bm w}^\ast)      }{ \sigma^2    } , \label{eq:app_b11}
\end{align}
where the first, second and third inequalities in \eqref{eq:app_b10} can be derived from H\"{o}lder's inequality, monotonicity of the posterior variance and the assumption $\max_{ ({\bm x},{\bm w} ) \in \mathcal{X} \times \Omega } \sigma^2_0 ({\bm x},{\bm w} ) \leq 1$,  respectively.  
Similarly, the first inequality in \eqref{eq:app_b11} can be derived from the equation (39) of \cite{zanette2018robust}.
 Therefore, by substituting \eqref{eq:app_b8}--\eqref{eq:app_b11} and \eqref{eq:app_b2} into \eqref{eq:app_b7}, 
we obtain the following inequality:
\begin{align}
\mathbb{P}_{y^\ast}  [l_t ({\bm x} ,{\bm w}_{s_{\bm b} } |{\bm x}^\ast,{\bm w}^\ast, y^\ast) > h   ] \leq
\Phi \left ( 
\frac{\sigma }{  \sigma_{t-1} ({\bm x}^\ast,{\bm w}^\ast ) } (-\eta/2 ) 
\right ), \label{eq:app_b12}
\end{align}
Moreover, noting that  the assumption \eqref{eq:app_b3} is equal to the condition $1<\sigma \sigma^{-1}_{t-1} ({\bm x}^\ast,{\bm w}^\ast ) (\eta/2)$, 
the right hand side in \eqref{eq:app_b12} can be bounded as
\begin{align}
\Phi \left ( 
\frac{\sigma }{  \sigma_{t-1} ({\bm x}^\ast,{\bm w}^\ast ) } (-\eta/2 ) 
\right ) &= \int _{-\infty} ^ {\frac{\sigma }{  \sigma_{t-1} ({\bm x}^\ast,{\bm w}^\ast ) } (-\eta/2 ) }  \phi (z) \text{d} z \nonumber \\
&= \int^\infty _{\frac{\sigma }{  \sigma_{t-1} ({\bm x}^\ast,{\bm w}^\ast ) } (\eta/2 )} \phi (z) \text{d} z \nonumber \\
&\leq \int^\infty _{\frac{\sigma }{  \sigma_{t-1} ({\bm x}^\ast,{\bm w}^\ast ) } (\eta/2 )} z \phi (z) \text{d} z \nonumber \\
&=   [ - \phi (z) ]^\infty _{\frac{\sigma }{  \sigma_{t-1} ({\bm x}^\ast,{\bm w}^\ast ) } (\eta/2 )} \nonumber \\
&= \frac{1}{\sqrt{2 \pi} }  \exp \left (
-\frac{\sigma^2 \eta^2}{8   \sigma^2_{t-1} ({\bm x}^\ast,{\bm w}^\ast )}
\right ) .\label{eq:app_b13}
\end{align}
Finally, from \eqref{eq:app_b5}, \eqref{eq:app_b12} and \eqref{eq:app_b13}, 
$ \mathbb{E}_{y^\ast}  [ \1 [l^{(F)}_t ({\bm x} ;0 |{\bm x}^\ast,{\bm w}^\ast, y^\ast) > \alpha ] ] $ can be bounded as
\begin{align*}
&\mathbb{E}_{y^\ast}  [ \1 [l^{(F)}_t ({\bm x} ;0 |{\bm x}^\ast,{\bm w}^\ast, y^\ast) > \alpha ] ]  \\
& \leq \sum_{ {\bm b} \in \mathcal{B}} \mathbb{P}_{y^\ast}  [ \f s \in N^{({\bm b}) }, \1 [l_t ({\bm x} ,{\bm w}_s |{\bm x}^\ast,{\bm w}^\ast, y^\ast) > h ]=b_s  ] \\
& \leq 
\sum_{ {\bm b} \in \mathcal{B}} \mathbb{P}_{y^\ast}  [  \1 [l_t ({\bm x} ,{\bm w}_{s_{\bm b} } |{\bm x}^\ast,{\bm w}^\ast, y^\ast) > h ]=b_{s_{\bm b}}  ] \\
&=\sum_{ {\bm b} \in \mathcal{B}} \mathbb{P}_{y^\ast}  [  l_t ({\bm x} ,{\bm w}_{s_{\bm b} } |{\bm x}^\ast,{\bm w}^\ast, y^\ast) > h  ] \\
&\leq \sum_{ {\bm b} \in \mathcal{B}}   \frac{1}{\sqrt{2 \pi} }  \exp \left (
-\frac{\sigma^2 \eta^2}{8   \sigma^2_{t-1} ({\bm x}^\ast,{\bm w}^\ast )} 
\right )  \\
&= | \mathcal{B}  | \frac{1}{\sqrt{2 \pi} }  \exp \left (
-\frac{\sigma^2 \eta^2}{8   \sigma^2_{t-1} ({\bm x}^\ast,{\bm w}^\ast )} 
\right ) \leq 
2^{|\Omega|} \frac{1}{\sqrt{2 \pi} }  \exp \left (
-\frac{\sigma^2 \eta^2}{8   \sigma^2_{t-1} ({\bm x}^\ast,{\bm w}^\ast )} 
\right ) .
\end{align*}
Therefore, from the definition of $a_{t-1} ({\bm x}^\ast,{\bm w}^\ast )$, we have 
\begin{align*}
a_{t-1} ({\bm x}^\ast,{\bm w}^\ast ) &= \sum_{ {\bm x} \in U_t }  \mathbb{E}_{y^\ast}  [ \1 [l^{(F)}_t ({\bm x} ;0 |{\bm x}^\ast,{\bm w}^\ast, y^\ast) > \alpha ] ]  \\
& \leq \sum_{ {\bm x} \in U_t } 2^{|\Omega|} \frac{1}{\sqrt{2 \pi} }  \exp \left (
-\frac{\sigma^2 \eta^2}{8   \sigma^2_{t-1} ({\bm x}^\ast,{\bm w}^\ast )} 
\right )  \\
&= |U_t | 2^{|\Omega|} \frac{1}{\sqrt{2 \pi} }  \exp \left (
-\frac{\sigma^2 \eta^2}{8   \sigma^2_{t-1} ({\bm x}^\ast,{\bm w}^\ast )} 
\right )  \leq  |\mathcal{X} | 2^{|\Omega|} \frac{1}{\sqrt{2 \pi} }  \exp \left (
-\frac{\sigma^2 \eta^2}{8   \sigma^2_{t-1} ({\bm x}^\ast,{\bm w}^\ast )} 
\right ) .
\end{align*}
\end{proof}

\begin{lemma}\label{lem:bound_xast}
Let $\eta >0$, $\beta_t >0$ and $\gamma >0$. Also let $({\bm x}_t ,{\bm w}_t ) \in \mathcal{X} \times \Omega$ be a maximum point of $a^{(1)}_{t-1} ({\bm x}^\ast,{\bm w}^\ast ) $. 
Assume that the following inequalities hold for some $T \geq 1$:
\begin{align}
\sigma^{-2} \sigma^2_{T-1} ({\bm x}_T,{\bm w}_T ) \beta^{1/2}_T &< \frac{\eta }{2}, \label{eq:app_b14} \\
\sigma^{-2} \sigma^2_{T-1} ({\bm x}_T,{\bm w}_T )  &< \eta^2/4, \label{eq:app_b15} \\
\sigma^2_{T-1} ({\bm x}_T,{\bm w}_T ) \beta_T & < \eta^2/4, \label{eq:app_b16} \\
\frac{1}{2} \log \beta_T - \frac{\eta^2 \sigma^2}{8 \sigma^2_{T-1}   ({\bm x}_T,{\bm w}_T )  } &< \log (|\mathcal{X}|^{-1}  2^{-|\Omega|} \eta \gamma 2^{-1} \sqrt{2 \pi }). \label{eq:app_b17}
\end{align}
Then, Algorithm \ref{alg:1} terminates after at most $T$ iterations.
\end{lemma}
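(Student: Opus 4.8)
The plan is to reduce the statement to Lemma \ref{lem:eta_end}, whose hypothesis \eqref{eq:app_b1} is the \emph{uniform} bound $2\beta^{1/2}_T \sigma_{T-1}({\bm x},{\bm w}) < \eta$ over all $({\bm x},{\bm w}) \in \mathcal{X}\times\Omega$, even though the hypotheses \eqref{eq:app_b14}--\eqref{eq:app_b17} of the present lemma constrain only the single maximizer $({\bm x}_T,{\bm w}_T)$ of $a^{(1)}_{T-1}$. The bridge between these is the optimality of $({\bm x}_T,{\bm w}_T)$: since $a^{(1)}_{T-1}({\bm x}^\ast,{\bm w}^\ast) = \max\{a_{T-1}({\bm x}^\ast,{\bm w}^\ast), \gamma\sigma_{T-1}({\bm x}^\ast,{\bm w}^\ast)\} \geq \gamma\sigma_{T-1}({\bm x}^\ast,{\bm w}^\ast)$ at every point, maximality yields $\gamma\sigma_{T-1}({\bm x},{\bm w}) \leq a^{(1)}_{T-1}({\bm x}_T,{\bm w}_T)$ uniformly in $({\bm x},{\bm w})$. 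Thus it suffices to control the single scalar $a^{(1)}_{T-1}({\bm x}_T,{\bm w}_T)$ and convert it into a uniform bound on $\sigma_{T-1}$.

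Concretely, I would first establish the single inequality $a^{(1)}_{T-1}({\bm x}_T,{\bm w}_T) < \eta\gamma/(2\beta^{1/2}_T)$. Because this quantity is a maximum of two terms, I handle each separately. For the variance term, the target $\gamma\sigma_{T-1}({\bm x}_T,{\bm w}_T) < \eta\gamma/(2\beta^{1/2}_T)$ is, after squaring and cancelling $\gamma$, equivalent to $\beta_T\sigma^2_{T-1}({\bm x}_T,{\bm w}_T) < \eta^2/4$, which is precisely hypothesis \eqref{eq:app_b16}. For the acquisition term, I observe that \eqref{eq:app_b14} and \eqref{eq:app_b15} are exactly conditions \eqref{eq:app_b2} and \eqref{eq:app_b3} of Lemma \ref{lem:PTRMILE_bound} evaluated at $({\bm x}_T,{\bm w}_T)$, so that lemma gives
\begin{align*}
a_{T-1}({\bm x}_T,{\bm w}_T) \leq |\mathcal{X}|\,2^{|\Omega|}\frac{1}{\sqrt{2\pi}}\exp\left(-\frac{\sigma^2\eta^2}{8\sigma^2_{T-1}({\bm x}_T,{\bm w}_T)}\right).
\end{align*}
Taking logarithms of the desired bound $a_{T-1}({\bm x}_T,{\bm w}_T) < \eta\gamma/(2\beta^{1/2}_T)$ and collecting the additive constants, the required condition reduces to exactly \eqref{eq:app_b17}. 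I expect this to be the one step demanding care, since the $\tfrac12\log\beta_T$ term, the $-\tfrac12\log(2\pi)$, and the factors $|\mathcal{X}|^{-1}$, $2^{-|\Omega|}$, and $2^{-1}$ must all be matched precisely against the right-hand side of \eqref{eq:app_b17}; everywhere else the argument is a direct substitution.

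Combining the two cases gives $a^{(1)}_{T-1}({\bm x}_T,{\bm w}_T) < \eta\gamma/(2\beta^{1/2}_T)$, where strictness is inherited from the strict inequalities in \eqref{eq:app_b16} and \eqref{eq:app_b17}. Feeding this into the maximality relation $\gamma\sigma_{T-1}({\bm x},{\bm w}) \leq a^{(1)}_{T-1}({\bm x}_T,{\bm w}_T)$ and dividing by $\gamma$ gives $\sigma_{T-1}({\bm x},{\bm w}) < \eta/(2\beta^{1/2}_T)$, i.e. $2\beta^{1/2}_T\sigma_{T-1}({\bm x},{\bm w}) < \eta$ for all $({\bm x},{\bm w}) \in \mathcal{X}\times\Omega$. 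This is exactly hypothesis \eqref{eq:app_b1}, so Lemma \ref{lem:eta_end} immediately yields termination of Algorithm \ref{alg:1} within $T$ iterations. The only genuine obstacle is the logarithmic rearrangement matching \eqref{eq:app_b17}; the remaining steps are elementary manipulations combined with the two previously established lemmas.
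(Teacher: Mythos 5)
Your proposal is correct and follows essentially the same route as the paper's proof: both use maximality of $({\bm x}_T,{\bm w}_T)$ to turn a bound on $a^{(1)}_{T-1}({\bm x}_T,{\bm w}_T)$ into a uniform bound on $\gamma\sigma_{T-1}$, bound the $a_{T-1}$ term via Lemma \ref{lem:PTRMILE_bound} under \eqref{eq:app_b14}--\eqref{eq:app_b15}, identify \eqref{eq:app_b16} and \eqref{eq:app_b17} as exactly the bounds needed for the two branches of the max, and conclude $2\beta^{1/2}_T\sigma_{T-1}({\bm x},{\bm w})<\eta$ uniformly so that Lemma \ref{lem:eta_end} applies. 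The only difference is cosmetic: the paper multiplies through by $\beta^{1/2}_T/\gamma$ and bounds each branch of the max after substitution, whereas you bound $a^{(1)}_{T-1}({\bm x}_T,{\bm w}_T)$ by $\eta\gamma/(2\beta^{1/2}_T)$ first; the algebra, including the logarithmic matching of \eqref{eq:app_b17}, is identical.
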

\begin{proof}
From the definitions of $a^{(1)}_{t-1} ({\bm x}^\ast,{\bm w}^\ast ) $ and $({\bm x}_t ,{\bm w}_t )$, the following holds for any $({\bm x},{\bm w} ) \in \mathcal{X} \times \Omega$:
\begin{align}
\gamma \sigma_{T-1} ({\bm x},{\bm w} ) \leq a^{(1)}_{T-1} ({\bm x},{\bm w} ) \leq a^{(1)}_{T-1} ({\bm x}_T,{\bm w}_T ) 
= \max \{   a_{T-1} ({\bm x}_T,{\bm w}_T )  , \gamma \sigma_{T-1} ({\bm x}_T,{\bm w}_T )   \}. \label{eq:app_b18}
\end{align}
In addition, from \eqref{eq:app_b14}, \eqref{eq:app_b15} and Lemma \ref{lem:PTRMILE_bound}, $  a_{T-1} ({\bm x}_T,{\bm w}_T )$ can be bounded as 
\begin{align}
a_{T-1} ({\bm x}_T,{\bm w}_T ) \leq |\mathcal{X} | 2^{|\Omega|} \frac{1}{\sqrt{2 \pi} }  \exp \left (
-\frac{\sigma^2 \eta^2}{8   \sigma^2_{T-1} ({\bm x}_T,{\bm w}_T)} 
\right ) . \label{eq:app_b19}
\end{align}
Thus, by substituting \eqref{eq:app_b19} into \eqref{eq:app_b18}, we have 
$$
\gamma \sigma_{T-1} ({\bm x},{\bm w} ) \leq \max \left \{    |\mathcal{X} | 2^{|\Omega|} \frac{1}{\sqrt{2 \pi} }  \exp \left (
-\frac{\sigma^2 \eta^2}{8   \sigma^2_{T-1} ({\bm x}_T,{\bm w}_T)} 
\right )  , \gamma \sigma_{T-1} ({\bm x}_T,{\bm w}_T )    \right \}.
$$
This implies that 
\begin{align}
\beta^{1/2}_T \sigma_{T-1} ({\bm x},{\bm w} ) \leq \max \left \{ \gamma^{-1} \beta^{1/2}_T   |\mathcal{X} | 2^{|\Omega|} \frac{1}{\sqrt{2 \pi} }  \exp \left (
-\frac{\sigma^2 \eta^2}{8   \sigma^2_{T-1} ({\bm x}_T,{\bm w}_T)} 
\right )  , \beta^{1/2}_T  \sigma_{T-1} ({\bm x}_T,{\bm w}_T )    \right \}. \label{eq:app_b20}
\end{align}
On the other hand, \eqref{eq:app_b16} and \eqref{eq:app_b17} are equal to the following inequalities, respectively:
\begin{align}
\beta^{1/2}_T \sigma_{T-1} ({\bm x}_T,{\bm w}_T )  & < \eta/2, \label{eq:app_b21} \\
\exp \left ( - \frac{\eta^2 \sigma^2}{8 \sigma^2_{T-1}   ({\bm x}_T,{\bm w}_T )  } \right )  &< \frac{ |\mathcal{X}|^{-1}  2^{-|\Omega|} \eta \gamma 2^{-1} \sqrt{2 \pi }}{\beta^{1/2}_T }.  \label{eq:app_b22}
\end{align}
Hence, by combining \eqref{eq:app_b20}, \eqref{eq:app_b21} and \eqref{eq:app_b22}, we get $\beta^{1/2}_T \sigma_{T-1} ({\bm x}_T,{\bm w}_T ) < \eta /2 $. Therefore, from Lemma \ref{lem:eta_end}, we have Lemma \ref{lem:bound_xast}.
\end{proof}

\begin{lemma}\label{lem:RMILE_bound}
Let $ \eta >0 $ and $\beta_t >0$. Assume that \eqref{eq:app_b2} and \eqref{eq:app_b3} hold for some $({\bm x}^\ast,{\bm w}^\ast) \in \mathcal{X} \times \Omega $. Then, ${\rm MILE}_{t-1} ({\bm x}^\ast ,{\bm w}^\ast ) $ can be bounded as 
\begin{align}
{\rm MILE}_{t-1} ({\bm x}^\ast ,{\bm w}^\ast ) \leq 
 |\mathcal{X} \times \Omega | \frac{1}{\sqrt{2 \pi} }  \exp \left (
-\frac{\sigma^2 \eta^2}{8   \sigma^2_{t-1} ({\bm x}^\ast,{\bm w}^\ast )} 
\right ) . \nonumber 
\end{align}
\end{lemma}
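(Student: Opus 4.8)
The plan is to mirror the structure of Lemma \ref{lem:PTRMILE_bound} and reuse its single-point Gaussian-tail estimate almost verbatim, the key simplification being that the combinatorial apparatus of that lemma (the set $\mathcal{B}$ and the witness index $s_{\bm b}$) is unnecessary here. First I would rewrite the subtracted cardinality as a sum of indicators, so that
$$
{\rm MILE}_{t-1} ({\bm x}^\ast ,{\bm w}^\ast ) = \sum_{ ({\bm x},{\bm w} ) \in U_{t-1} \times \Omega } \left( \mathbb{E}_{y^\ast}\bigl[ \1 [ l_{t-1} ({\bm x},{\bm w} |{\bm x}^\ast,{\bm w}^\ast,y^\ast )>h ] \bigr] - \1[ l_{t-1}({\bm x},{\bm w}) > h-\eta ] \right).
$$
I would then split the index set according to whether $l_{t-1}({\bm x},{\bm w}) > h-\eta$ holds. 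For a pair with $l_{t-1}({\bm x},{\bm w}) > h-\eta$, the subtracted indicator equals $1$ while the expectation term is at most $1$, so its summand is nonpositive and may be dropped from an upper bound.

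This leaves only the pairs with $l_{t-1}({\bm x},{\bm w}) \le h-\eta$, whose summands reduce to $\mathbb{E}_{y^\ast}[\1[l_{t-1}({\bm x},{\bm w}|{\bm x}^\ast,{\bm w}^\ast,y^\ast)>h]] = \mathbb{P}_{y^\ast}[l_{t-1}({\bm x},{\bm w}|{\bm x}^\ast,{\bm w}^\ast,y^\ast)>h]$. This probability is precisely the object estimated in the proof of Lemma \ref{lem:PTRMILE_bound}: the condition $l_{t-1}({\bm x},{\bm w}) \le h-\eta$ plays exactly the role of \eqref{eq:app_b6} there, and it is this bound that drives the whole chain \eqref{eq:app_b7}--\eqref{eq:app_b13}. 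Invoking assumptions \eqref{eq:app_b2} and \eqref{eq:app_b3}, the same argument yields, for each surviving pair,
$$
\mathbb{P}_{y^\ast}[l_{t-1}({\bm x},{\bm w}|{\bm x}^\ast,{\bm w}^\ast,y^\ast)>h] \leq \frac{1}{\sqrt{2\pi}}\exp\left( -\frac{\sigma^2 \eta^2}{8\sigma^2_{t-1}({\bm x}^\ast,{\bm w}^\ast)} \right).
$$

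Finally I would sum these estimates over the surviving pairs and bound their number by the total index count $|U_{t-1}\times\Omega| \le |\mathcal{X}\times\Omega|$, which gives the claimed inequality. There is no real analytic obstacle, since the tail bound is inherited from the earlier lemma; the only point requiring care is the bookkeeping in the first step—correctly matching the two sums in the definition of MILE so that the ``already classified'' count exactly cancels the contribution of the large terms (those with $l_{t-1}>h-\eta$), leaving a sum to which the inherited Gaussian-tail estimate applies uniformly. In contrast to Lemma \ref{lem:PTRMILE_bound}, the estimate here attaches directly to each pair $({\bm x},{\bm w})$ rather than to a witness coordinate of a vector in $\mathcal{B}$, so the counting is over $\mathcal{X}\times\Omega$ rather than producing the factor $2^{|\Omega|}$.
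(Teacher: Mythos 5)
Your proposal is correct and follows essentially the same route as the paper's proof: rewrite the subtracted cardinality as a sum of indicators, split on whether $l_{t}({\bm x},{\bm w}) > h-\eta$, bound the summands with $l_{t}({\bm x},{\bm w}) > h-\eta$ by zero, and apply the Gaussian-tail chain \eqref{eq:app_b7}--\eqref{eq:app_b13} (with $l_{t}({\bm x},{\bm w}) \leq h-\eta$ in place of \eqref{eq:app_b6}) to the remaining terms before counting. The only cosmetic difference is that the paper bounds every summand (including the nonpositive ones) by the tail quantity and sums over all of $U_t \times \Omega$, whereas you drop the nonpositive terms first; the two are trivially equivalent.
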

\begin{proof}
From Lemma 2 of \cite{zanette2018robust} and the definition of ${\rm MILE}_{t-1} ({\bm x},{\bm w} )$, the following holds:
\begin{align}
&{\rm MILE}_{t-1} ({\bm x}^\ast,{\bm w}^\ast ) \nonumber \\
&= 
\sum_{ ({\bm x},{\bm w} ) \in U_t \times \Omega }  \mathbb{E} _{y^\ast }  [ \1 [  l_t ({\bm x},{\bm w} |{\bm x}^\ast,{\bm w}^\ast,y^\ast   )>h ]  ] 
- | \{   ({\bm x},{\bm w} ) \in U_t \times \Omega  \mid l_t ({\bm x},{\bm w} ) >h-\eta \} | \nonumber \\
&= 
\sum_{ ({\bm x},{\bm w} ) \in U_t \times \Omega }  \mathbb{P} _{y^\ast }  [   l_t ({\bm x},{\bm w} |{\bm x}^\ast,{\bm w}^\ast,y^\ast   )>h  ] 
- | \{   ({\bm x},{\bm w} ) \in U_t \times \Omega  \mid l_t ({\bm x},{\bm w} ) >h-\eta \} | \nonumber \\
&\leq \sum_{ ({\bm x},{\bm w} ) \in U_t \times \Omega }  
\Phi \left ( 
\frac{\sqrt{\sigma^2_{t-1} ({\bm x}^\ast,{\bm w}^\ast )   +\sigma^2 }}{|k_{t-1} (({\bm x},{\bm w}),  ({\bm x}^\ast,{\bm w}^\ast)   )|} (\mu_{t-1}  ({\bm x},{\bm w} )  -\beta^{1/2}_t \sigma_{t-1}   ({\bm x},{\bm w} | {\bm x}^\ast,{\bm w}^\ast )    -h  ) 
\right ) \nonumber \\
&\quad - | \{   ({\bm x},{\bm w} ) \in U_t \times \Omega  \mid l_t ({\bm x},{\bm w} ) >h-\eta \} |.  \nonumber \\
&=     \sum_{ ({\bm x},{\bm w} ) \in U_t \times \Omega }  \left \{ 
\Phi \left ( 
\frac{\sqrt{\sigma^2_{t-1} ({\bm x}^\ast,{\bm w}^\ast )   +\sigma^2 }}{|k_{t-1} (({\bm x},{\bm w}),  ({\bm x}^\ast,{\bm w}^\ast)   )|} (\mu_{t-1}  ({\bm x},{\bm w} )  -\beta^{1/2}_t \sigma_{t-1}   ({\bm x},{\bm w} | {\bm x}^\ast,{\bm w}^\ast )    -h  ) 
\right ) 
- \1[ l_t ({\bm x},{\bm w} ) >h-\eta ] \right \} .     \label{eq:app_b24} 
\end{align}
Next, for each $({\bm x},{\bm w} ) \in U_t \times \Omega$, we consider the two cases of $l_t ({\bm x},{\bm w} ) >h-\eta $ and 
$l_t ({\bm x},{\bm w} ) \leq h-\eta $. 
If $l_t ({\bm x},{\bm w} ) >h-\eta $, then the following inequality holds: 
\begin{align*}
&\Phi \left ( 
\frac{\sqrt{\sigma^2_{t-1} ({\bm x}^\ast,{\bm w}^\ast )   +\sigma^2 }}{|k_{t-1} (({\bm x},{\bm w}),  ({\bm x}^\ast,{\bm w}^\ast)   )|} (\mu_{t-1}  ({\bm x},{\bm w} )  -\beta^{1/2}_t \sigma_{t-1}   ({\bm x},{\bm w} | {\bm x}^\ast,{\bm w}^\ast )    -h  ) 
\right ) 
- \1[ l_t ({\bm x},{\bm w} ) >h-\eta ] \\
&\leq 0 \leq \frac{1}{\sqrt{2 \pi} }  \exp \left (
-\frac{\sigma^2 \eta^2}{8   \sigma^2_{t-1} ({\bm x}^\ast,{\bm w}^\ast )} 
\right ). 
\end{align*}
On the other hand, if $l_t ({\bm x},{\bm w} ) \leq h-\eta $, then using \eqref{eq:app_b7}--\eqref{eq:app_b13} we have 
\begin{align*}
&\Phi \left ( 
\frac{\sqrt{\sigma^2_{t-1} ({\bm x}^\ast,{\bm w}^\ast )   +\sigma^2 }}{|k_{t-1} (({\bm x},{\bm w}),  ({\bm x}^\ast,{\bm w}^\ast)   )|} (\mu_{t-1}  ({\bm x},{\bm w} )  -\beta^{1/2}_t \sigma_{t-1}   ({\bm x},{\bm w} | {\bm x}^\ast,{\bm w}^\ast )    -h  ) 
\right ) 
- \1[ l_t ({\bm x},{\bm w} ) >h-\eta ] \\
&=
\Phi \left ( 
\frac{\sqrt{\sigma^2_{t-1} ({\bm x}^\ast,{\bm w}^\ast )   +\sigma^2 }}{|k_{t-1} (({\bm x},{\bm w}),  ({\bm x}^\ast,{\bm w}^\ast)   )|} (\mu_{t-1}  ({\bm x},{\bm w} )  -\beta^{1/2}_t \sigma_{t-1}   ({\bm x},{\bm w} | {\bm x}^\ast,{\bm w}^\ast )    -h  ) 
\right )
 \leq \frac{1}{\sqrt{2 \pi} }  \exp \left (
-\frac{\sigma^2 \eta^2}{8   \sigma^2_{t-1} ({\bm x}^\ast,{\bm w}^\ast )} 
\right ). 
\end{align*}
Therefore, in both cases, the following inequality holds: 
\begin{align}
&\Phi \left ( 
\frac{\sqrt{\sigma^2_{t-1} ({\bm x}^\ast,{\bm w}^\ast )   +\sigma^2 }}{|k_{t-1} (({\bm x},{\bm w}),  ({\bm x}^\ast,{\bm w}^\ast)   )|} (\mu_{t-1}  ({\bm x},{\bm w} )  -\beta^{1/2}_t \sigma_{t-1}   ({\bm x},{\bm w} | {\bm x}^\ast,{\bm w}^\ast )    -h  ) 
\right ) 
- \1[ l_t ({\bm x},{\bm w} ) >h-\eta ] \nonumber \\
&
 \leq \frac{1}{\sqrt{2 \pi} }  \exp \left (
-\frac{\sigma^2 \eta^2}{8   \sigma^2_{t-1} ({\bm x}^\ast,{\bm w}^\ast )} 
\right ). \label{eq:app_b25}
\end{align}
Thus, by substituting \eqref{eq:app_b25} into \eqref{eq:app_b24}, we obtain 
\begin{align*}
{\rm MILE}_{t-1} ({\bm x}^\ast,{\bm w}^\ast )  \leq 
\sum_{ ({\bm x},{\bm w} ) \in U_t \times \Omega }   \frac{1}{\sqrt{2 \pi} }  \exp \left (
-\frac{\sigma^2 \eta^2}{8   \sigma^2_{t-1} ({\bm x}^\ast,{\bm w}^\ast )} 
\right ) &= 
|U_t \times \Omega | \frac{1}{\sqrt{2 \pi} }  \exp \left (
-\frac{\sigma^2 \eta^2}{8   \sigma^2_{t-1} ({\bm x}^\ast,{\bm w}^\ast )} 
\right ) \\
&\leq 
|\mathcal{X} \times \Omega | \frac{1}{\sqrt{2 \pi} }  \exp \left (
-\frac{\sigma^2 \eta^2}{8   \sigma^2_{t-1} ({\bm x}^\ast,{\bm w}^\ast )} 
\right ) .
\end{align*}
\end{proof}

\begin{lemma}\label{lem:bound_xast2}
Let $\eta >0$, $\beta_t >0$, $\gamma >0$ and $\tilde{\gamma} >0$. Also let $({\bm x}_t ,{\bm w}_t ) \in \mathcal{X} \times \Omega$ be a maximum point of $a^{(2)}_{t-1} ({\bm x}^\ast,{\bm w}^\ast ) $. 
Assume that the  inequalities \eqref{eq:app_b14}, \eqref{eq:app_b15} and \eqref{eq:app_b16} hold for some $T \geq 1$. 
In addition, assume that the following inequalities hold:
\begin{align}
\frac{1}{2} \log \beta_T - \frac{\eta^2 \sigma^2}{8 \sigma^2_{T-1}   ({\bm x}_T,{\bm w}_T )  } &< \log (|\mathcal{X}|^{-1}  2^{-|\Omega|} \eta \gamma \tilde{\gamma} 2^{-1} \sqrt{2 \pi }), \label{eq:app_b26} \\
\frac{1}{2} \log \beta_T - \frac{\eta^2 \sigma^2}{8 \sigma^2_{T-1}   ({\bm x}_T,{\bm w}_T )  } &< \log (|\mathcal{X} \times \Omega |^{-1}   \eta  \tilde{\gamma} 2^{-1} \sqrt{2 \pi }). \label{eq:app_b27} 
\end{align}
Then, Algorithm \ref{alg:1} terminates after at most $T$ iterations.
\end{lemma}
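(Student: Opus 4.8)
The plan is to mirror the proof of Lemma \ref{lem:bound_xast}, but now accounting for the extra RMILE layer, which replaces the two competing terms in the relevant maximum by three. The objective throughout is to establish the uniform bound $\beta^{1/2}_T \sigma_{T-1}({\bm x},{\bm w}) < \eta/2$ for every $({\bm x},{\bm w}) \in \mathcal{X} \times \Omega$; once this is in hand, it is equivalent to \eqref{eq:app_b1} at step $T$, so Lemma \ref{lem:eta_end} immediately yields that Algorithm \ref{alg:1} terminates after at most $T$ iterations.

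First I would exploit the nested maxima. Since $({\bm x}_T,{\bm w}_T)$ maximizes $a^{(2)}_{T-1}$ and, for any $({\bm x},{\bm w})$, we have $\gamma\tilde{\gamma}\sigma_{T-1}({\bm x},{\bm w}) \leq \gamma\,{\rm RMILE}_{T-1}({\bm x},{\bm w}) \leq a^{(2)}_{T-1}({\bm x},{\bm w})$, I would write
\begin{align*}
\gamma\tilde{\gamma}\sigma_{T-1}({\bm x},{\bm w})
&\leq a^{(2)}_{T-1}({\bm x}_T,{\bm w}_T) \\
&= \max\{\, a_{T-1}({\bm x}_T,{\bm w}_T),\ \gamma\,{\rm MILE}_{T-1}({\bm x}_T,{\bm w}_T),\ \gamma\tilde{\gamma}\sigma_{T-1}({\bm x}_T,{\bm w}_T)\,\},
\end{align*}
where I have unfolded the definition of RMILE at the maximizer into its two constituent arguments, turning the two-way maximum into a three-way one.

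Next I would bound the first two terms. Because \eqref{eq:app_b14} and \eqref{eq:app_b15} are exactly \eqref{eq:app_b2} and \eqref{eq:app_b3} evaluated at $({\bm x}_T,{\bm w}_T)$, Lemma \ref{lem:PTRMILE_bound} bounds $a_{T-1}({\bm x}_T,{\bm w}_T)$ and Lemma \ref{lem:RMILE_bound} bounds ${\rm MILE}_{T-1}({\bm x}_T,{\bm w}_T)$, both by a constant multiple of $\frac{1}{\sqrt{2\pi}}\exp(-\sigma^2\eta^2/(8\sigma^2_{T-1}({\bm x}_T,{\bm w}_T)))$. Dividing through by $\gamma\tilde{\gamma}$ and multiplying by $\beta^{1/2}_T$ then gives
$$
\beta^{1/2}_T\sigma_{T-1}({\bm x},{\bm w})
\leq \max\left\{
\frac{\beta^{1/2}_T|\mathcal{X}|2^{|\Omega|}}{\gamma\tilde{\gamma}\sqrt{2\pi}}\,e^{(\cdot)},\
\frac{\beta^{1/2}_T|\mathcal{X}\times\Omega|}{\tilde{\gamma}\sqrt{2\pi}}\,e^{(\cdot)},\
\beta^{1/2}_T\sigma_{T-1}({\bm x}_T,{\bm w}_T)
\right\},
$$
where $e^{(\cdot)}$ denotes the common exponential factor $\exp(-\sigma^2\eta^2/(8\sigma^2_{T-1}({\bm x}_T,{\bm w}_T)))$.

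Finally I would check that each of the three arguments is strictly below $\eta/2$. The third is handled by \eqref{eq:app_b16}, exactly as in \eqref{eq:app_b21}. For the first and second, I would exponentiate \eqref{eq:app_b26} and \eqref{eq:app_b27} and rearrange; these two hypotheses are calibrated precisely so that the $a$-derived term and the MILE-derived term, respectively, fall below $\eta/2$. Having established $\beta^{1/2}_T\sigma_{T-1}({\bm x},{\bm w}) < \eta/2$ uniformly, I would invoke Lemma \ref{lem:eta_end} to conclude. I do not expect a genuine analytic obstacle here; the main thing to get right is the bookkeeping, since RMILE now contributes both a MILE term (bounded via Lemma \ref{lem:RMILE_bound} and controlled by \eqref{eq:app_b27}) and a $\sigma$ term (controlled by \eqref{eq:app_b16}), while $a_{T-1}$ is controlled separately by \eqref{eq:app_b26}. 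Correctly matching each hypothesis to the term it dominates is the only delicate step.
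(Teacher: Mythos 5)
Your proposal is correct and follows essentially the same route as the paper's proof: it uses the maximizer property of $({\bm x}_T,{\bm w}_T)$ to reduce everything to a maximum of three terms, bounds $a_{T-1}$ via Lemma \ref{lem:PTRMILE_bound} and ${\rm MILE}_{T-1}$ via Lemma \ref{lem:RMILE_bound}, matches \eqref{eq:app_b26}, \eqref{eq:app_b27}, and \eqref{eq:app_b16} to those three terms to get the uniform bound $2\beta_T^{1/2}\sigma_{T-1}({\bm x},{\bm w}) < \eta$, and concludes by Lemma \ref{lem:eta_end}. The only difference is presentational: the paper keeps the two-way maximum and bounds $\gamma\,{\rm RMILE}_{T-1}$ as a unit before combining, whereas you unfold RMILE immediately into a three-way maximum; the substance is identical.
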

\begin{proof}
From the definition of $a^{(2)}_{t-1} ({\bm x}^\ast,{\bm w}^\ast)$ and  $({\bm x}_t ,{\bm w}_t ) $,  the following holds for any $({\bm x},{\bm w} ) \in \mathcal{X} \times \Omega$:
\begin{align}
\gamma \tilde{\gamma}  \sigma_{T-1}  ({\bm x},{\bm w} ) \leq \gamma {\rm RMILE}_{T-1} ({\bm x},{\bm w} ) \leq a^{(2)}_{T-1} ({\bm x},{\bm w} ) & \leq a^{(2)}_{T-1} ({\bm x}_T,{\bm w}_T ) \nonumber \\
&= \max \{   a_{T-1} ({\bm x}_T,{\bm w}_T )  , \gamma  {\rm RMILE}_{T-1} ({\bm x}_T,{\bm w}_T )   \}. \label{eq:app_b28}
\end{align}
Furthermore, from \eqref{eq:app_b14}, \eqref{eq:app_b15} and Lemma \ref{lem:RMILE_bound}, we have 
\begin{align}
\gamma  {\rm RMILE}_{T-1} ({\bm x}_T,{\bm w}_T ) &= \max \{  \gamma {\rm MILE}_{T-1}   ({\bm x}_T,{\bm w}_T ) , \gamma \tilde{\gamma}  \sigma_{T-1}  ({\bm x}_T,{\bm w}_T ) \}   \nonumber \\
&\leq 
\max \left \{
\gamma |\mathcal{X} \times \Omega | \frac{1}{\sqrt{2 \pi} }  \exp \left (
-\frac{\sigma^2 \eta^2}{8   \sigma^2_{T-1} ({\bm x}_T,{\bm w}_T )} 
\right ), \gamma \tilde{\gamma}  \sigma_{T-1}  ({\bm x}_T,{\bm w}_T )
\right \}. \label{eq:app_b29}
\end{align}
Moreover, from \eqref{eq:app_b16} and \eqref{eq:app_b27}, we get the following inequalities:
\begin{align}
\sigma_{T-1} ({\bm x}_T,{\bm w}_T) &< \beta^{-1/2}_T \eta /2 , \label{eq:app_b30} \\
|\mathcal{X} \times \Omega | \frac{1}{\sqrt{2 \pi} }  \exp \left (
-\frac{\sigma^2 \eta^2}{8   \sigma^2_{T-1} ({\bm x}_T,{\bm w}_T )} 
\right ) &< \beta^{-1/2}_T  \eta \tilde{\gamma} /2. \label{eq:app_b31}
\end{align}
Thus, by substituting \eqref{eq:app_b30} and \eqref{eq:app_b31} into \eqref{eq:app_b29}, we obtain 
\begin{align}
\gamma  {\rm RMILE}_{T-1} ({\bm x}_T,{\bm w}_T ) \leq \gamma \tilde{\gamma} \beta^{-1/2}_T \eta/2 . \label{eq:app_b32}
\end{align}
Similarly, from \eqref{eq:app_b14}, \eqref{eq:app_b15}, \eqref{eq:app_b26} and Lemma \ref{lem:PTRMILE_bound}, 
$ a_{T-1} ({\bm x}_T,{\bm w}_T ) $ can be bounded as
\begin{align}
a_{T-1} ({\bm x}_T,{\bm w}_T )  \leq  | \mathcal{X} |  2^{|\Omega|} \frac{1}{\sqrt{2 \pi} } \exp \left (
-\frac{\sigma^2 \eta^2}{8\sigma^2_{T-1} ({\bm x}_T,{\bm w}_T)}
\right ) \leq \gamma \tilde{\gamma} \beta^{-1/2}_T \eta /2. \label{eq:app_b33}
\end{align}
Hence, by combining \eqref{eq:app_b32} and \eqref{eq:app_b33} into \eqref{eq:app_b28}, we get 
$$
\gamma \tilde{\gamma}  \sigma_{T-1}  ({\bm x},{\bm w} ) \leq \gamma \tilde{\gamma} \beta^{-1/2}_T \eta /2.
$$
This implies that $2  \beta^{1/2}_T \sigma_{T-1}  ({\bm x},{\bm w} ) <\eta$. 
Therefore, from Lemma \ref{lem:eta_end}, we have Lemma  \ref{lem:bound_xast2}.
\end{proof}

\begin{lemma}\label{lem:postvar_bound}
Let $({\bm x}_1,{\bm w}_1 ),\ldots ,({\bm x}_t,{\bm w} _t ) $ be selected points, and define $C_1 =2/\log (1+\sigma^{-2} ) $.
 Then, there exists a natural number $t^\prime \leq t$ such that 
$$
\sigma^2_{t^\prime -1 } ( {\bm x}_{t^\prime},{\bm w}_{t^\prime } ) \leq \frac{C_1 \kappa_t}{t}.
$$
\end{lemma}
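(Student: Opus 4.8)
The plan is to bound the \emph{sum} of the posterior variances over the selected points and then invoke a simple averaging argument. First I would recall the standard identity expressing the mutual information between the observations and $f$ in terms of the sequentially realized posterior variances: because the noise is i.i.d. $\mathcal{N}(0,\sigma^2)$ and the function values are jointly Gaussian, the information gain after observing $({\bm x}_1,{\bm w}_1),\ldots,({\bm x}_t,{\bm w}_t)$ is
$$
I({\bm y}_t; f) = \frac{1}{2}\sum_{s=1}^{t}\log\!\left(1+\sigma^{-2}\sigma^2_{s-1}({\bm x}_s,{\bm w}_s)\right).
$$
By the definition of the maximum information gain, this quantity is bounded by $\kappa_t$, i.e.\ $I({\bm y}_t; f) \le \kappa_t$.

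Next I would convert each logarithmic term back into the posterior variance using an elementary inequality. Here the assumption $\max_{({\bm x},{\bm w})}\sigma^2_0({\bm x},{\bm w})\le 1$ is essential: since the posterior variance is nonincreasing in $t$ and bounded above by the prior variance, we have $\sigma^{-2}\sigma^2_{s-1}({\bm x}_s,{\bm w}_s)\in[0,\sigma^{-2}]$. On this interval, concavity of $u\mapsto\log(1+u)$ (comparison with the chord from $0$ to $\sigma^{-2}$) gives $u \le \tfrac{\sigma^{-2}}{\log(1+\sigma^{-2})}\log(1+u)$, hence
$$
\sigma^2_{s-1}({\bm x}_s,{\bm w}_s) \le \frac{1}{\log(1+\sigma^{-2})}\log\!\left(1+\sigma^{-2}\sigma^2_{s-1}({\bm x}_s,{\bm w}_s)\right).
$$
Summing over $s=1,\ldots,t$ and combining with the two previous displays yields
$$
\sum_{s=1}^{t}\sigma^2_{s-1}({\bm x}_s,{\bm w}_s) \le \frac{2}{\log(1+\sigma^{-2})}\,I({\bm y}_t;f) \le C_1\kappa_t.
$$

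Finally I would close with a pigeonhole/averaging step: a sum of $t$ nonnegative numbers bounded by $C_1\kappa_t$ must contain a term no larger than its average, so the index $t'$ attaining $\min_{1\le s\le t}\sigma^2_{s-1}({\bm x}_s,{\bm w}_s)$ satisfies
$$
\sigma^2_{t'-1}({\bm x}_{t'},{\bm w}_{t'}) \le \frac{1}{t}\sum_{s=1}^{t}\sigma^2_{s-1}({\bm x}_s,{\bm w}_s) \le \frac{C_1\kappa_t}{t},
$$
which is exactly the claim. I expect the only real subtlety to lie in the elementary inequality step and its dependence on the normalization $\sigma^2_0\le 1$: this is precisely where the constant $C_1 = 2/\log(1+\sigma^{-2})$ originates, and getting the interval of validity right (a crude $u\le\log(1+u)$ bound would fail) is the sole place where care is needed. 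The information-gain identity and the averaging argument are otherwise standard from the GP-UCB analysis.
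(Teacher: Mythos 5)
Your proof is correct and follows essentially the same route as the paper: the mutual-information identity $I({\bm y}_t;f)=\frac{1}{2}\sum_{s}\log(1+\sigma^{-2}\sigma^2_{s-1}({\bm x}_s,{\bm w}_s))$, the per-term bound $\sigma^2_{s-1}\leq \log(1+\sigma^{-2}\sigma^2_{s-1})/\log(1+\sigma^{-2})$, and the argmin/averaging step, which is exactly the paper's argument (it cites Lemmas 5.3 and 5.4 of the GP-UCB paper where you re-derive the latter from concavity of $\log(1+u)$ on $[0,\sigma^{-2}]$, correctly noting that the normalization $\sigma^2_0\leq 1$ is what licenses that interval).
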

\begin{proof}
From Lemma 5.3 in \cite{SrinivasGPUCB}, the mutual information $I({\bm y}_A;f)$ can be expressed as
\begin{align}
 I({\bm y}_A;f) = \frac{1}{2}   \sum_{i=1}^t \log (1 + \sigma^{-2} \sigma^2_{i-1} ({\bm x}_i,{\bm w}_i ) ). \label{eq:app_b34}
\end{align}
Similarly, from Lemma 5.4 in  \cite{SrinivasGPUCB}, $\sigma^2_{i-1} ({\bm x}_i,{\bm w}_i )$ can be bounded as 
\begin{align}
\sigma^2_{i-1} ({\bm x}_i,{\bm w}_i ) \leq \frac{ \log (1+ \sigma^{-2} \sigma^2_{i-1} ({\bm x}_i ,{\bm w}_i ) )  }{\log (1+\sigma^{-2})}. \label{eq:app_b35}
\end{align}
Hence, by using \eqref{eq:app_b34} and \eqref{eq:app_b35}, we get 
\begin{align}
\sum_{i=1}^t \sigma^2_{i-1} ({\bm x}_i,{\bm w}_i ) \leq \frac{2}{\log (1+ \sigma^{-2} ) } I({\bm y}_t;f) \leq C_1 \kappa_t. \label{eq:app_b36}
\end{align}
Next, we define $t^\prime $ as 
$
t^\prime = \argmin _{ 1 \leq i \leq t} \sigma^2_{i-1} ({\bm x}_i,{\bm w}_i )
$. 
Then, it follows that 
\begin{align}
t \sigma^2_{t^\prime -1} ({\bm x}_{t^\prime},{\bm w}_{t^\prime} ) \leq \sum_{i=1}^t \sigma^2_{i-1} ({\bm x}_i,{\bm w}_i ) . \label{eq:app_b37}
\end{align}
Therefore, by combining \eqref{eq:app_b36} and \eqref{eq:app_b37}, we have the desired inequality.
\end{proof}
Finally, using Lemma \ref{lem:bound_xast}, \ref{lem:bound_xast2} and \ref{lem:postvar_bound}, we prove Theorem \ref{thm:convergence1} and \ref{thm:convergence2}.

\begin{proof}
From Lemma \ref{lem:postvar_bound} and monotonicity of $\beta_t$, for any $t \geq 1$, there exists a natural number $t^\prime \leq t$ such that 
\begin{equation}
\begin{split}
\sigma^{-2} \sigma^2_{t^\prime-1} ({\bm x}_{t^\prime},{\bm w}_{t^\prime} ) \beta^{1/2}_{t^\prime} &\leq \frac{\sigma^{-2} \beta^{1/2}_{t^\prime} C_1 \kappa_t }{t} \leq      \frac{\sigma^{-2} \beta^{1/2}_{t} C_1 \kappa_t }{t}             ,  \\
\sigma^{-2} \sigma^2_{t^\prime-1} ({\bm x}_{t^\prime},{\bm w}_{t^\prime} )  &\leq \frac{ \sigma^{-2} C_1 \kappa_t }{t},  \\
\sigma^2_{t^\prime-1} ({\bm x}_{t^\prime},{\bm w}_{t^\prime} ) \beta_{t^\prime} & \leq \frac{   C_1 \beta_{t^\prime} \kappa_t  }{t}   \leq 
         \frac{   C_1 \beta_{t} \kappa_t  }{t}         ,  \\
\frac{1}{2} \log \beta_{t^\prime} - \frac{\eta^2 \sigma^2}{8 \sigma^2_{t^\prime-1}   ({\bm x}_{t^\prime},{\bm w}_{t^\prime} )  } &\leq  \frac{1}{2} \log \beta_{t^\prime} - \frac{T\eta^2 \sigma^2}{8 C_1 \kappa_t  } \leq      \frac{1}{2} \log \beta_{t} - \frac{T\eta^2 \sigma^2}{8 C_1 \kappa_t  }         .
\end{split} \label{eq:app_b38}
\end{equation}
Hence, from \eqref{eq:app_b38}, if the inequality conditions in Theorem \ref{thm:convergence1} hold, then the inequality conditions in Lemma   \ref{lem:bound_xast} also hold for some $\tilde{T} \leq T$. 
Therefore, from Lemma   \ref{lem:bound_xast}, Algorithm \ref{alg:1} terminates after at most $\tilde{T}$ iterations, i.e., Theorem \ref{thm:convergence1} holds. 
By using the same argument, Theorem \ref{thm:convergence2} can also be proved.
\end{proof}


\subsection{Proof of Lemma \ref{lem:exp_cal} and \ref{lem:teigi_bound}}
First, we prove Lemma \ref{lem:exp_cal}
\begin{proof}
From GP properties, the posterior mean $\mu_{t-1} ({\bm x},{\bm w} |{\bm x}^\ast,{\bm w}^\ast,y^\ast)$ and the posterior variance 
$\sigma^2_{t-1}  ({\bm x},{\bm w} |{\bm x}^\ast,{\bm w}^\ast)$ of $f({\bm x},{\bm w})$ after adding $({\bm x}^\ast,{\bm w}^\ast,y^\ast)$ can be written as follows (see, e.g., \cite{williams2006gaussian}):
\begin{align*}
\mu_{t-1} ({\bm x},{\bm w} |{\bm x}^\ast,{\bm w}^\ast,y^\ast) &= \mu_{t-1} ({\bm x},{\bm w}) -\frac{  k_{t-1}  ( ({\bm x},{\bm w}) ,  ({\bm x}^\ast,{\bm w}^\ast)   )     }{\sigma^2_{t-1} ({\bm x}^\ast,{\bm w}^\ast)+\sigma^2} (y^\ast-\mu_{t-1} ({\bm x}^\ast,{\bm w}^\ast)), \\
\sigma^2_{t-1}  ({\bm x},{\bm w} |{\bm x}^\ast,{\bm w}^\ast) &= \sigma^2_{t-1} ({\bm x},{\bm w}) -\frac{  k^2_{t-1}  ( ({\bm x},{\bm w}) ,  ({\bm x}^\ast,{\bm w}^\ast)   )     }{\sigma^2_{t-1} ({\bm x}^\ast,{\bm w}^\ast)+\sigma^2}.
\end{align*}
Thus, $l_t ({\bm x},{\bm w} | {\bm x}^\ast,{\bm w}^\ast, y^\ast )$ is a linear function with respect to (w.r.t.) $y^\ast$. 
Hence, the indicator function $\1[l_t ({\bm x},{\bm w}_j | {\bm x}^\ast,{\bm w}^\ast, y^\ast )>h]$ is a piecewise constant function w.r.t. $y^\ast$, where the breakpoint is $y^\ast = r_j $. 
Therefore, for any $s \in \{1,\ldots , |\Omega|+1 \}$, the following holds:
\begin{align*}
&( \1[l_t ({\bm x},{\bm w}_1 | {\bm x}^\ast,{\bm w}^\ast, c )>h], \ldots , \1[l_t ({\bm x},{\bm w}_{|\Omega|} | {\bm x}^\ast,{\bm w}^\ast, c )>h] )^\top \\
&=
( \1[l_t ({\bm x},{\bm w}_1 | {\bm x}^\ast,{\bm w}^\ast, c^\prime )>h], \ldots , \1[l_t ({\bm x},{\bm w}_{|\Omega|} | {\bm x}^\ast,{\bm w}^\ast, c^\prime )>h] )^\top , \quad \f c,c^\prime \in R_s.
\end{align*}
This implies that 
$$
l^{(F)}_t ({\bm x};0|{\bm x}^\ast,{\bm w}^\ast, c) = l^{(F)}_t ({\bm x};0|{\bm x}^\ast,{\bm w}^\ast, c^\prime) , \quad \f c,c^\prime \in R_s.
$$
Hence, using this we have 
\begin{align*}
&\mathbb{E}_{y^\ast}   [\1[l^{(F)}_t ({\bm x};0|{\bm x}^\ast,{\bm w}^\ast, y^\ast)>\alpha]] \\
&=\int \1[l^{(F)}_t ({\bm x};0|{\bm x}^\ast,{\bm w}^\ast, y^\ast)>\alpha] p(y^\ast ) \text{d} y^\ast \\
&=\sum_{s=1}^ {|\Omega|+1} \int _{y^\ast \in R_s} \1[l^{(F)}_t ({\bm x};0|{\bm x}^\ast,{\bm w}^\ast, y^\ast)>\alpha] p(y^\ast ) \text{d} y^\ast \\
&=\sum_{s=1}^ {|\Omega|+1} \1[l^{(F)}_t ({\bm x};0|{\bm x}^\ast,{\bm w}^\ast, c_s)>\alpha] \int _{y^\ast \in R_s}  p(y^\ast ) \text{d} y^\ast \\
&=\sum_{s=1}^ {|\Omega|+1} \mathbb{P} (y^\ast \in R_s) \1[l^{(F)}_t ({\bm x};0|{\bm x}^\ast,{\bm w}^\ast, c_s)>\alpha] .
\end{align*}
\end{proof}

Next, we prove Lemma \ref{lem:teigi_bound}.
\begin{proof}
From the definition of $l^{(F)}_t ({\bm x} ;0 |{\bm x}^\ast,{\bm w}^\ast, c_s) $, 
$l^{(F)}_t ({\bm x} ;0 |{\bm x}^\ast,{\bm w}^\ast, c_s) $ can be expressed as 
\begin{align*}
l^{(F)}_t ({\bm x} ;0 |{\bm x}^\ast,{\bm w}^\ast, c_s) 
=   
\inf _{p({\bm w})  \in \mathcal{A} } \sum_{ {\bm w} \in \Omega } \1[l_t ({\bm x},{\bm w} | {\bm x}^\ast,{\bm w}^\ast,c_s ) > h ]  p ({\bm w} ).
\end{align*}
Moreover, since $p^\ast ({\bm w}) \in \mathcal{A}$, the following holds: 
\begin{align*}
\inf _{p({\bm w})  \in \mathcal{A} } \sum_{ {\bm w} \in \Omega } \1[l_t ({\bm x},{\bm w} | {\bm x}^\ast,{\bm w}^\ast,c_s ) > h ]  p ({\bm w} ) 
\leq 
 \sum_{ {\bm w} \in \Omega } \1[l_t ({\bm x},{\bm w} | {\bm x}^\ast,{\bm w}^\ast,c_s ) > h ]  p^\ast ({\bm w} ).
\end{align*}
Therefore, we have 
\begin{align*}
l^{(F)}_t ({\bm x} ;0 |{\bm x}^\ast,{\bm w}^\ast, c_s) 
\leq \sum_{ {\bm w} \in \Omega } \1[l_t ({\bm x},{\bm w} | {\bm x}^\ast,{\bm w}^\ast,c_s ) > h ]  p^\ast ({\bm w} ).
\end{align*}
Hence, if the inequality assumption in Lemma \ref{lem:teigi_bound} holds, then we get $l^{(F)}_t ({\bm x} ;0 |{\bm x}^\ast,{\bm w}^\ast, c_s) \leq \alpha$.
 This implies that $\1 [ l^{(F)}_t ({\bm x} ;0 |{\bm x}^\ast,{\bm w}^\ast, c_s) > \alpha ] =0$.
\end{proof}

\section{Additional experiments}
\subsection{Synthetic and real data experiments in the $L2$-norm setting}
In this section, we performed the same experiment as in Subsection \ref{syn_experiment} and \ref{subsec:real} under the setting that  the distance function is $L2$-norm.
Similarly, we used Uniform and Normal as the reference distribution.
Here, the parameters used in the synthetic data experiments are listed in Table  \ref{tab:setting1}.
On the other hand, the same parameters as in Subsection \ref{subsec:real}  were used in the real data experiments.
Under this setup, we took one initial point at random, and ran the algorithms until the number of iterations reached 300 (resp. 100) in the synthetic data (resp. real data) experiments.
We performed 50 Monte Carlo simulations and obtained the average F-score.
From Figures \ref{fig:exp3} and \ref{fig:exp4}, it can be confirmed  that our proposed methods outperform other existing methods 
as well as the results of synthetic data experiments using $L1$-norm as the distance function. 
From Figure \ref{fig:exp5_2}, it can also be confirmed that the same results as in Subsection \ref{subsec:real} are obtained in real data experiments.  

\begin{table*}[t]
  \begin{center}
    \caption{Parameter setting in synthetic data experiments}
\scalebox{0.8}{
    \begin{tabular}{c||c|c||c|c} \hline \hline
       & $L1$-Uniform & $L1$-Normal & $L2$-Uniform & $L2$-Normal \\ \hline 
        & $h=100$, $\alpha =0.62$ & $h=100$, $\alpha =0.5$, & $h=100$, $\alpha =0.5$, & $h=100$, $\alpha =0.5$, \\
                & $\sigma^2=10^{-4}$, $\sigma^2_f = 1300^2$,  & $\sigma^2=10^{-4}$, $\sigma^2_f = 1300^2$,  & $\sigma^2=10^{-4}$, $\sigma^2_f = 1300^2$,  & $\sigma^2=10^{-4}$, $\sigma^2_f = 1300^2$, \\
      Booth  & $L=4$, $\beta^{1/2}_t =2$, $\epsilon = 0.65$, & $L=4$, $\beta^{1/2}_t =2$, $\epsilon = 0.65$, & $L=4$, $\beta^{1/2}_t =2$, $\epsilon = 0.05$, & $L=4$, $\beta^{1/2}_t =2$, $\epsilon = 0.1$,  \\
                & $L_1=-10$, $U_1 =10$,   & $L_1=-10$, $U_1 =10$,  & $L_1=-10$, $U_1 =10$,  & $L_1=-10$, $U_1 =10$,   \\ 
                & $L_2=-10$, $U_2 =10$  & $L_2=-10$, $U_2 =10$  & $L_2=-10$, $U_2 =10$  & $L_2=-10$, $U_2 =10$   \\ \hline
          & $h=5$, $\alpha =0.53$ & $h=5$, $\alpha =0.53$, & $h=5$, $\alpha =0.5$, & $h=5$, $\alpha =0.5$, \\
                & $\sigma^2=10^{-4}$, $\sigma^2_f = 50^2$,  & $\sigma^2=10^{-4}$, $\sigma^2_f = 50^2$,  & $\sigma^2=10^{-4}$, $\sigma^2_f = 50^2$,  & $\sigma^2=10^{-4}$, $\sigma^2_f = 50^2$, \\
      Matyas  & $L=4$, $\beta^{1/2}_t =2$, $\epsilon = 0.15$, & $L=4$, $\beta^{1/2}_t =2$, $\epsilon = 0.15$, & $L=4$, $\beta^{1/2}_t =2$, $\epsilon = 0.05$, & $L=4$, $\beta^{1/2}_t =2$, $\epsilon = 0.1$,  \\
                & $L_1=-10$, $U_1 =10$,   & $L_1=-10$, $U_1 =10$,  & $L_1=-10$, $U_1 =10$,  & $L_1=-10$, $U_1 =10$,   \\ 
                & $L_2=-10$, $U_2 =10$  & $L_2=-10$, $U_2 =10$  & $L_2=-10$, $U_2 =10$  & $L_2=-10$, $U_2 =10$   \\ \hline
      & $h=1$, $\alpha =0.57$ & $h=1$, $\alpha =0.59$, & $h=2$, $\alpha =0.5$, & $h=1$, $\alpha =0.55$, \\
                & $\sigma^2=10^{-4}$, $\sigma^2_f = 20^2$,  & $\sigma^2=10^{-6}$, $\sigma^2_f = 20^2$,  & $\sigma^2=10^{-6}$, $\sigma^2_f = 20^2$,  & $\sigma^2=10^{-6}$, $\sigma^2_f = 20^2$, \\
      McCormick  & $L=1$, $\beta^{1/2}_t =3$, $\epsilon = 0.25$, & $L=1$, $\beta^{1/2}_t =3$, $\epsilon = 0.15$, & $L=1$, $\beta^{1/2}_t =3$, $\epsilon = 0.05$, & $L=1$, $\beta^{1/2}_t =3$, $\epsilon = 0.07$,  \\
                & $L_1=-1.5$, $U_1 =4$,   & $L_1=-1.5$, $U_1 =4$,  & $L_1=-1.5$, $U_1 =4$,  & $L_1=-1.5$, $U_1 =4$,   \\ 
                & $L_2=-3$, $U_2 =4$  & $L_2=-3$, $U_2 =4$  & $L_2=-3$, $U_2 =4$  & $L_2=-3$, $U_2 =4$   \\ \hline
      & $h=-3990$, $\alpha =0.61$ & $h=-3990$, $\alpha =0.61$, & $h=-3990$, $\alpha =0.7$, & $h=-3990$, $\alpha =0.7$, \\
                & $\sigma^2=10^{-4}$, $\sigma^2_f = 2000^2$,  & $\sigma^2=10^{-4}$, $\sigma^2_f = 2000^2$,  & $\sigma^2=10^{-4}$, $\sigma^2_f = 2000^2$,  & $\sigma^2=10^{-4}$, $\sigma^2_f = 2000^2$, \\
      Styblinski-Tang  & $L=3$, $\beta^{1/2}_t =2$, $\epsilon = 0.2$, & $L=3$, $\beta^{1/2}_t =2$, $\epsilon = 0.2$, & $L=3$, $\beta^{1/2}_t =2$, $\epsilon = 0.05$, & $L=3$, $\beta^{1/2}_t =2$, $\epsilon = 0.1$,  \\
                & $L_1=-10$, $U_1 =10$,   & $L_1=-10$, $U_1 =10$,  & $L_1=-10$, $U_1 =10$,  & $L_1=-10$, $U_1 =10$,   \\ 
                & $L_2=-10$, $U_2 =10$  & $L_2=-10$, $U_2 =10$  & $L_2=-10$, $U_2 =10$  & $L_2=-10$, $U_2 =10$   \\ \hline \hline
    \end{tabular}
}
    \label{tab:setting1}
  \end{center}
\end{table*}

\begin{figure}[!t]
\begin{center}
 \begin{tabular}{cccc}
 \includegraphics[width=0.225\textwidth]{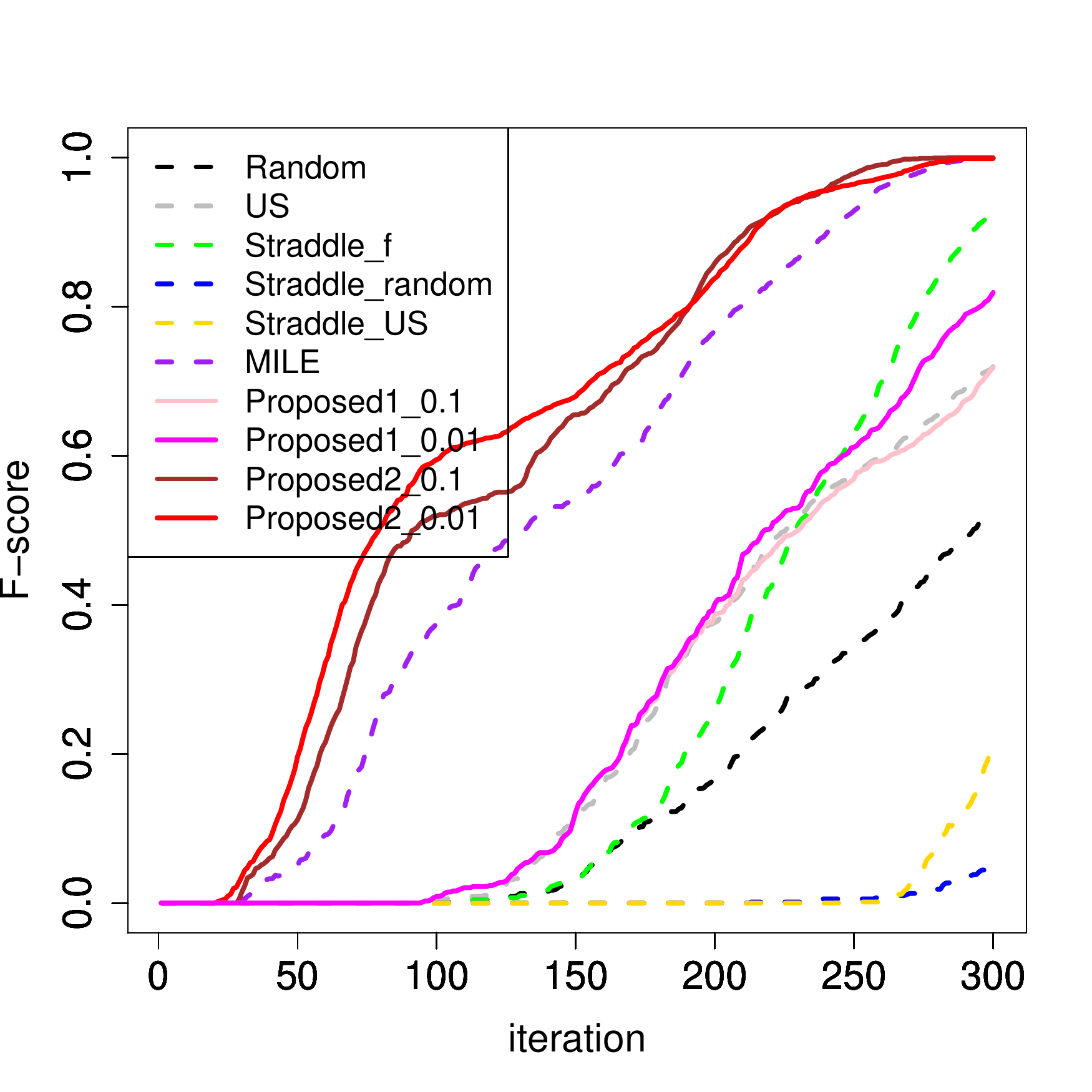} 
&
 \includegraphics[width=0.225\textwidth]{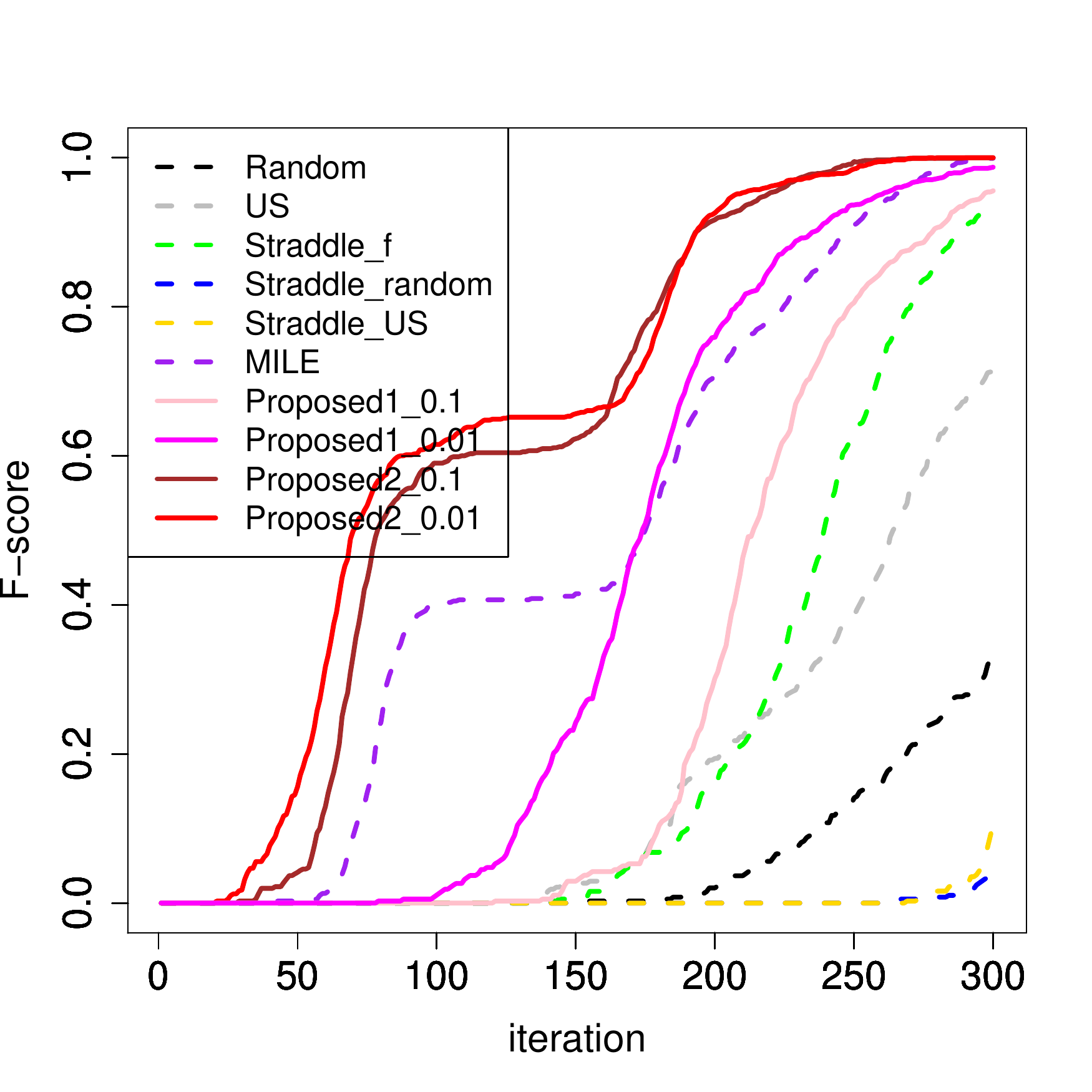} 
&
\includegraphics[width=0.225\textwidth]{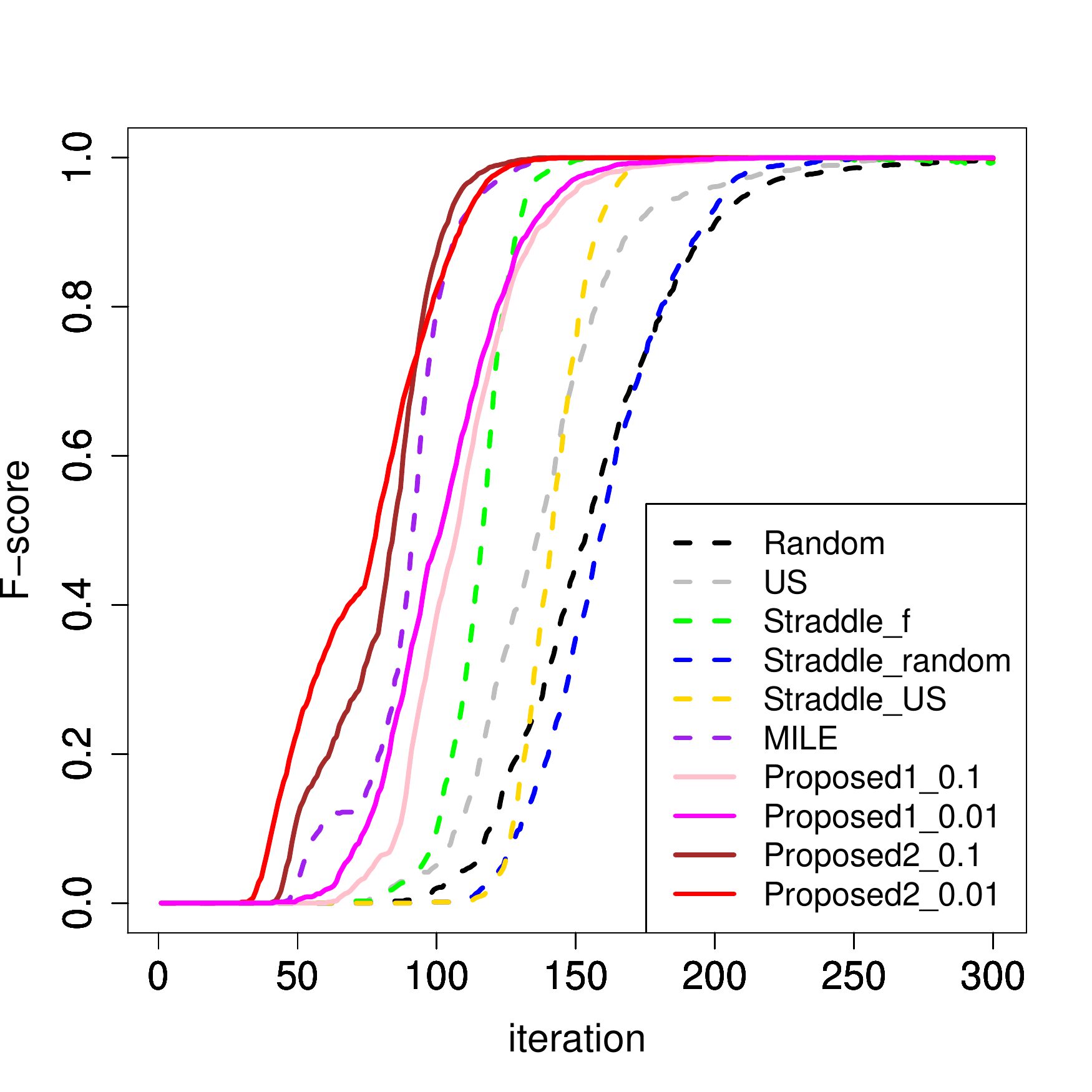} 
&
 \includegraphics[width=0.225\textwidth]{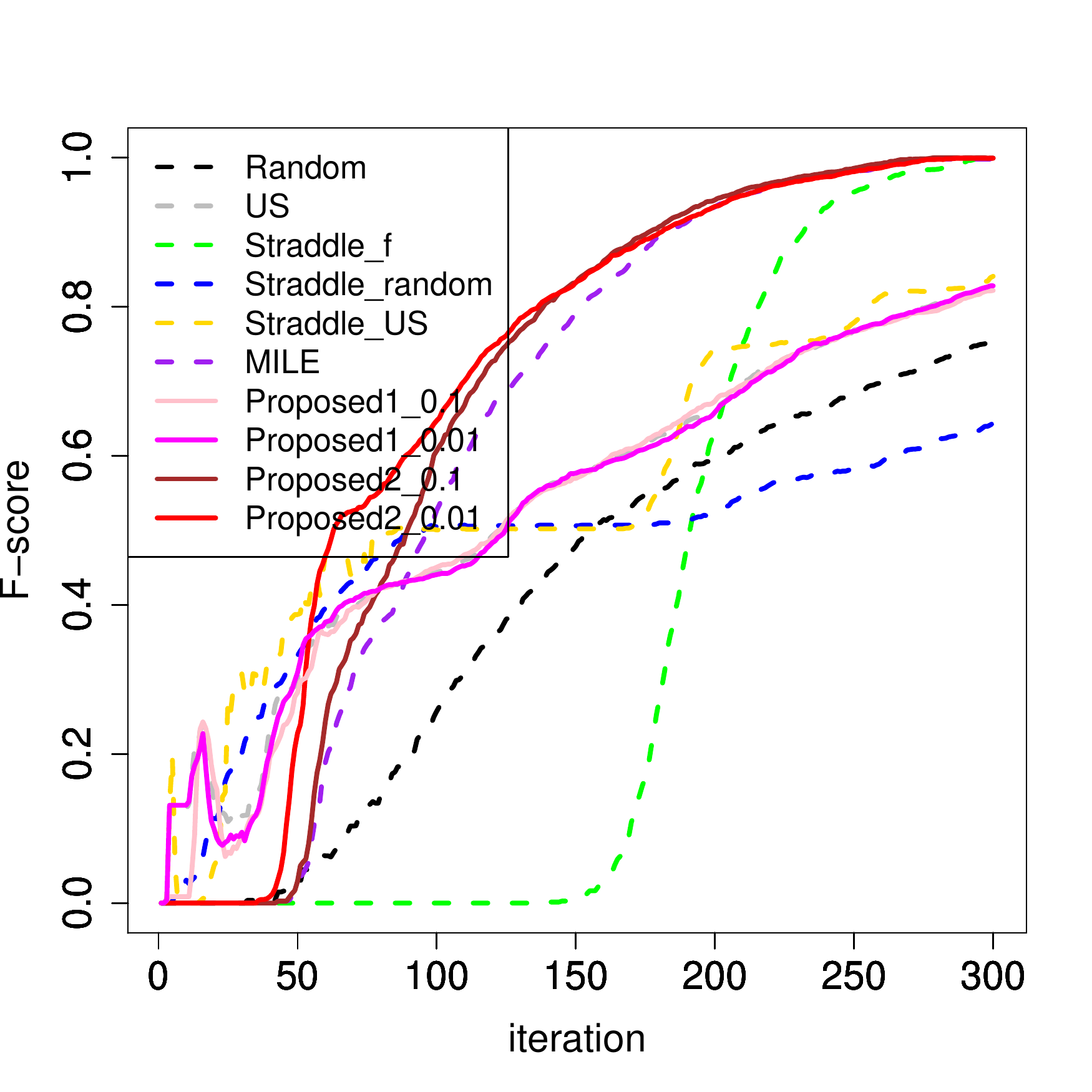} 
\\
Booth & Matyas &
McCormick & Styblinski-Tang
 \end{tabular}
\end{center}
 \caption{Average F-score over 50 simulations with four benchmark functions when the distance function and reference distribution are $L2$-norm and Uniform, respectively.}
\label{fig:exp3}
\end{figure}

\begin{figure}[!t]
\begin{center}
 \begin{tabular}{cccc}
\includegraphics[width=0.225\textwidth]{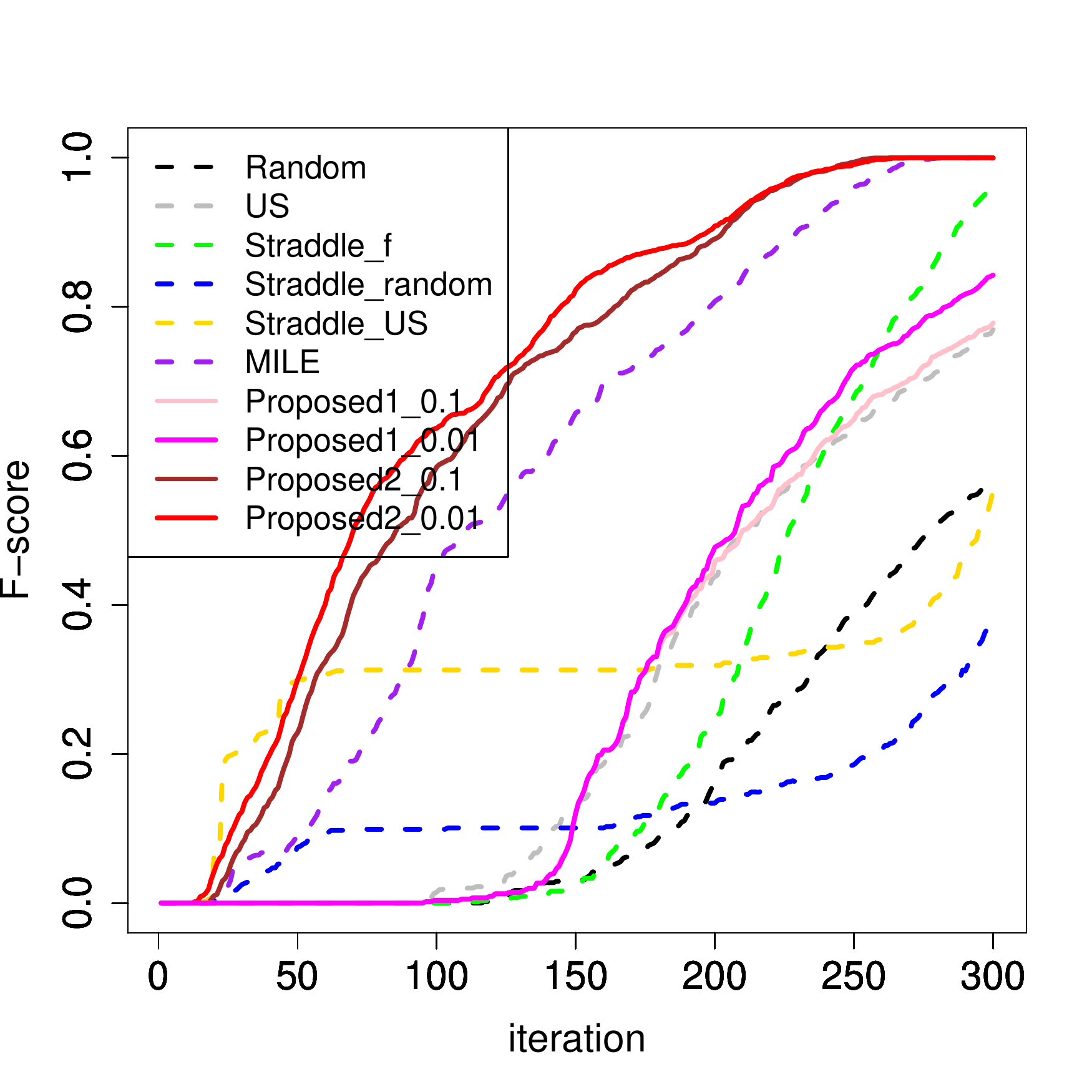} 
&
 \includegraphics[width=0.225\textwidth]{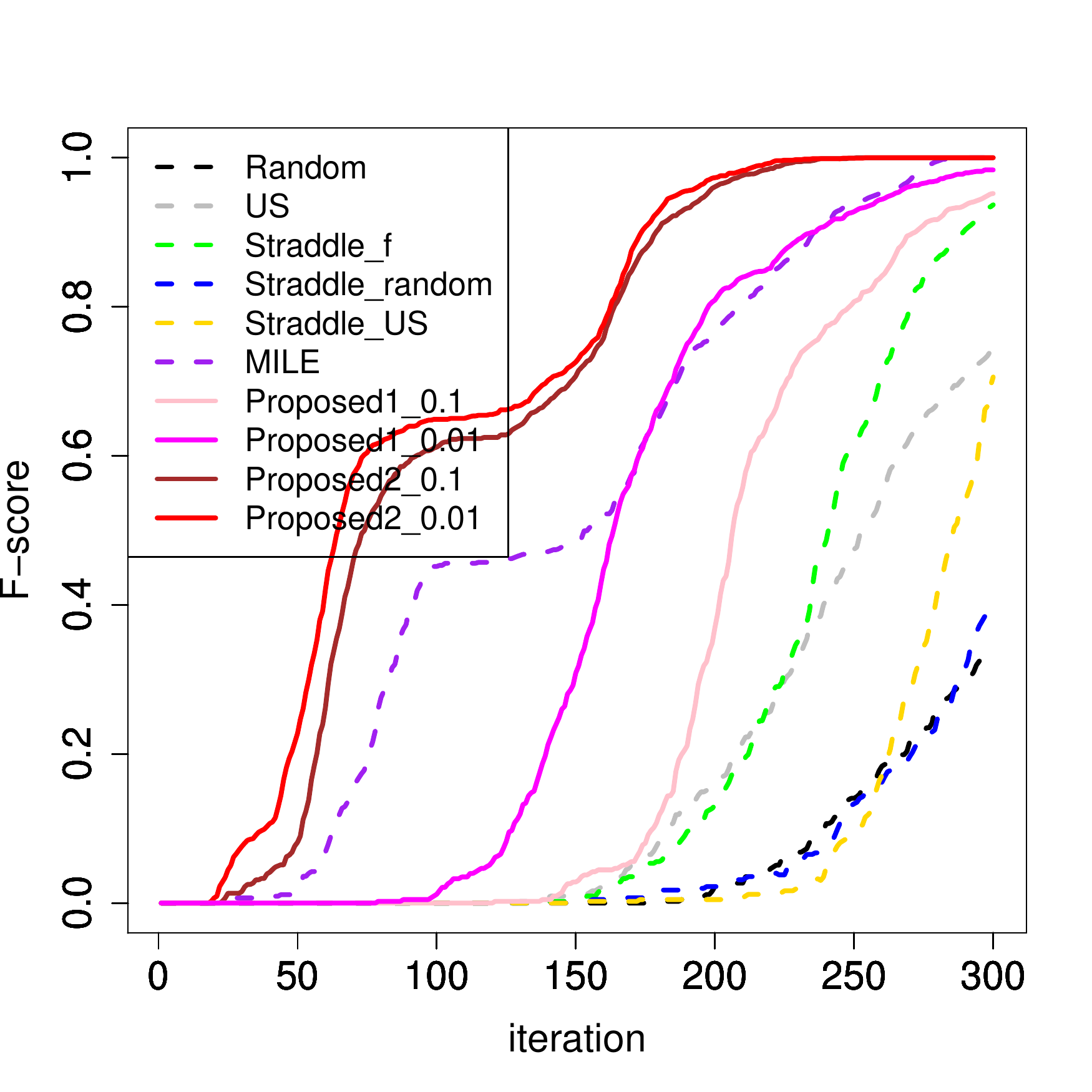} 
&
 \includegraphics[width=0.225\textwidth]{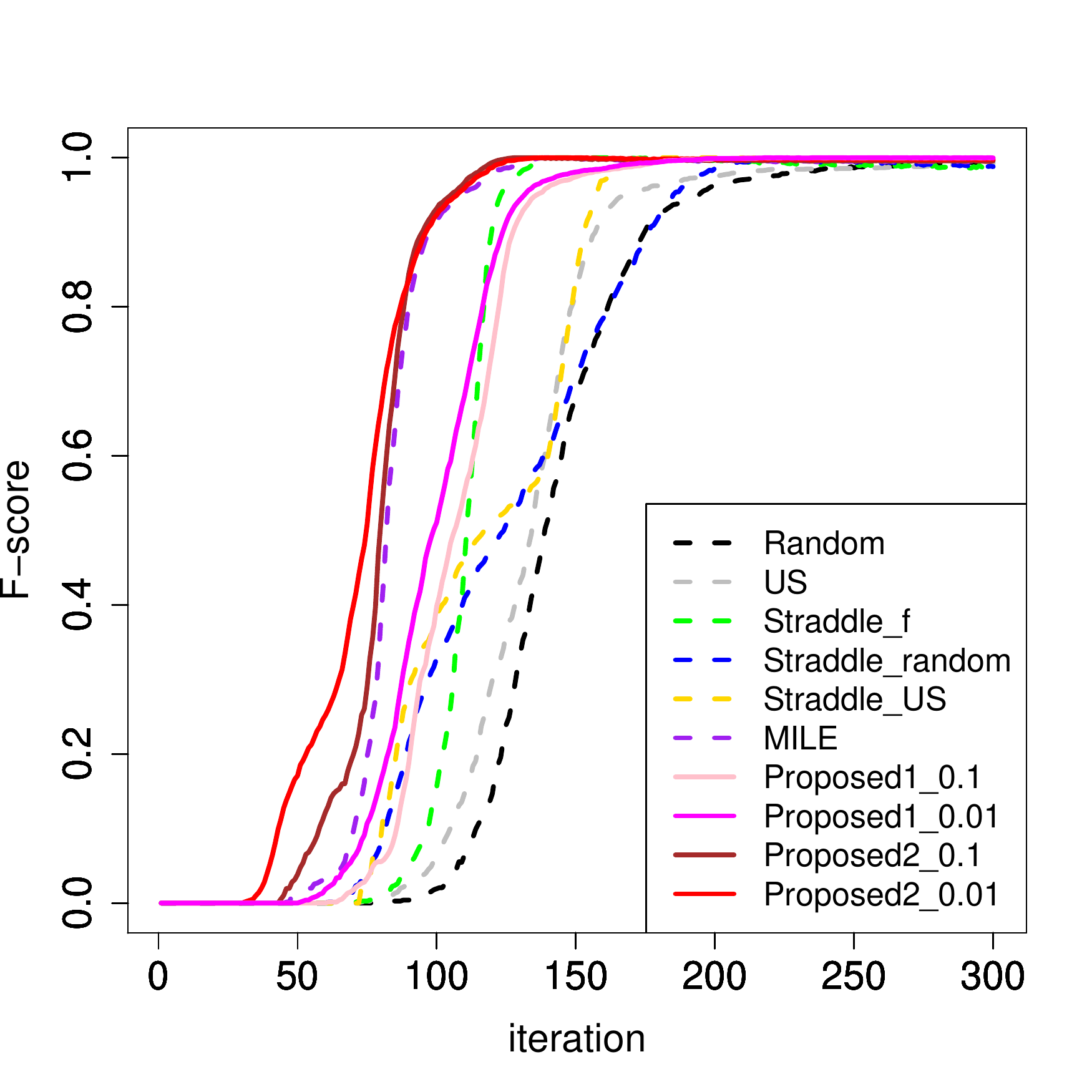} 
&
 \includegraphics[width=0.225\textwidth]{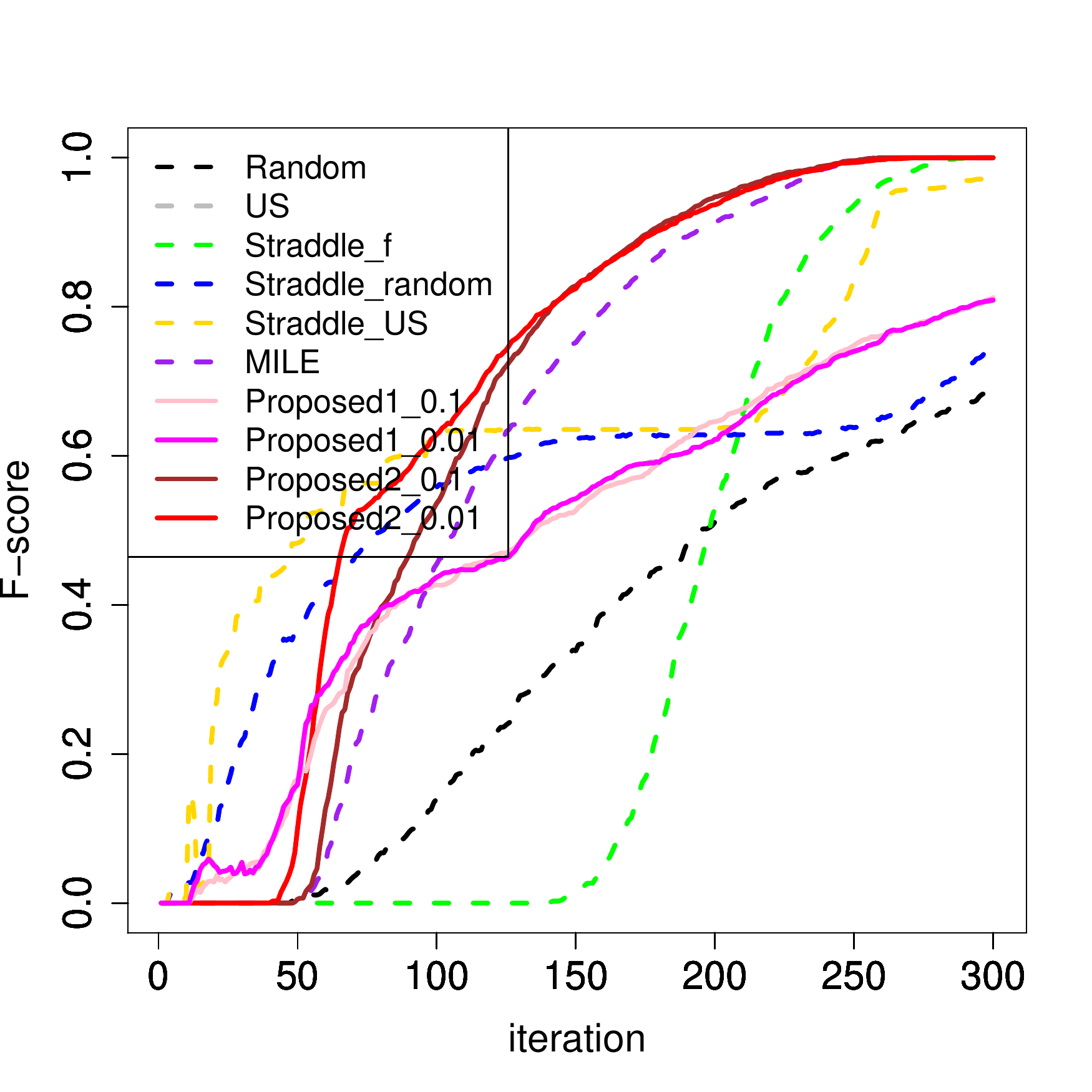} 
\\
Booth & Matyas &
 McCormick & Styblinski-Tang
 \end{tabular}
\end{center}
 \caption{Average F-score over 50 simulations with four benchmark functions when the distance function and reference distribution are $L2$-norm and Normal, respectively.}
\label{fig:exp4}
\end{figure}

\begin{figure}[!t]
\begin{center}
 \begin{tabular}{cc}
\includegraphics[width=0.45\textwidth]{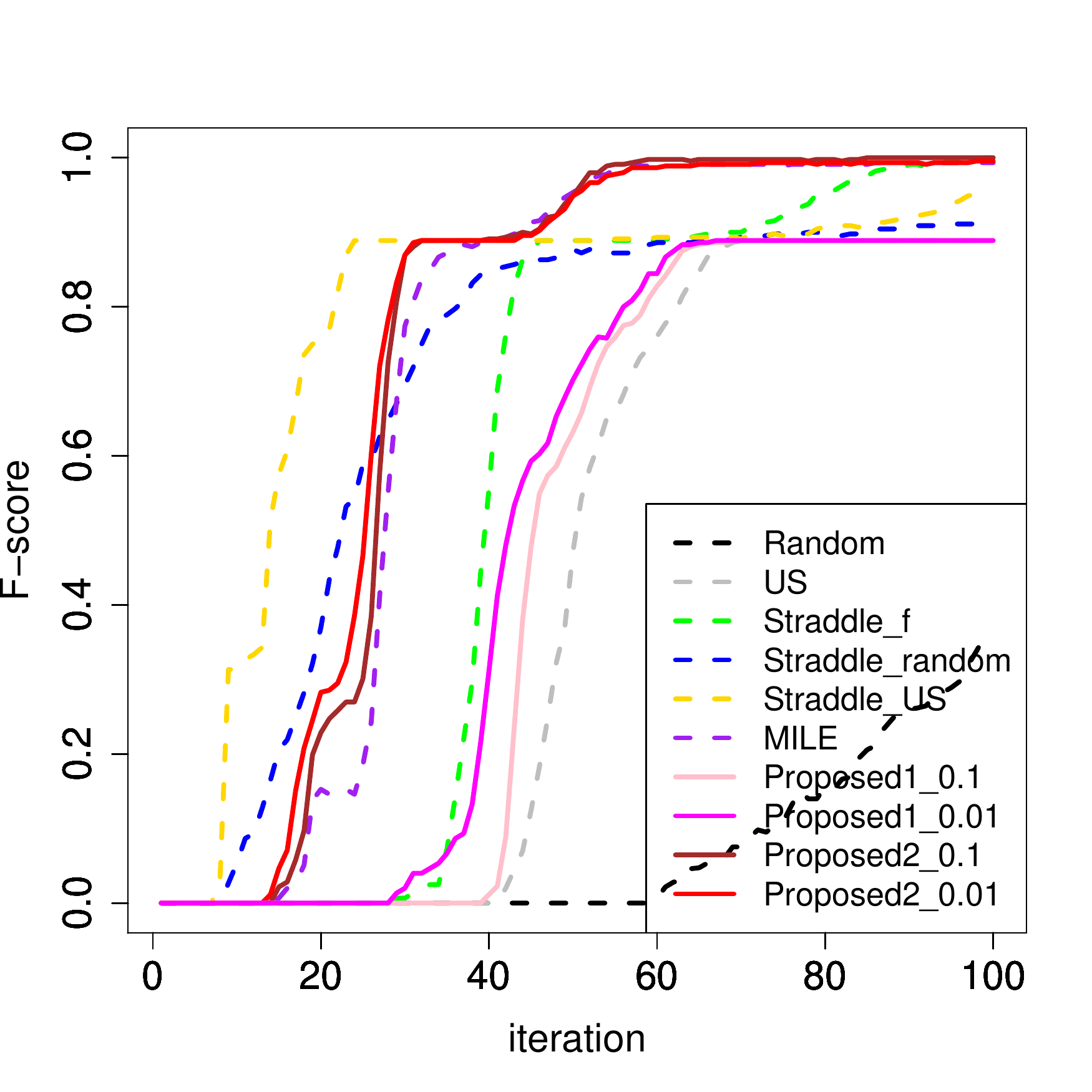} 
&
\includegraphics[width=0.45\textwidth]{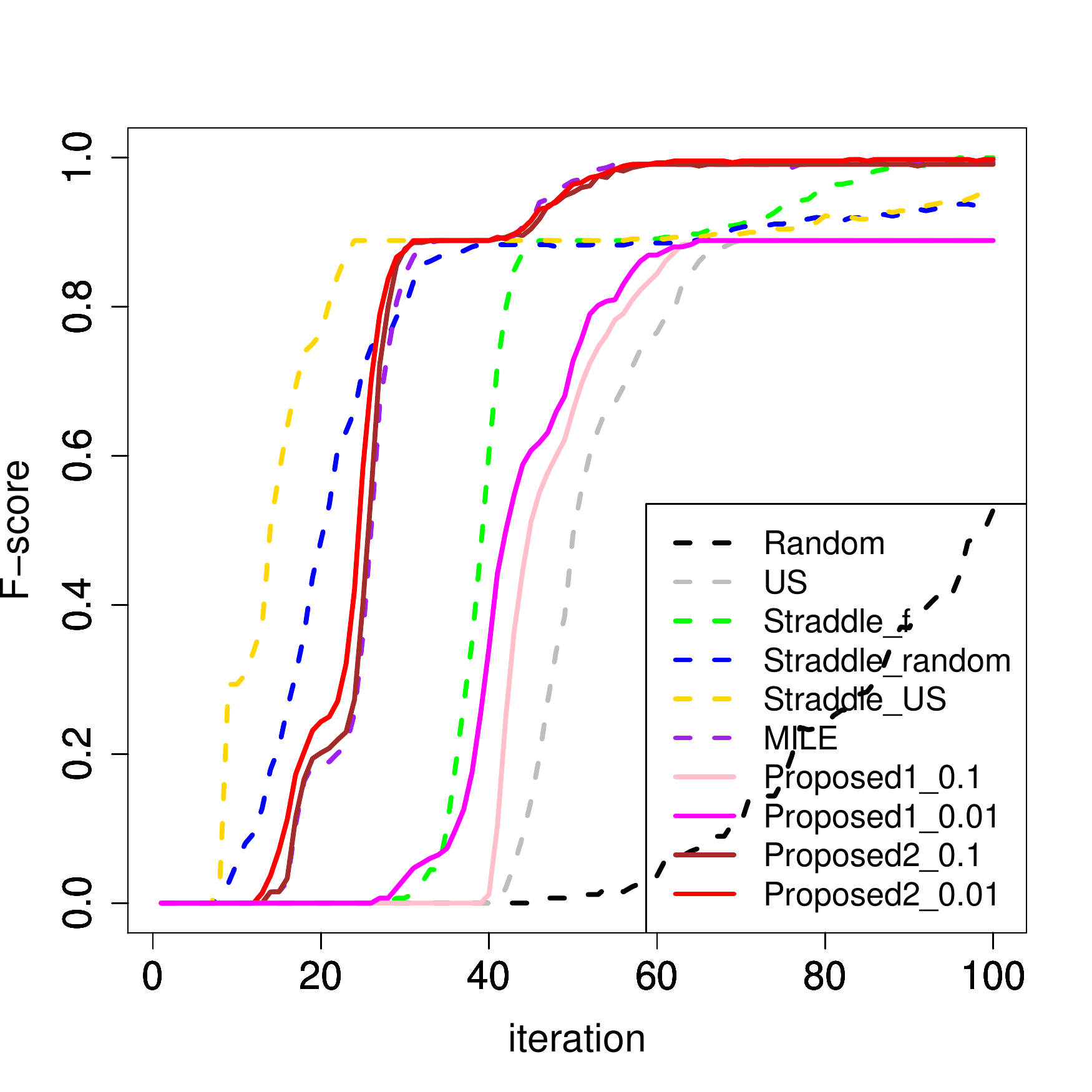} 
\\
   $L2$-Uniform & $L2$-Normal
 \end{tabular}
\end{center}
 \caption{Average F-score over 50 simulations in the infection control problem when the distance function is $L2$-norm.}
\label{fig:exp5_2}
\end{figure}

\subsection{Computation time experiments in the other benchmark function setting}
In this section, we performed the same experiment as in Subsection \ref{subsec:comp_time} for the Matyas, McCormick and Styblinski-Tang benchmark functions.
We evaluated the computation time of \eqref{eq:AFteigi} when we performed the same experiment as in Subsection \ref{subsec:comp_time} using Proposed1\_$0.01$ and Proposed2\_$0.01$.
Here, as for the parameter settings, we considered only the case of $L1$-Normal in Table \ref{tab:setting1}. 
Under this setup, we took one initial point at random and ran the algorithms until the number of iterations reached to 300. 
Furthermore, for each trial $t$, we evaluated the computation time to calculate \eqref{eq:AFteigi} for all candidate points $({\bm x}^\ast,{\bm w}^\ast ) \in \mathcal{X} \times \Omega $, and calculated the average computation time over 300 trials.
From Tables \ref{tab:time_Matyas}, \ref{tab:time_McCormick} and \ref{tab:time_ST}, it can be confirmed that the same results as in Subsection \ref{subsec:comp_time} are obtained in the three benchmark function settings.


\begin{table*}[t]
  \begin{center}
    \caption{Computation time (second) for the Matyas function setting}
\scalebox{0.85}{
    \begin{tabular}{c||c|c|c|c|c|c} \hline \hline
       & Naive & L1 & L2 & L3 $(10^{-4})$ & L3 $(10^{-8})$ & L3 $(10^{-12})$ \\ \hline 
   Proposed1\_$0.01$   &$112403.30 \pm 24588.33$  & $6211.88 \pm 1514.06$  &   $1297.19 \pm 726.31$    & $32.12 \pm 7.36$& $32.76 \pm 7.18$ & $33.25 \pm 7.06$   \\ \hline 
   Proposed2\_$0.01$   & $98478.43 \pm 19995.68$ & $5504.84 \pm 1362.62$  &   $1831.17 \pm 1109.59$    & $32.86 \pm 5.43$& $37.50 \pm 3.58$ & $38.24 \pm 4.90$   \\ \hline \hline
    \end{tabular}
}
    \label{tab:time_Matyas}
  \end{center}
\end{table*}

\begin{table*}[t]
  \begin{center}
    \caption{Computation time (second) for the McCormick function setting}
\scalebox{0.85}{
    \begin{tabular}{c||c|c|c|c|c|c} \hline \hline
       & Naive & L1 & L2 & L3 $(10^{-4})$ & L3 $(10^{-8})$ & L3 $(10^{-12})$ \\ \hline 
   Proposed1\_$0.01$   & $83608.24 \pm 39551.78$ & $4692.96 \pm 2274.72$  &   $1094.40 \pm 523.81$    & $39.66 \pm 6.27$& $41.25 \pm 6.20$ & $42.74 \pm 6.86$   \\ \hline 
   Proposed2\_$0.01$   & $79782.95 \pm 39221.70$ & $4383.04 \pm 2286.23$  &   $1525.80 \pm 931.80$    & $49.67 \pm 10.33$& $56.79 \pm 17.54$ & $62.59 \pm 23.83$   \\ \hline \hline
    \end{tabular}
}
    \label{tab:time_McCormick}
  \end{center}
\end{table*}

\begin{table*}[t]
  \begin{center}
    \caption{Computation time (second) for the Styblinski-Tang function setting}
\scalebox{0.85}{
    \begin{tabular}{c||c|c|c|c|c|c} \hline \hline
       & Naive & L1 & L2 & L3 $(10^{-4})$ & L3 $(10^{-8})$ & L3 $(10^{-12})$ \\ \hline 
   Proposed1\_$0.01$   &$118443.10 \pm 16290.13$  & $6297.18 \pm 1039.76$  &   $900.67 \pm 698.84$    & $44.88 \pm 18.66$& $47.32 \pm 20.67$ & $48.35 \pm 21.82$   \\ \hline 
   Proposed2\_$0.01$   &$96731.93 \pm 25845.16$  & $5240.58 \pm 1516.16$  &   $686.64 \pm 796.10$    & $26.77 \pm 10.92$& $27.42 \pm 11.87$ & $28.25 \pm 12.78$   \\ \hline \hline
    \end{tabular}
}
    \label{tab:time_ST}
  \end{center}
\end{table*}

\subsection{Hyperparameter sensitivity in the proposed acquisition function}
In this section, we evaluated how the performance is affected by the hyperparameter $\gamma$ in the proposed acquisition function.
We calculated the F-score for the cases with acquisition functions Proposed1\_$\gamma$ and Proposed2\_$\gamma$ when we performed the same experiment as in Subsection \ref{syn_experiment} for Booth, Matyas, McCormick and Styblinski-Tang functions.
Here, Proposed1\_$\gamma$ and Proposed2\_$\gamma$ respectively represent the acquisition functions $a^{(1)}_t ({\bm x}^\ast,{\bm w}^\ast )$ and $a^{(2)}_t ({\bm x}^\ast,{\bm w}^\ast )$ with the parameter $\gamma$, and we considered $\gamma$ as 
$0$, $10^{-0.5}$, $10^{-1}$, $10^{-2}$, $10^{-3}$ and $10^{-4}$. 
In this experiment, as for the parameter settings, we considered only the case of $L1$-Uniform in Table \ref{tab:setting1}. 
Under this setup, we took one initial point at random and ran the algorithms until the number of iterations reached 300 (or 200).
We performed 50 Monte Carlo simulations and calculated the average F-score.

\begin{figure*}[t]
\begin{center}
 \begin{tabular}{cccc}
 \includegraphics[width=0.225\textwidth]{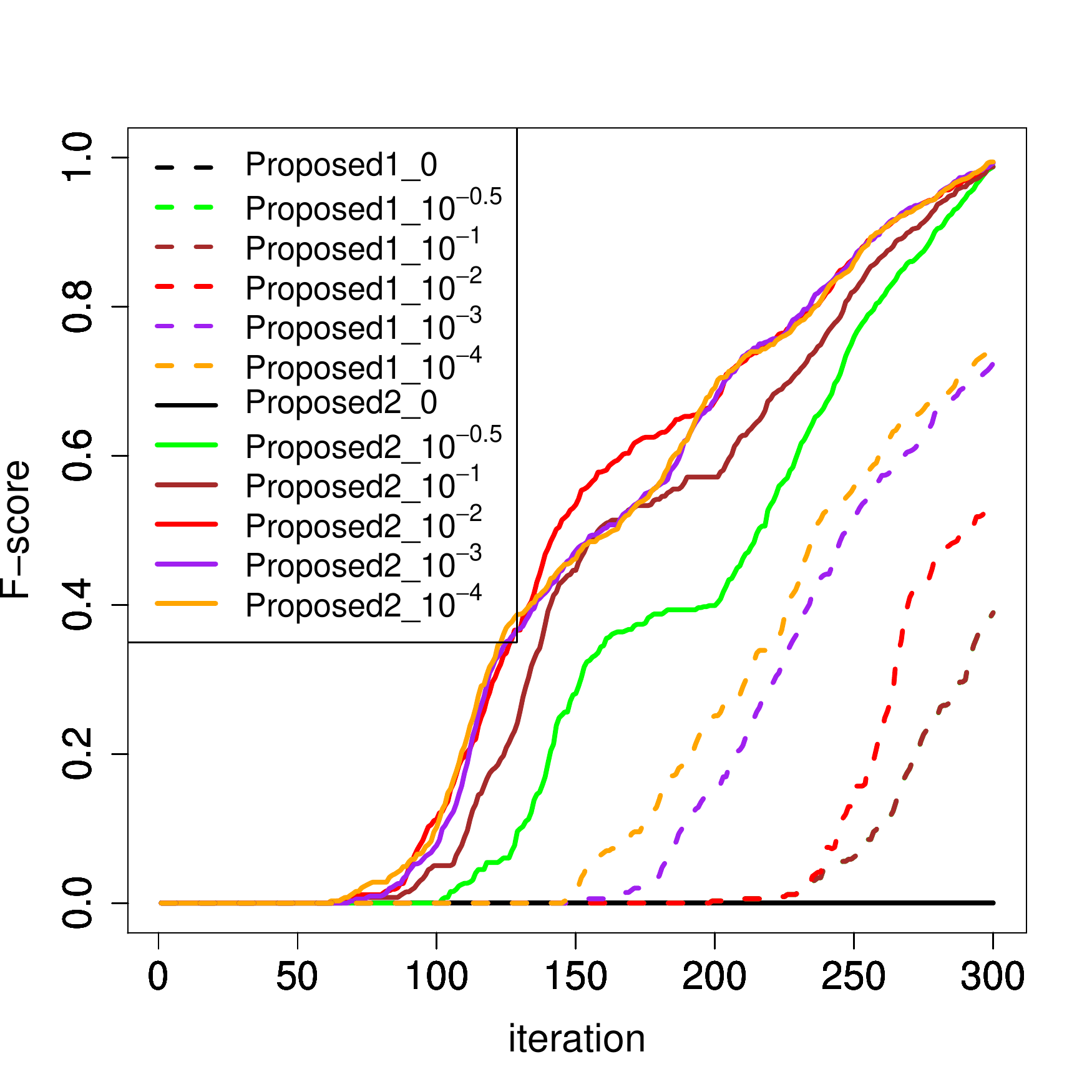} 
&
 \includegraphics[width=0.225\textwidth]{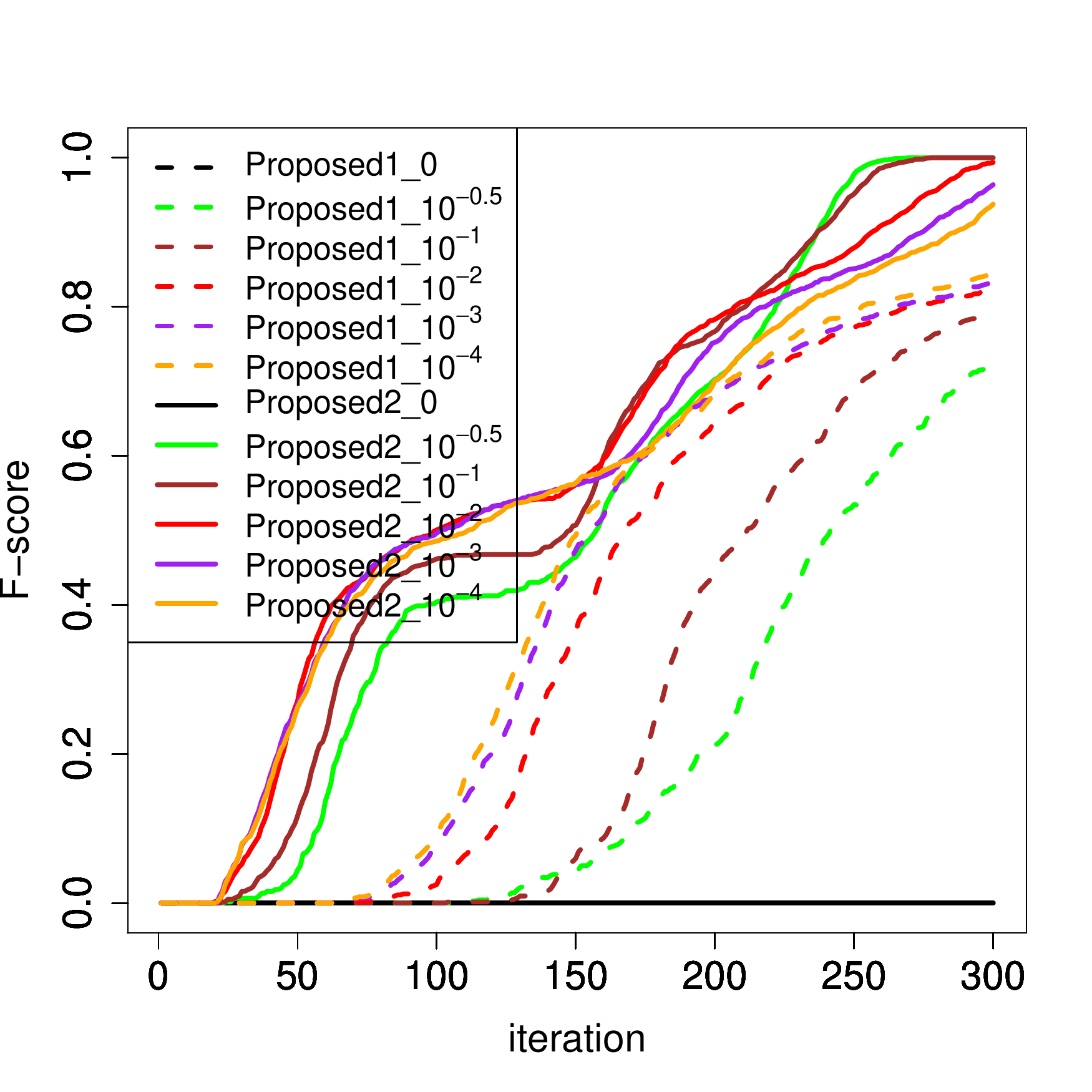} 
&
 \includegraphics[width=0.225\textwidth]{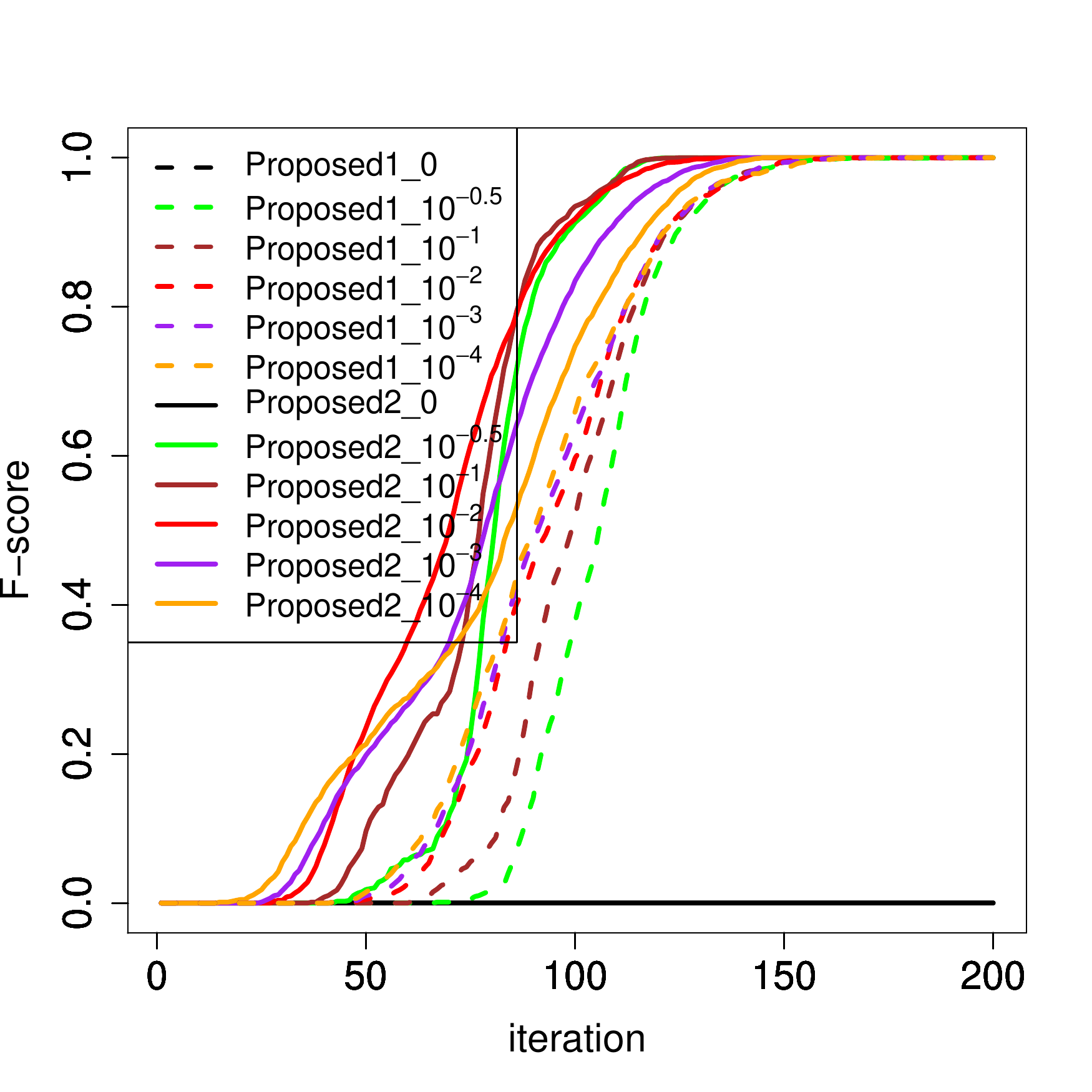} 
&
 \includegraphics[width=0.225\textwidth]{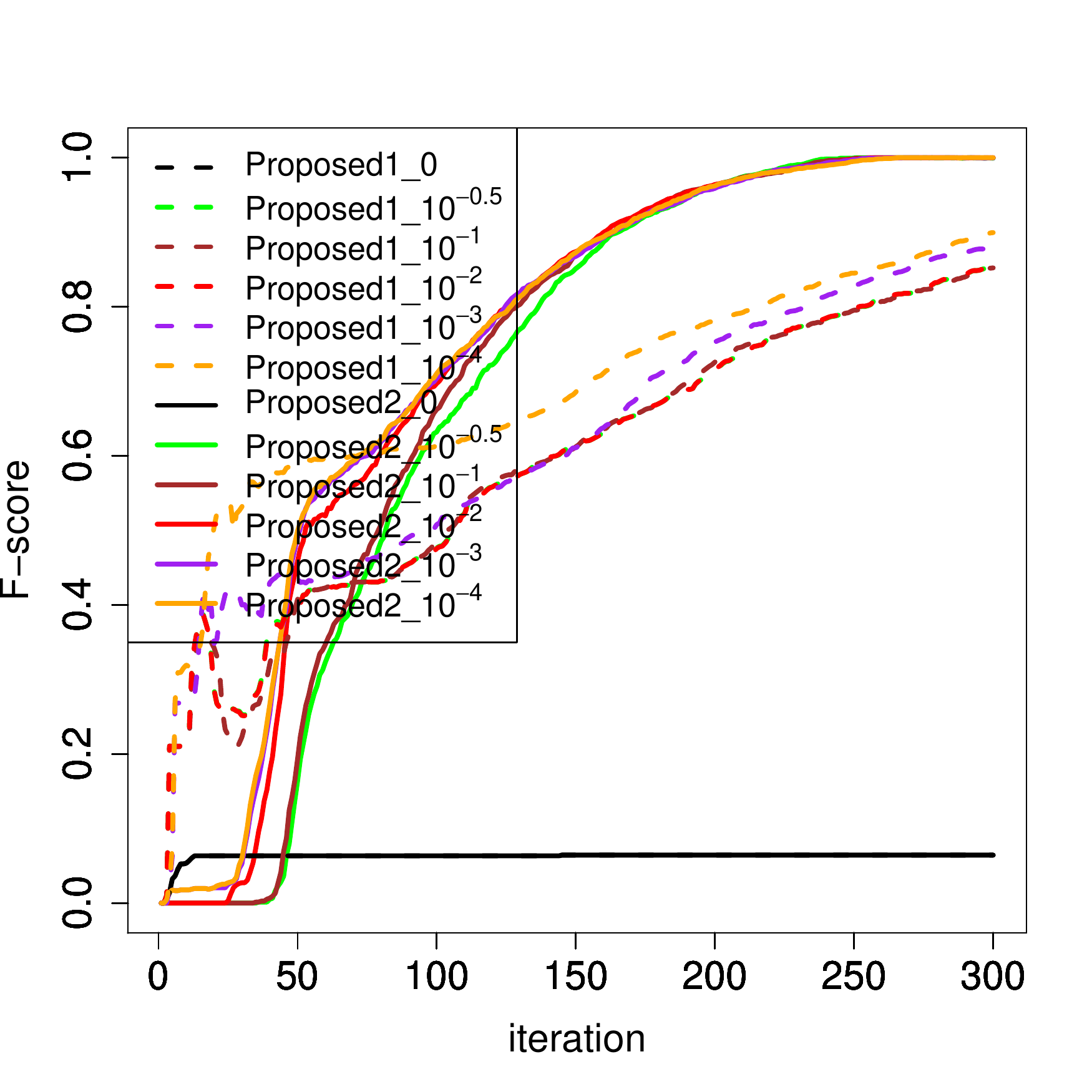} 
\\
Booth & Matyas & McCormick & Styblinski-Tang
 \end{tabular}
\end{center}
 \caption{Difference in average F-score for different hyperparameters with four benchmark functions when the distance function and reference distribution are $L1$-norm and Uniform, respectively.}
\label{fig:hyper_sensitivity}
\end{figure*}

From Figure \ref{fig:hyper_sensitivity}, it can be confirmed that the acquisition function does not work for all benchmark functions when $\gamma=0$.
The reason is that $a_t ({\bm x}^\ast,{\bm w}^\ast )$ was zero for all $({\bm x}^\ast, {\bm w}^\ast ) \in \mathcal{X} \times \Omega$ when the number of data was small.
Furthermore, when $\gamma >0$, it can be seen that the performance of Proposed1\_$\gamma$ decreases as $\gamma$ increases. 
One reason is that although $a^{(1)}_t ({\bm x}^\ast,{\bm w}^\ast )$ is closer to uncertainty sampling (US) as $\gamma$ becomes large, US is not the acquisition function for efficiently estimating $H_t$.
On the other hand, it can be confirmed that the performance of Proposed2\_$\gamma$ is not necessarily better when $\gamma$ is smaller. 
From the definition of Proposed2\_$\gamma$, when $\gamma$ is large, $a^{(2)}_t ({\bm x}^\ast,{\bm w}^\ast )$ behaves similarly to RMILE.
 RMILE is the acquisition function that works to efficiently identify $({\bm x} ,{\bm w} ) $ that satisfies $f({\bm x} ,{\bm w} ) >h $.
However, since $F({\bm x} )$ is given as the function of $\1[f({\bm x} ,{\bm w} ) >h]$, as a result, RMILE also works to efficiently estimate $H_t$.
This is one of the reasons why Proposed2\_$\gamma$ sometimes has good performance even at large $\gamma$.

  \end{document}